%% file: mt4gfm.tex
\documentclass{article}

\usepackage{microtype}
\usepackage{graphicx}
\usepackage{subcaption}
\usepackage{booktabs} % for professional tables

\usepackage{hyperref}

\usepackage[accepted]{icml2026}

\usepackage{amsmath}
\usepackage{amssymb}
\usepackage{mathtools}
\usepackage{amsthm}

\usepackage[capitalize,noabbrev]{cleveref}

\theoremstyle{plain}
\newtheorem{theorem}{Theorem}[section]
\newtheorem{proposition}[theorem]{Proposition}
\newtheorem{lemma}[theorem]{Lemma}
\newtheorem{corollary}[theorem]{Corollary}
\theoremstyle{definition}
\newtheorem{definition}[theorem]{Definition}

\theoremstyle{remark}
\newtheorem{remark}[theorem]{Remark}

\input{math_commands.tex}

\usepackage{url}

\usepackage[utf8]{inputenc} % allow utf-8 input
\usepackage[T1]{fontenc}    % use 8-bit T1 fonts
\usepackage{booktabs}       % professional-quality tables
\usepackage{amsfonts}       % blackboard math symbols
\usepackage{nicefrac}       % compact symbols for 1/2, etc.
\usepackage{microtype}      % microtypography
\usepackage{xcolor}         % colors

\definecolor{color1}{RGB}{255, 10, 119}
\definecolor{color2}{RGB}{87, 51, 255}
\definecolor{color3}{RGB}{0, 138, 255}
\definecolor{color4}{RGB}{128, 0, 255}
\definecolor{color5}{RGB}{245, 190, 190}

\definecolor{color01}{RGB}{211, 204, 241}
\definecolor{color02}{RGB}{227, 242, 253}
\definecolor{color03}{RGB}{210, 210, 210}
\definecolor{color04}{RGB}{133, 135, 217}

\usepackage{float}
\usepackage{arydshln}
\usepackage{adjustbox}
\usepackage{times}
\usepackage{soul}
\usepackage[utf8]{inputenc}
\usepackage{algorithm}
\usepackage{algorithmic}

\usepackage{longtable}
\usepackage{multirow}
\usepackage{enumitem}
\usepackage{color}
\usepackage{wrapfig}

\usepackage[textsize=tiny]{todonotes}
\usepackage{subcaption}
\usepackage{tikz}
\usepackage{pgfplots}
\pgfplotsset{compat = 1.3}
\usepackage{colortbl}
\usepackage{xcolor}
\usepackage{tcolorbox}
\usepackage{listings}
\usepackage[textsize=tiny]{todonotes}

\icmltitlerunning{Message Tuning Outshines Graph Prompt Tuning: A Prismatic Space Perspective}

\begin{document}

\twocolumn[
  % \icmltitle{With Great Power Comes Great Adaptation: \\ Message Tuning Outshines Prompt Tuning for Graph Foundation Models}
  \icmltitle{Message Tuning Outshines Graph Prompt Tuning: \\ A Prismatic Space Perspective}

  % It is OKAY to include author information, even for blind submissions: the
  % style file will automatically remove it for you unless you've provided
  % the [accepted] option to the icml2026 package.

  % List of affiliations: The first argument should be a (short) identifier you
  % will use later to specify author affiliations Academic affiliations
  % should list Department, University, City, Region, Country Industry
  % affiliations should list Company, City, Region, Country

  % You can specify symbols, otherwise they are numbered in order. Ideally, you
  % should not use this facility. Affiliations will be numbered in order of
  % appearance and this is the preferred way.
  \icmlsetsymbol{equal}{*}

  \begin{icmlauthorlist}
    \icmlauthor{Yancheng Chen}{o1,o2}
    \icmlauthor{Dun Ma}{o1,o2}
    \icmlauthor{Shuai Zhang}{o1}
    \icmlauthor{Yang Liu}{o1}
    \icmlauthor{Xixun Lin}{o3}
    \icmlauthor{Xiangyu Zhao}{o4}
    \icmlauthor{Wenguo Yang}{o5}
    \icmlauthor{Wei Chen}{o6}
    \icmlauthor{Chuan Zhou}{o1,o7}
  \end{icmlauthorlist}

  \icmlaffiliation{o1}{Academy of Mathematics and Systems Science, Chinese Academy of Sciences}
  \icmlaffiliation{o2}{School of Advanced Interdisciplinary Sciences, University of Chinese Academy of Sciences}
  \icmlaffiliation{o3}{Institute of Information Engineering, Chinese Academy of Sciences}
  \icmlaffiliation{o4}{City University of Hong Kong}
  \icmlaffiliation{o5}{School of Mathematical Sciences, University of Chinese Academy of Sciences}
  \icmlaffiliation{o6}{Institute of Computing Technology, Chinese Academy of Sciences}
  \icmlaffiliation{o7}{School of Cyber Security, University of Chinese Academy of Sciences}

  \icmlcorrespondingauthor{Chuan Zhou}{zhouchuan@amss.ac.cn}

  % You may provide any keywords that you find helpful for describing your
  % paper; these are used to populate the "keywords" metadata in the PDF but
  % will not be shown in the document
  \icmlkeywords{Graph Foundation Models, Adaptation Capacity, Prismatic Space Theory, Message Tuning, Graph Prompt Tuning}

  \vskip 0.3in
]

% this must go after the closing bracket ] following \twocolumn[ ...

% This command actually creates the footnote in the first column listing the
% affiliations and the copyright notice. The command takes one argument, which
% is text to display at the start of the footnote. The \icmlEqualContribution
% command is standard text for equal contribution. Remove it (just {}) if you
% do not need this facility.

% Use ONE of the following lines. DO NOT remove the command.
% If you have no special notice, KEEP empty braces:
\printAffiliationsAndNotice{}  % no special notice (required even if empty)
% Or, if applicable, use the standard equal contribution text:
% \printAffiliationsAndNotice{\icmlEqualContribution}

\begin{abstract}
Graph Foundation Models (GFMs), built upon the {\em Pre-training and Adaptation} paradigm, have emerged as a research hotspot in graph learning. For GNN-based GFMs, graph prompt tuning has become the prevailing adaptation method for downstream tasks. Although recent methods explain why graph prompt tuning works, how to rigorously measure its adaptation capacity remains an open problem. Addressing this problem is critical for understanding the capability limits of graph prompt tuning and for developing more powerful adaptation methods. In this paper, we propose \textbf{P}rismatic \textbf{S}pace Theory (PS-Theory), a novel mathematical framework to quantify the capacity of adaptation methods, while focusing on establishing the upper bound for the adaptation capacity of graph prompt tuning. Building upon the proposed PS-Theory, we further introduce \textbf{M}essage \textbf{T}uning for \textbf{G}FMs (MTG), a lightweight approach that injects a small set of learnable message prototypes into each layer of the GNN backbone to adaptively guide message fusion without updating pre-trained weights. Through our PS-Theory, we prove that the adaptation capacity of MTG can exceed the theoretical upper bound of graph prompt tuning. Extensive experiments demonstrate that MTG consistently outperforms graph prompt baselines across diverse benchmark datasets, providing strong empirical support for our theoretical findings.
\end{abstract}

\input{section/1_intro}
\input{section/2_problem}
\input{section/3_theory}
\input{section/4_method}
\input{section/5_experiment}
\input{section/6_conclusion}

\section*{Limitations}
\label{sec:limitations}

\paragraph{Prismatic Space Theory.}
Our theoretical framework relies on several assumptions that delineate its scope. 
First, PS-Theory models the input data as a compact and smooth manifold $\mathcal{M}_0$ (Definition~\ref{def:representation_manifold}), and characterizes each GFM layer as a piecewise linear map. 
This characterization is exact for backbones equipped with piecewise linear activations such as ReLU or LeakyReLU. 
For backbones with smooth non-linearities (e.g., GELU, $\tanh$), and for the Softmax fusion used in Theorem~\ref{thm:message_adaptation_capacity}, the framework relies on linear approximation and should be interpreted as approximate. 
Second, the prismatic space construction in Proposition~\ref{pro:representation_manifold} and the measure formula in Theorem~\ref{thm:measure_of_final_prismatic_manifold} assume injectivity on each linear region.
Third, the intrinsic dimension bound in Theorem~\ref{thm:prismatic_folding_and_intrinsic_dimension} is analytical and not directly tractable to compute numerically. 
Finally, PS-Theory quantifies the geometric adaptation capacity of an adaptation method, namely the dimension, measure, and diameter of the reachable output space. 
Translating this capacity into precise generalization or optimization guarantees on specific downstream tasks is left to future work.

\paragraph{Message Tuning.}
MTG is designed for GNN-based GFMs that admit the unified layer formulation in Definition~\ref{gfm_layer}, including representative MPNN and Graph Transformer backbones. 
As a result, MTG requires white-box, layer-wise access to the pre-trained backbone, and is not directly applicable to GFM paradigms whose backbones do not expose internal message-passing layers, such as black-box LLM-based or API-only graph foundation models. 
Furthermore, the prototype number $m$ and the lightweight linear fusion operator $\mathfrak{F}^{(\ell)}$ are deliberately kept simple for efficiency. 
More expressive fusion designs (e.g., MLP-based or attention-based variants) may yield further gains, but at the cost of the parameter and runtime efficiency that motivate MTG. 
Lastly, although Theorem~\ref{thm:message_adaptation_capacity} establishes that MTG strictly exceeds the adaptation capacity of graph prompt tuning, an enlarged capacity does not automatically guarantee monotonic improvements in every extreme low-resource regime. 
We leave a tighter analysis of the capacity and generalization trade-off under few-shot supervision as a promising direction for future work.

\paragraph{Experiments.}
Empirically, we validate MTG on the ProG benchmark~\citep{zi2024prog} under the standard few-shot protocol, reporting 1/3/5-shot performance for both node classification and graph classification over 15 datasets, with multiple GNN backbones and representative self-supervised pre-training strategies (Section~\ref{sec5}). 
Generalization of our findings to other task families (e.g., link-level prediction, regression, or richly supervised fine-tuning pipelines) and to encoders outside the ProG-style GFM setting is not established by this study. 
Our evaluation moreover assumes benign training and inference conditions and does not probe adversarial and backdoor robustness across node-level and graph-level learning pipelines~\citep{lin2023exploratory,jin2025losplit,wang2025stealthy}. Likewise, we do not address uncertainty quantification~\citep{lin2024gnsd}, graph-level out-of-distribution detection~\citep{lin2025cgod}, or multi-task causality modeling~\citep{lin2026gcnet}. 
These remain important directions for extending PS-Theory and MTG.

% % Acknowledgements should only appear in the accepted version.
\section*{Acknowledgments}
The authors would like to thank the anonymous reviewers and area chairs for their valuable comments and suggestions.
This work was partially supported by the Strategic Priority Research Program of the Chinese Academy of Sciences (No. XDB0680101), the National Natural Science Foundation of China (No. 62472416 and 62402491), and the CAS Project for Young Scientists in Basic Research (No. YSBR-008). 
The model training was performed on the robotic AI-Scientist platform of Chinese Academy of Science.

% \textbf{Do not} include acknowledgements in the initial version of the paper
% submitted for blind review.

% If a paper is accepted, the final camera-ready version can (and usually should)
% include acknowledgements.  Such acknowledgements should be placed at the end of
% the section, in an unnumbered section that does not count towards the paper
% page limit. Typically, this will include thanks to reviewers who gave useful
% comments, to colleagues who contributed to the ideas, and to funding agencies
% and corporate sponsors that provided financial support.

\section*{Impact Statement}

% Authors are \textbf{required} to include a statement of the potential broader
% impact of their work, including its ethical aspects and future societal
% consequences. This statement should be in an unnumbered section at the end of
% the paper (co-located with Acknowledgements -- the two may appear in either
% order, but both must be before References), and does not count toward the paper
% page limit. In many cases, where the ethical impacts and expected societal
% implications are those that are well established when advancing the field of
% Machine Learning, substantial discussion is not required, and a simple
% statement such as the following will suffice:

This paper presents work whose goal is to advance the field of Machine
Learning. There are many potential societal consequences of our work, none
of which we feel must be specifically highlighted here.

% The above statement can be used verbatim in such cases, but we encourage
% authors to think about whether there is content which does warrant further
% discussion, as this statement will be apparent if the paper is later flagged
% for ethics review.

% % In the unusual situation where you want a paper to appear in the
% % references without citing it in the main text, use \nocite
% \nocite{langley00}

\bibliography{mt4gfm}
\bibliographystyle{icml2026}

\input{section/appendix}

\end{document}

%% file: math_commands.tex
\usepackage{amsmath,amsfonts,bm}

\def\eqref#1{equation~\ref{#1}}
\def\1{\bm{1}}

\def\va{{\bm{a}}}
\def\vb{{\bm{b}}}
\def\vc{{\bm{c}}}

\def\vh{{\bm{h}}}

\def\vm{{\bm{m}}}

\def\vp{{\bm{p}}}

\def\vu{{\bm{u}}}
\def\vv{{\bm{v}}}
\def\vw{{\bm{w}}}
\def\vx{{\bm{x}}}

\def\mA{{\bm{A}}}
\def\mB{{\bm{B}}}

\def\mD{{\bm{D}}}

\def\mH{{\bm{H}}}
\def\mI{{\bm{I}}}
\def\mJ{{\bm{J}}}
\def\mK{{\bm{K}}}

\def\mM{{\bm{M}}}

\def\mO{{\bm{O}}}
\def\mP{{\bm{P}}}
\def\mQ{{\bm{Q}}}

\def\mS{{\bm{S}}}

\def\mU{{\bm{U}}}
\def\mV{{\bm{V}}}
\def\mW{{\bm{W}}}
\def\mX{{\bm{X}}}
\def\mY{{\bm{Y}}}

\DeclareMathAlphabet{\mathsfit}{\encodingdefault}{\sfdefault}{m}{sl}
\SetMathAlphabet{\mathsfit}{bold}{\encodingdefault}{\sfdefault}{bx}{n}

\def\gE{{\mathcal{E}}}

\def\gG{{\mathcal{G}}}

\def\gV{{\mathcal{V}}}

\def\sG{{\mathbb{G}}}
\def\sH{{\mathbb{H}}}

\def\sS{{\mathbb{S}}}

\def\sU{{\mathbb{U}}}
\def\sV{{\mathbb{V}}}
\def\sW{{\mathbb{W}}}

\newcommand{\R}{\mathbb{R}}

%% file: section/1_intro.tex
\section{Introduction} \label{sec1}
% \vspace{-0.05 in}

Graph Foundation Models (GFMs) \citep{liu2025gfms, wang2025gfms}, built upon the {\em Pre-training and Adaptation} paradigm, aim to leverage the large-scale pre-training of broad graph data to support effective adaptation across a wide range of downstream graph tasks.
As an important category of GFMs, GNN-based GFMs represent a promising direction by leveraging self-supervised pre-training to acquire transferable knowledge through Graph Neural Network (GNN) backbone architectures \citep{wang2024gft, chen25autogfm}, including Message Passing Neural Networks (MPNNs) \citep{gilmer2017neural} and Graph Transformers (GTs) \citep{ying2021transformers}.
For pre-trained GNN-based GFMs, fine-tuning \citep{hu2020gptgnn, qiu2020gcc, rong2020ssgt} is the most intuitive and widely adopted method for downstream task adaptation.
However, as illustrated in Figure~\ref{fig:comparison}(a), fine-tuning\footnote{Within the scope of this paper, fine-tuning for GNN-based GFMs denotes full-parameter fine-tuning.} involves updating all model parameters, requiring a full model copy per task while demanding substantial computational resources.
Furthermore, the pretext-downstream graph task gap poses a significant challenge for fine-tuning, potentially causing negative transfer in few-shot scenarios \citep{zhang2022fewshot, lin2024sapnp}.

\begin{figure}[t]
\centering
\includegraphics[width=1.0\linewidth]{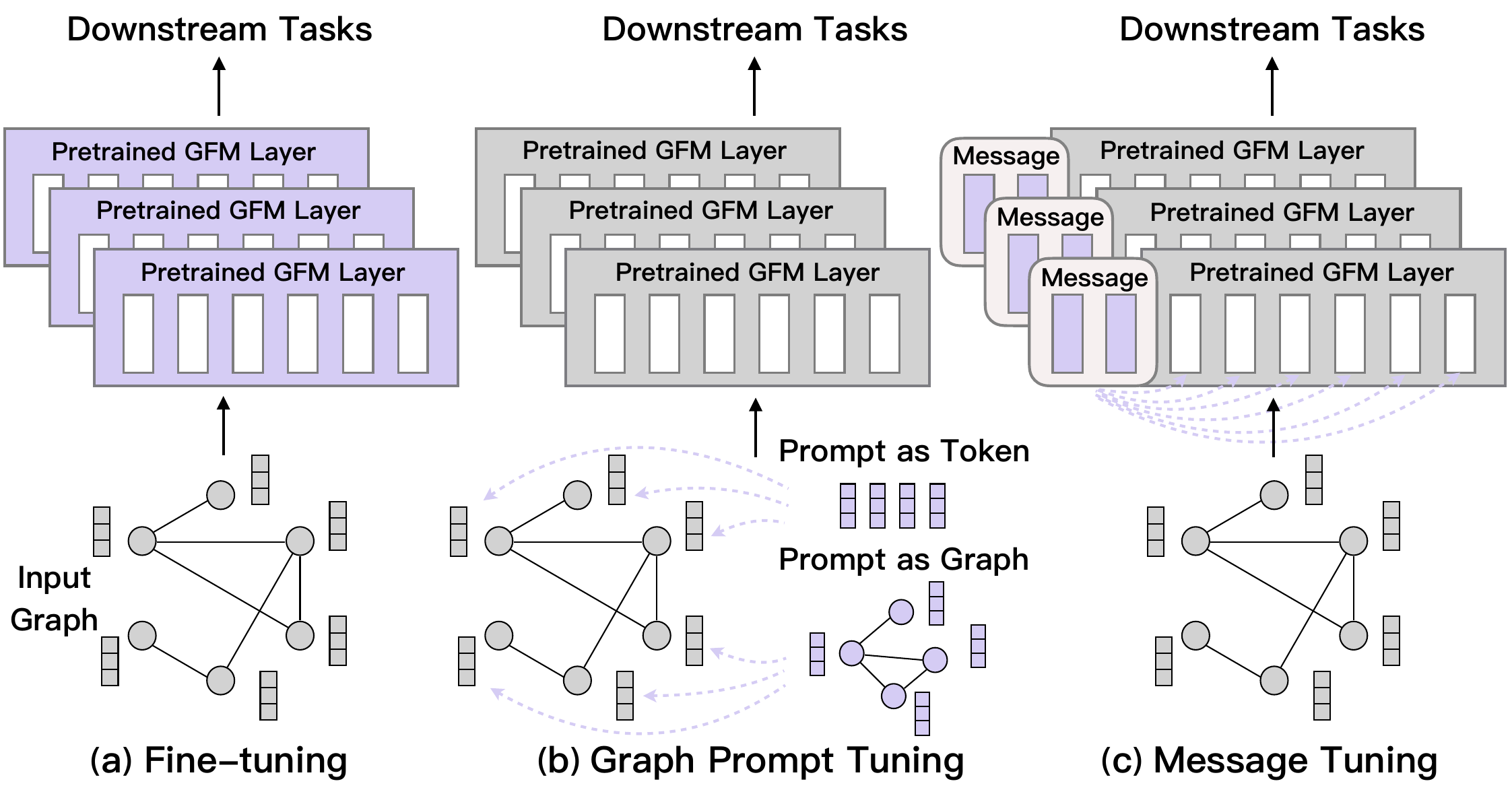}
\caption{Fine-tuning (a) updates all GFM parameters (purple GFM Layer boxes), while Graph Prompt Tuning (b) typically updates prompt tokens or the prompt graph (purple prompt vectors) to transform the input graph, keeping GFM parameters frozen.
We propose Message Tuning (c), which also freezes GFM parameters but optimizes the messages in each GFM Layer (purple message blocks) to regulate message fusion.
The purple dashed lines indicate the learnable inserting patterns of the additional parameters.
} 
\normalsize
\vspace{-3mm}
\label{fig:comparison}
\end{figure}

To reduce trainable parameters and mitigate negative transfer, graph prompt tuning \citep{fang2024gpf,sun2023one} has emerged as an efficient alternative to fine-tuning.
It typically keeps the pre-trained model's parameters frozen and enhances the adaptability of GNN-based GFMs through input-space adaptations, such as inserting lightweight learnable tokens or subgraphs, as illustrated in Figure~\ref{fig:comparison}(b). 
This reformulates downstream tasks to resemble pre-training tasks, thereby bridging the gap between pretext and downstream graph tasks to facilitate better knowledge transfer.
Recent advances in graph prompt have shown promising performance in various graph-related applications \citep{diao2023molcpt, yang2023rec, niu2024gcil, yu2025nonhom}.
%Despite these empirical successes, a coherent theoretical understanding of why it works remains limited.

%To bridge this gap, several studies \citep{fang2024gpf,sun2023one,wang2025graphprompt} began to analyze why graph prompt tuning works from a data operation perspective. 
\par
Owing to these empirical successes, several studies \citep{fang2024gpf,sun2023one,wang2025graphprompt} began to analyze why graph prompt tuning works from a data operation perspective.
However, how to rigorously measure its adaptation capacity on a specific GFM remains an open problem. 
Addressing this fundamental problem is crucial for precisely characterizing the capability bound of graph prompt tuning and for theoretically guiding the design of more powerful adaptation methods.
To this end, we propose \textbf{P}rismatic \textbf{S}pace Theory (PS-Theory), providing a rigorous mathematical framework to quantify the capacity of adaptation approaches.
Specifically, by analyzing the backbone architectures of mainstream GNN-based GFMs, we model each layer of GFMs as a piecewise linear mapping and leverage ideas from geometric measure theory to quantify the {\em refractive} power of each layer map.
We then leverage PS-Theory to establish an upper bound for the adaptation capacity of graph prompt tuning, revealing the inherent limitations of methods operating solely at the input data level.
% An intuitive improvement is to apply prompts at each layer of the model, an idea that aligns with prefix-tuning \citep{li2021prefix} widely used in language models.
% However, prefix-tuning is specifically designed for transformer architectures and generative tasks on sequential data, making it not directly applicable to graph-structured data.

Building upon PS-Theory, we introduce \textbf{M}essage \textbf{T}uning for \textbf{G}FMs (MTG), a novel adaptation approach that injects learnable message prototypes into each layer and dynamically fuses them with the model's native messages, which is compatible with GFMs using either MPNN or GT backbones, as illustrated in Figure~\ref{fig:comparison}(c).
Through our PS-Theory, we prove that the adaptation capacity of MTG can exceed the theoretical upper bound of graph prompt tuning, demonstrating its enhanced adaptation capability.
To comprehensively evaluate MTG's practical performance, we benchmark it against the definitive Graph Prompt Learning benchmark ProG\footnote{ProG encompasses a variety of graph prompt methods, including graph prompt tuning. See Appendix~\ref{app:related_work} for the taxonomy of graph prompt methods.} \citep{zi2024prog}.
Extensive experiments demonstrate MTG's consistent superiority over state-of-the-art graph prompt baselines across diverse few-shot downstream tasks, which validates our theoretical claims regarding its enhanced adaptation capability. The robustness and efficiency of MTG are also confirmed by experiments involving sensitivity analysis and computational efficiency.

The contributions of this paper are summarized as follows:

\begin{itemize}[leftmargin=10pt,rightmargin=10pt, noitemsep,topsep=-5pt]
\item \textbf{Theoretical Foundation.} We propose PS-Theory, providing a rigorous mathematical framework to quantify adaptation capacity and establish the upper bound for the adaptation capacity of graph prompt tuning.

\vspace{0.05 in}
\item \textbf{Adaptation Method.} We introduce MTG\footnote{The code is available at \url{https://github.com/CYCUCAS/MTG}.}, a novel lightweight approach that dynamically guides message fusion by injecting learnable message prototypes across all layers without updating pre-trained weights, significantly enhancing adaptation capability. 
Through PS-Theory, we prove that MTG's adaptation capacity can exceed the upper bound of graph prompt tuning.

\vspace{0.05 in}
\item \textbf{Extensive Experiments.} Through comprehensive evaluations across diverse few-shot downstream tasks, we demonstrate MTG's consistent superiority over state-of-the-art graph prompt baselines, validating our theoretical claims on its enhanced adaptation capability.
\end{itemize}

%% file: section/2_problem.tex
\section{Problem Settings} \label{sec2}

In this section, we provide a mathematical formalization of graph prompt tuning, aiming to offer an intuitive perspective for theoretical analysis.
Following \citet{wang2025graphprompt}, we assume all tasks are at the graph level, because for node-level and edge-level tasks, many studies \citep{sun2023one,liu2023graphprompt} have proven that we can always find solutions to translate these tasks into graph-level tasks.
Let $f_{\text{GFM}}$ denote a pre-trained GNN-based GFM with frozen parameters, and let $g_{\theta }$ denote a graph prompt function with parameters $\theta$ that transforms the input graph $\gG$ into a prompted graph $g_{\theta }(\gG)$.
Given a downstream dataset $\sG = \{ \gG_i\}_{i=1}^{n}$, the goal of graph prompt tuning is to optimize $\theta$ to maximize the likelihood of the optimal representation $v_{\gG_i}$ for a graph $\gG_i$ from $\sG$. 
This objective can be formulated as:
\begin{equation}\label{equ:graphprompt}
    \max_{\theta} P_{f_{\text{GFM}}}(v_{\gG_i}|g_{\theta }(\gG_i))
\end{equation}
The theory in \citet{wang2025graphprompt} rests on the assumption that a GNN model acts as a surjective mapping operator from the prompted graph space $g_{\theta }(\sG)$ to $\R^F$, where $F$ is the dimensionality of the representation. 
However, since real-world graph data is inherently bounded, the model's output is confined to a subset of $\R^F$. 
% Analyzing the geometric properties of this actual output space yields a measure of the adaptation capacity of graph prompt tuning.
We posit that a principled way to measure the adaptation capacity of graph prompt tuning is to analyze the geometric properties of this constrained output space.
To this end, we focus on characterizing the range of the function $f_{\text{GFM}}$ over the prompted graph space, i.e., $f_{\text{GFM}}(g_{\theta }(\sG))=\{f_{\text{GFM}}(g_{\theta }(\gG_i)) \mid \gG_i \in \sG\}$, which involves addressing the following three research questions:
% Specifically, our goal is to analyze the range of the function $f_{\text{GFM}}$ over the prompted graph space, i.e., $\{f_{\text{GFM}}(g_{\theta }(\gG_i)), \gG_i \in \sG\}$, which involves addressing the following three research questions:
\begin{itemize}[itemsep=3pt, topsep=0pt, partopsep=0pt, parsep=0pt, left=0pt]
    \item \textbf{RQ1}: How to model the function $f_{\text{GFM}}$? (Section~\ref{sec:unified_formulation})
    \item \textbf{RQ2}: How to quantify the geometric transformation of the function $f_{\text{GFM}}$ on a given input space? (Section~\ref{sec:geometric_measure})
    \item \textbf{RQ3}: How to model the prompted graph space $g_{\theta }(\sG)$ and measure the output space $f_{\text{GFM}}(g_{\theta }(\sG))$? (Section~\ref{sec:adaptation}) 
\end{itemize}

%% file: section/3_theory.tex
\section{Prismatic Space Theory} \label{sec3}

In this section, we introduce Prismatic Space Theory (PS-Theory), providing a novel perspective and rigorous mathematical framework to quantify the capacity of adaptation approaches for GFMs and establish the upper bound for the adaptation capacity of graph prompt tuning. 
% Due to space constraints, a reading guideline, the proofs of all theorems and additional theoretical details are provided in Appendix~\ref{app:PSTheory}.
Given the mathematical complexity, we refer the reader to the \textbf{Reading Guideline} in Appendix~\ref{app:PSTheory} before proceeding.

\subsection{A Unified Formulation for GNN-based GFMs}\label{sec:unified_formulation}

To facilitate theoretical analysis, we begin by introducing a unified formal framework that generalizes both MPNNs and GTs architectures. 
Consider a graph $\gG = (\gV, \gE)$ with $N = |\gV|$ nodes. 
Let $\mX \in \R^{N \times d_0}$ denote the node feature matrix and $\mA \in \{0,1\}^{N \times N}$ the adjacency matrix. 
Then, the $\ell$-th layer of a general GNN-based GFM is defined by the following formulation.

\begin{definition}[Unified GFM Layer]
\label{gfm_layer}
For any layer $\ell \in \{1, \dots, L\}$, the layer-wise transformation that maps the previous node representation matrix $\mH^{(\ell-1)} \in \R^{N \times d_{\ell-1}}$ to the updated representation $\mH^{(\ell)} \in \R^{N \times d_\ell}$ is defined by the composition of three core operators:
\begin{equation}\label{equ:gfm_layer}
\begin{gathered}
\mH^{(\ell)} = \mathfrak{U}^{(\ell)}\Biggl( \mathfrak{M}^{(\ell)}\biggl(\mathfrak{A}^{(\ell)}\left(\mA, \mH^{(\ell-1)}; \mathbf{\Theta}^{(\ell)}_a \right), \\
\mH^{(\ell-1)}; \mathbf{\Theta}^{(\ell)}_m \biggr), \mH^{(\ell-1)}; \mathbf{\Theta}^{(\ell)}_u \Biggr),
\end{gathered}
\end{equation}
where $\mH^{(0)} = \mX$, $\{\mathbf{\Theta}_a^{(\ell)}, \mathbf{\Theta}_m^{(\ell)}, \mathbf{\Theta}_u^{(\ell)}\}$ are the learnable parameters. 
$\mathfrak{A}^{(\ell)}$, $\mathfrak{M}^{(\ell)}$, and $\mathfrak{U}^{(\ell)}$ denote the attention, message fusion, and update operators respectively:
$\mathfrak{A}^{(\ell)}$ computes attention weights (including both learnable dynamic attention and static structural attention), $\mathfrak{M}^{(\ell)}$ performs weighted aggregation of node messages using attention scores, and $\mathfrak{U}^{(\ell)}$ combines previous node representations with the fused messages to obtain the updated representation.
\end{definition}

We next explain how the Unified GFM Layer in this definition corresponds to actual model backbones, using GCN \citep{kipf2017semisupervised} as an example.

\textbf{GCN (Graph Convolutional Network)} employs a fixed, non-learnable attention mechanism based on the normalized adjacency matrix and a simple update function.

Attention Operator $\mathfrak{A}^{(\ell)}$ computes a static, structural attention weight for each edge $(i, j)$ based on the normalized adjacency matrix: 
\begin{equation}
    \mathfrak{A}^{(\ell)}\left(\mA, \mH^{(\ell-1)}; \mathbf{\Theta}^{(\ell)}_a \right) = \tilde{\mD}^{-\frac{1}{2}} \tilde{\mA} \tilde{\mD}^{-\frac{1}{2}},
\end{equation}
where $\tilde{\mA} = \mA + \mI_N$ is the adjacency matrix with self-loops and $\tilde{\mD}$ is the corresponding degree matrix.
Notably, no parameters are used in this operation ($\mathbf{\Theta}^{(\ell)}_a = \varnothing$).

Message Fusion Operator $\mathfrak{M}^{(\ell)}$ computes a weighted aggregation of the neighbors' features using the normalized adjacency matrix:
\begin{equation}
    \begin{gathered}
    \mathfrak{M}^{(\ell)}\left( \tilde{\mD}^{-\frac{1}{2}} \tilde{\mA} \tilde{\mD}^{-\frac{1}{2}}, \mH^{(\ell-1)}; \mathbf{\Theta}^{(\ell)}_m \right)\\= \tilde{\mD}^{-\frac{1}{2}} \tilde{\mA} \tilde{\mD}^{-\frac{1}{2}} \mH^{(\ell-1)} \mW^{(\ell)},
    \end{gathered}
\end{equation}
where $\mW^{(\ell)} \in \R^{d_{\ell-1} \times d_{\ell}}$ is a learnable weight matrix ($\mathbf{\Theta}^{(\ell)}_m = \mW^{(\ell)}$).

Update Operator $\mathfrak{U}^{(\ell)}$ applies a non-linear activation function $\sigma$ (e.g., ReLU) to the aggregated messages:
\begin{equation}
    \begin{gathered}
    \mathfrak{U}^{(\ell)}\left( \tilde{\mD}^{-\frac{1}{2}} \tilde{\mA} \tilde{\mD}^{-\frac{1}{2}} \mH^{(\ell-1)} \mW^{(\ell)}, \mH^{(\ell-1)}; \mathbf{\Theta}^{(\ell)}_u \right) \\= \sigma\left( \tilde{\mD}^{-\frac{1}{2}} \tilde{\mA} \tilde{\mD}^{-\frac{1}{2}} \mH^{(\ell-1)} \mW^{(\ell)} \right),
    \end{gathered}
\end{equation}
where the previous representation $\mH^{(\ell-1)}$ is not explicitly used in the update, making the update a direct transformation of the messages ($\mathbf{\Theta}^{(\ell)}_u = \varnothing $).

The resulting layer formulation is: 
\begin{equation}
    \mH^{(\ell)} = \sigma\left( \tilde{\mD}^{-\frac{1}{2}} \tilde{\mA} \tilde{\mD}^{-\frac{1}{2}} \mH^{(\ell-1)} \mW^{(\ell)} \right).
\end{equation}
The formulation in Definition~\ref{gfm_layer} serves as a unified formalization of the core structure rather than encompassing all architecture details.
The detailed correspondence between this formulation and more classical backbone architectures is presented in Appendix~\ref{app:gfm_layer}.

\subsection{A Geometric Measure Theoretic Formulation}\label{sec:geometric_measure}

\textbf{Prism Metaphor}. Graph prompt tuning typically operates by injecting a low-dimensional prompt into the high-dimensional input space of a frozen GFM.
To quantify its efficacy, we need to understand how the GFM's architecture transforms this input space.
We posit that each layer of a GFM, particularly those employing piecewise linear activations like ReLU \citep{nair2010} or LeakyReLU \citep{maas2013}, acts not merely as a contraction but as a {\em prism}.
% A prism refracts light, differentially bending components of the input space and projecting some dimensions into oblivion (the null space).
% This action is fundamentally non-isometric and piecewise linear, leading to a progressive folding and fracturing of the input manifold.
The non-isometric, piecewise linear action of a {\em prism} refracts the input space, collapsing some dimensions into oblivion, and progressively folding the input manifold.
We model the GFMs as a sequence of measurable maps that transform the input space into a sequence of increasingly complex, lower-dimensional prismatic space.
We quantify the {\em refractive} power of each layer by leveraging ideas from geometric measure theory, focusing on the singular values of the layer's Jacobian and their effect on the intrinsic dimension and measure of the data manifold.

Due to space constraints, we provide the detailed mathematical definitions and theoretical analysis in Appendix~\ref{app:PSTheory}.
First, we adopt the unified GFM layer from Definition~\ref{gfm_layer} and introduce the piecewise linear map, which is a key abstraction for understanding the model's mechanisms.
\begin{definition}[Piecewise Linear Function for Matrix Maps]
\label{def:piecewise_linear_function_matrix}
A function $F: \R^{N \times d_{\text{in}}} \to \R^{N \times d_{\text{out}}}$ is said to be piecewise linear if there exists a finite collection of polyhedral regions $\{R_i\}_{i=1}^K$ in $\R^{N \times d_{\text{in}}}$ such that
$\R^{N \times d_{\text{in}}} = \bigcup_{i=1}^K R_i$ and for each region $R_i$, the function $F$ is affine, i.e., there exists $\mA_i \in \R^{N d_{\text{out}} \times N d_{\text{in}}}$ and $\vb_i \in \R^{N d_{\text{out}}}$ such that:
\begin{equation}
\text{vec}(F(\mH)) = \mA_i \cdot \text{vec}(\mH) + \vb_i \quad \text{for all } \mH \in R_i,
\end{equation}
where vec is the vectorization operation: $\R^{\alpha \times \beta} \to \R^{\alpha\beta}$ (i.e. stacking the columns of a matrix into a vector).
Equivalently, in matrix form, $F(\mH) = \text{unvec}(\mA_i \cdot \text{vec}(\mH) + \vb_i)$, where $\text{unvec}$: $\R^{\alpha\beta} \to \R^{\alpha \times \beta}$. 
(See Appendix~\ref{app:piecewise_linear} for details.)
\end{definition}

\begin{proposition}
\label{pro:piecewise_linear}
The attention, message fusion, and update operators $\mathfrak{A}^{(\ell)}, \mathfrak{M}^{(\ell)}, \mathfrak{U}^{(\ell)}$ are generally continuous, piecewise linear functions and differentiable almost everywhere.
\end{proposition}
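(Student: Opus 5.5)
The plan is to argue operator by operator, since the layer in Definition~\ref{gfm_layer} is a composition of $\mathfrak{A}^{(\ell)}$, $\mathfrak{M}^{(\ell)}$, $\mathfrak{U}^{(\ell)}$. Composition preserves piecewise linearity. Given partitions $\{R_i\}$ for the inner map and $\{S_j\}$ for the outer map, the sets $R_i\cap F_i^{-1}(S_j)$ are again polyhedral, because the preimage of a polyhedron under an affine map is a polyhedron. There are finitely many such sets, and on each the composite is affine. The same holds for maps of several arguments after stacking the arguments into one vector via $\text{vec}$, and for sums and concatenations of piecewise linear maps, where one uses the common refinement of the partitions. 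Continuity and a.e.\ differentiability then pass to compositions: a continuous piecewise linear map is Lipschitz, hence differentiable almost everywhere by Rademacher's theorem. More elementarily, it is differentiable on the interior of each $R_i$, and the union of the region boundaries is a finite union of lower-dimensional polyhedral faces, so it has Lebesgue measure zero.

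Next I treat the building blocks, using GCN and the backbones in Appendix~\ref{app:gfm_layer} as the template. For static structural attention, $\tilde{\mD}^{-1/2}\tilde{\mA}\tilde{\mD}^{-1/2}$ does not depend on $\mH^{(\ell-1)}$, so as a function of the representation it is constant, which is trivially piecewise linear. Message fusion $\mH\mapsto \mathbf{S}\mH\mW$ for a fixed attention matrix $\mathbf{S}$ is linear, with $\text{vec}(\mathbf{S}\mH\mW)=(\mW^\top\otimes \mathbf{S})\text{vec}(\mH)$. The update operator is an affine combination, such as a residual $\mH\mW_r$ or concatenation, followed by a coordinatewise ReLU or LeakyReLU. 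A coordinatewise activation of this kind is piecewise linear with $2^{Nd}$ orthant-type regions $\{\pm h_{k}\ge 0\}$, and it is continuous. Sum or max pooling is likewise piecewise linear, since max is a finite union of half-space-defined pieces. Mean pooling is linear.

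The main obstacle is learnable dynamic attention, as in GAT or Transformers. There the scores $e_{ij}=\text{LeakyReLU}(\va^\top[\mW\vh_i\|\mW\vh_j])$ are piecewise linear, but softmax normalization and the product of attention with $\mH\mW$ are not piecewise linear: they are smooth and bilinear respectively. This is why the statement says \emph{generally}. I would handle it in two ways. First, note that the score computation up to softmax is exactly piecewise linear, and that the full map is a composition of a piecewise linear map with real-analytic maps, so it is continuous and locally Lipschitz and therefore differentiable a.e.\ by Rademacher's theorem. Second, in line with the Limitations, the piecewise linear claim is to be read in the linear-approximation sense. On each region where the LeakyReLU pattern is fixed, the map is smooth. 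Fixing the first-order linearization of softmax and of the bilinear product around the operating point gives an affine map on that region, so the region-wise affine structure of Definition~\ref{def:piecewise_linear_function_matrix} holds for the linearized model, and exactly so for MPNN-type backbones with static attention and piecewise linear activations.
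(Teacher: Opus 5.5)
Your proposal is correct and follows essentially the paper's route. Each operator is, or is approximated by, a matrix multiplication followed by a piecewise linear activation; continuity comes from the activations; a.e.\ differentiability comes from the measure-zero region boundaries; and softmax-based dynamic attention is admitted only in an approximate sense, which the paper relegates to a remark. Your version is more careful than the paper's, through the polyhedral-preimage composition argument, the Kronecker vectorization, and the local-Lipschitz/Rademacher argument that gives a.e.\ differentiability even for the non-piecewise-linear attention. The same caveat should also be stated for the LayerNorm in the GT update operator, which you did not list.
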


\begin{proposition}
\label{pro:piecewise_linear_layer_map}
The unified GFM layer map $F^{(\ell)}: \sH^{(\ell-1)} (\subset \R^{N \times d_{\ell-1}}) \to \sH^{(\ell)} (\subset \R^{N \times d_{\ell}})$ is a piecewise linear function.
For any point $\mH$ where $F^{(\ell)}$ is differentiable, its Jacobian $\mJ^{(\ell)}(\mH) \in \R^{N d_{\ell} \times N d_{\ell-1}}$ exists.
\end{proposition}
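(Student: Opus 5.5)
The plan is to obtain Proposition~\ref{pro:piecewise_linear_layer_map} from Proposition~\ref{pro:piecewise_linear}, using the fact that piecewise linear maps in the sense of Definition~\ref{def:piecewise_linear_function_matrix} are closed under the operations that build $F^{(\ell)}$ in~\eqref{equ:gfm_layer}. Write the layer as
\[
F^{(\ell)}(\mH)=\mathfrak{U}^{(\ell)}\bigl(\mathfrak{M}^{(\ell)}(\mathfrak{A}^{(\ell)}(\mA,\mH),\mH),\mH\bigr),
\]
with $\mA$ fixed. This is a composition of the maps $\mH\mapsto(\mathfrak{A}^{(\ell)}(\mA,\mH),\mH)$, $(\mathbf{S},\mH)\mapsto(\mathfrak{M}^{(\ell)}(\mathbf{S},\mH),\mH)$ and $(\mM,\mH)\mapsto\mathfrak{U}^{(\ell)}(\mM,\mH)$. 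It therefore suffices to prove two closure lemmas.

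\textbf{Pairing.} If $G_1,G_2$ are piecewise linear, then so is $\mH\mapsto(G_1(\mH),G_2(\mH))$. The proof is to take the common refinement $\{R_i\cap R'_j\}$. Each intersection is again a polyhedron, there are finitely many, they cover the whole space, and on each one both components are affine, so the stacked map is affine with block matrix $\bigl[\begin{smallmatrix}\mA_i\\ \mA'_j\end{smallmatrix}\bigr]$.

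\textbf{Composition.} If $G:\R^{p}\to\R^{q}$ has pieces $(R_i,\mA_i,\vb_i)$ and $H:\R^{q}\to\R^{r}$ has pieces $(Q_j,\mB_j,\vc_j)$, then $H\circ G$ is piecewise linear. The candidate regions are $R_i\cap\{x:\mA_i x+\vb_i\in Q_j\}$. The preimage of a polyhedron $\{y:\mP y\le \vp\}$ under an affine map is $\{x:\mP\mA_i x\le \vp-\mP\vb_i\}$, which is again a polyhedron. The regions are finite in number, they cover $\R^p$ because the $Q_j$ cover $\R^q$, and on each one the composite is affine with matrix $\mB_j\mA_i$ and offset $\mB_j\vb_i+\vc_j$. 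The vec/unvec identifications are linear bijections, so it is harmless to pass between matrix and vector form throughout.

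Putting these together, $F^{(\ell)}$ is piecewise linear on $\R^{N\times d_{\ell-1}}$, and its restriction to $\sH^{(\ell-1)}$ is piecewise linear as well. Continuity is also preserved under composition, although the statement only requires piecewise linearity.

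For the Jacobian claim, take a point $\mH$ where $F^{(\ell)}$ is differentiable. If $\mH$ lies in the interior of some region $R_i$, then $\mJ^{(\ell)}(\mH)=\mA_i$ directly. Otherwise the derivative exists by hypothesis, and the Jacobian is the matrix of the differential in vec coordinates, which has size $Nd_\ell\times Nd_{\ell-1}$. The region boundaries are finite unions of lower-dimensional polyhedral faces and so have Lebesgue measure zero. This gives differentiability almost everywhere and is consistent with Proposition~\ref{pro:piecewise_linear}.

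The main obstacle is the inherited hypothesis in Proposition~\ref{pro:piecewise_linear}, which says the operators are ``generally'' piecewise linear. For softmax attention, as in the Transformer case, this holds only under the linear approximation acknowledged in the Limitations. The composition argument itself is mechanical. The care needed there is to confirm that empty or degenerate intersections cause no problem: empty regions are simply dropped, and overlapping boundaries are allowed because Definition~\ref{def:piecewise_linear_function_matrix} only requires a covering, not a partition.
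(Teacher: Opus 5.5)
Your proposal is correct and is essentially the paper's proof. It uses the same factorization $F^{(\ell)}=\mathfrak{U}^{(\ell)}\circ\bigl(\mathfrak{M}^{(\ell)}\circ(\mathfrak{A}^{(\ell)},\mathrm{id}),\mathrm{id}\bigr)$, closure under composition (Lemma~\ref{lem:composition_piecewise_linear_function}) and under pairing, and reads off an $Nd_\ell\times Nd_{\ell-1}$ Jacobian at differentiable points. Your versions of the closure lemmas are more careful than the paper's: you intersect $R_i$ with the preimage of $Q_j$ under the affine piece $\mA_i x+\vb_i$, and you prove pairing by common refinement. You also handle differentiable points on region boundaries, which the paper omits.

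One caveat you share with the paper: the argument needs Proposition~\ref{pro:piecewise_linear} to make $\mathfrak{M}^{(\ell)}$ piecewise linear \emph{jointly} in $(\mathbf{S},\mH)$, and a bilinear fusion such as $\mathbf{S}\mH\mW^{(\ell)}$ is not. This is harmless for static attention (GCN, GIN, GraphSAGE), where $\mathbf{S}$ does not depend on $\mH$ and you can compose with the linear map $\mH\mapsto\mathfrak{M}^{(\ell)}(\mathbf{S},\mH)$ directly. For dynamic attention, however, $\mathbf{S}(\mH)\mH\mW^{(\ell)}$ is generally not piecewise linear even after linearizing the softmax, so the obstacle you flag is not the softmax alone.
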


The proofs of Propositions~\ref{pro:piecewise_linear} and \ref{pro:piecewise_linear_layer_map} are provided in Appendices~\ref{app:piecewise_linear_proof} and \ref{app:piecewise_linear_layer_map}, respectively.
Having abstracted the model architecture, we adopt a geometric perspective to model the input data and output space, with particular emphasis on capturing the bounded nature of the input data.

\begin{definition}[Input Manifold and Representation Space]
\label{def:representation_manifold}
The input space is modeled as a compact, smooth data manifold $\mathcal{M}_0 \subset \mathcal{X} \subset \R^{N \times d_0}$, with intrinsic dimension $d_{\text{int}}(\mathcal{M}_0) = D_0$. 
$\mathcal{X}$ denotes the entire set of possible input data forms for the model. % constituting a high-dimensional space.
$\mathcal{M}_0$ represents a low-dimensional subset of $\mathcal{X}$ endowed with specific semantic and geometric structures (see Appendix~\ref{app:representation_manifold_details} for details).
The representation at layer $\ell$ is the image of the input manifold under the composite map $\Phi^{(\ell)} = F^{(\ell)} \circ \cdots \circ F^{(1)}$: 
\begin{equation}
\mathcal{M}^{(\ell)} = \Phi^{(\ell)}(\mathcal{M}_0) \subset \R^{N \times d_{\ell}}.
\end{equation}
\end{definition}

\begin{definition}[Prismatic Space]
\label{def:prismatic_manifold}
A set $\mathcal{M}$ is said to be prismatic space if there exists a compact, smooth manifold $\mathcal{N}$ and a piecewise linear map $f$ such that $\mathcal{M} = f(\mathcal{N})$.
\end{definition}

This definition formalizes the preceding \textbf{Prism Metaphor}.
Real-world graph data can be viewed as points sampled from the input manifold.
And the output representations of graph data lie in the prismatic space, defined as the image of the input manifold under the GFM's mapping.
Hence, such a geometric modeling approach is of practical significance.

\begin{proposition}
\label{pro:representation_manifold}
$\Phi^{(\ell)} = F^{(\ell)} \circ \cdots \circ F^{(1)}$ is piecewise linear.
Assume that $\Phi^{(\ell)}$ is injective on each polyhedral region, then $\mathcal{M}^{(\ell)} = \Phi^{(\ell)}(\mathcal{M}_0)$ is a prismatic space and may have singularities.
\end{proposition}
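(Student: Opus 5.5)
The proof has two parts: first that the composite $\Phi^{(\ell)}$ is piecewise linear, then that its image of $\mathcal{M}_0$ is a prismatic space in the sense of Definition~\ref{def:prismatic_manifold}.

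\textbf{Step 1: piecewise linearity by induction.} I argue by induction on $\ell$. Proposition~\ref{pro:piecewise_linear_layer_map} gives each $F^{(k)}$ as piecewise linear, with polyhedral regions $\{R^{(k)}_j\}$ and affine pieces $\mA^{(k)}_j\,\mathrm{vec}(\cdot)+\vb^{(k)}_j$. For the inductive step, suppose $\Phi^{(\ell-1)}$ is piecewise linear with regions $\{P_i\}$ and affine pieces $T_i$. Consider the sets
\[
P_i \cap T_i^{-1}\bigl(R^{(\ell)}_j\bigr).
\]
Each is a polyhedron, because the preimage of a polyhedron $\{y : Cy\le c\}$ under an affine map $x\mapsto Ax+b$ is $\{x : CAx\le c-Cb\}$, which is again polyhedral, and intersections of polyhedra are polyhedral. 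There are finitely many such sets, and they cover $\R^{N\times d_0}$ because the $P_i$ do and the $R^{(\ell)}_j$ do. On each of them, $\Phi^{(\ell)}=F^{(\ell)}\circ\Phi^{(\ell-1)}$ is a composition of two affine maps, hence affine, with matrix $\mA^{(\ell)}_j A_i$. This matches Definition~\ref{def:piecewise_linear_function_matrix}. The base case $\Phi^{(1)}=F^{(1)}$ is Proposition~\ref{pro:piecewise_linear_layer_map} itself.

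\textbf{Step 2: prismatic space.} Definition~\ref{def:representation_manifold} says $\mathcal{M}_0$ is a compact smooth manifold. Taking $\mathcal{N}=\mathcal{M}_0$ and $f=\Phi^{(\ell)}$ in Definition~\ref{def:prismatic_manifold} shows directly that $\mathcal{M}^{(\ell)}$ is a prismatic space. Continuity of each $F^{(k)}$ (Proposition~\ref{pro:piecewise_linear}) carries over to the composite, so $\mathcal{M}^{(\ell)}$ is also compact. The injectivity hypothesis adds structure piece by piece: on each region $P_i$, the affine map is injective, so its matrix has full column rank. It is therefore an affine embedding, and $\Phi^{(\ell)}(\mathcal{M}_0\cap P_i)$ is a smooth image of dimension $D_0$. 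So $\mathcal{M}^{(\ell)}$ is a finite union of such smoothly embedded pieces, glued continuously along region boundaries.

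\textbf{Step 3: possible singularities (main obstacle).} The hardest part is making ``may have singularities'' precise, since what is needed is a possibility statement supported by an example. I would give a toy example. Take a smooth closed curve $\mathcal{M}_0$ crossing the hyperplane where a ReLU switches, say the unit circle in $\R^2$. Apply $f(x,y)=(x,\sigma(y))$, where $\sigma$ is LeakyReLU with slope $\tfrac12$. Then $f$ is injective on both half-planes. The image curve has a corner at $(\pm1,0)$ where the one-sided tangents differ, so it is not a smooth submanifold there. Taking instead a map that is injective on each piece but not globally, different pieces can map onto overlapping sets, which creates folds or self-intersections. Either way, $\mathcal{M}^{(\ell)}$ is in general only a piecewise smooth set, with singularities confined to the images of $\mathcal{M}_0\cap\partial P_i$.

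Without the injectivity assumption, rank drops could lower the dimension of individual pieces. The assumption rules this out, which keeps each piece's local structure well defined. I expect the rigor burden to lie mainly in this last step, and it needs only the example.
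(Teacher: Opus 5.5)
Your Steps 1 and 2 follow the paper's route. The paper also gets piecewise linearity of $\Phi^{(\ell)}$ from the composition lemma, and your preimage argument via the affine extension $T_i$ justifies polyhedrality of $P_i\cap T_i^{-1}(R^{(\ell)}_j)$ more carefully than the paper does. The paper then writes $\mathcal{M}^{(\ell)}$ as the union of the affine images of $\mathcal{M}_0\cap R_i$, using injectivity to make each image an embedded piece. Your observation that prismaticity is immediate from Definition~\ref{def:prismatic_manifold} with $\mathcal{N}=\mathcal{M}_0$ and $f=\Phi^{(\ell)}$ is correct. It shows injectivity is only needed for the piecewise-embedding structure, which is cleaner than the paper's presentation.

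The concrete error is in Step 3, the part you say carries the rigor; the paper treats singularities only qualitatively, so a correct example would be an improvement. Your example, however, has no corner.
\begin{itemize}
\item At $(\pm1,0)$ the unit circle's tangent is $(0,1)$, which is normal to the switching line $y=0$.
\item The two one-sided Jacobians $\mI$ and $\mathrm{diag}(1,\tfrac12)$ send it to the parallel vectors $(0,1)$ and $(0,\tfrac12)$.
\item So the image, the upper unit semicircle glued to the lower half of the ellipse $x^2+4y^2=1$, is $C^1$ at $(\pm1,0)$.
\end{itemize}
The image is still not a smooth submanifold there, but for a different reason. Written locally as a graph $x=g(y)$, it has $g''(0^+)=-1$ and $g''(0^-)=-4$, so the curvature jumps from $1$ to $4$.

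You can either use that curvature argument or choose a curve that crosses the switching line obliquely. The one-sided images $(v_x,v_y)$ and $(v_x,\tfrac12 v_y)$ of a tangent $v$ are non-parallel exactly when $v_xv_y\neq0$. For example, the unit circle centred at $(0,\tfrac12)$ meets $y=0$ at $(\pm\tfrac{\sqrt3}{2},0)$ with tangent directions $(\tfrac12,\pm\tfrac{\sqrt3}{2})$, which gives genuine corners.

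Separately, your final claim that singularities are confined to the images of $\mathcal{M}_0\cap\partial P_i$ does not follow from the proposition's hypothesis. That hypothesis only gives injectivity on each region. The images of two different pieces can then cross at points coming from the interiors of both, which are the self-intersections you mention one sentence earlier. The confinement claim needs $\Phi^{(\ell)}$ to be injective on all of $\mathcal{M}_0$.
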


The proof of Proposition~\ref{pro:representation_manifold} is provided in Appendix~\ref{app:representation_manifold}. 
As in many geometric theories, an intuitive strategy for analyzing complex geometric space is to begin with a local perspective, particularly since the formation process of prismatic space is already well understood.
Consequently, we focus on the singular value decomposition (SVD) of the layer Jacobians, at the core of the prismatic effect.
% The core of the prismatic effect lies in the singular value decomposition (SVD) of the layer Jacobians.

\begin{definition}[Spectral Prism of a Layer Map]
\label{def:spectral_prism_layer}
For a point $\mH \in \sH^{(\ell-1)}$ where $F^{(\ell)}$ is differentiable, let $\mJ^{(\ell)}(\mH) = \mU^{(\ell)}\boldsymbol{\Sigma}^{(\ell)}(\mV^{(\ell)})^\top$ be its SVD (see Appendix~\ref{app:spectral_prism_layer} for details). The diagonal matrix $\boldsymbol{\Sigma}^{(\ell)} = \text{diag}(\sigma_1^{(\ell)}, \sigma_2^{(\ell)}, ..., \sigma_{r_{\ell}}^{(\ell)}, 0, ..., 0)$ contains the singular values, where $r_{\ell}$ is the rank.
\end{definition}

\begin{theorem}[Local Measure Contraction Factor]
\label{thm:local_measure_contraction_factor}
Let $\sS \subset \sH^{(\ell-1)}$ be a sufficiently small measurable set contained in an $s$-dimensional subspace $\sV$ on which $F^{(\ell)}$ is linear and injective, with constant Jacobian $\mJ^{(\ell)}$ of rank $r_{\ell}$ ($s \leq r_{\ell}$). 
Assume $\sV$ is the subspace spanned by the first $s$ right singular vectors of $\mJ^{(\ell)}$, corresponding to the $s$ largest singular values $\sigma_1^{(\ell)} \geq \dots \geq \sigma_s^{(\ell)} > 0$. 
Then, for the $s$-dimensional Hausdorff measure $\mathcal{H}^s$ \citep{steven2008}:
\begin{equation}
\mathcal{H}^s(F^{(\ell)}(\sS)) = \Big( \prod_{i=1}^s \sigma_i^{(\ell)} \Big) \mathcal{H}^s(\sS).
\end{equation}
If $s = r_{\ell}$, the volume contraction factor is $\prod_{i=1}^{r_{\ell}} \sigma_i^{(\ell)}$.
\end{theorem}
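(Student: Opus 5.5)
The argument reduces everything to the behaviour of a single linear map on a flat $s$-dimensional subspace. On $\sS$ the map $F^{(\ell)}$ is affine, so after vectorization (Definition~\ref{def:piecewise_linear_function_matrix}) we can write $\text{vec}(F^{(\ell)}(\mH)) = \mJ^{(\ell)}\,\text{vec}(\mH) + \vb$ for all $\mH \in \sS$. The translation by $\vb$ is an isometry of $\R^{N d_\ell}$, and $\mathcal{H}^s$ is translation invariant. It therefore suffices to compute $\mathcal{H}^s(\mJ^{(\ell)}\sS)$. Here we identify $\R^{N\times d}$ with $\R^{Nd}$ via vec, which preserves the Frobenius/Euclidean metric and hence Hausdorff measures.

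Next we put the map in canonical coordinates using the SVD of Definition~\ref{def:spectral_prism_layer}, $\mJ^{(\ell)} = \mU\boldsymbol{\Sigma}\mV^\top$. Let $\mV_s$ be the matrix of the first $s$ right singular vectors, so $\sV = \text{range}(\mV_s)$. Define the isometry $\iota:\R^s \to \sV$ by $\iota(\vx)=\mV_s\vx$, and set $\sS' = \iota^{-1}(\sS) \subset \R^s$. Since $\mathcal{H}^s$ is invariant under isometric embeddings and, on $\R^s$, coincides with Lebesgue measure $\mathcal{L}^s$ (with the standard normalization of \citep{steven2008}), we get $\mathcal{H}^s(\sS)=\mathcal{L}^s(\sS')$. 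For $\vx\in\R^s$ we have $\mJ^{(\ell)}\mV_s\vx = \mU_s \boldsymbol{\Sigma}_s \vx$, where $\boldsymbol{\Sigma}_s=\text{diag}(\sigma_1^{(\ell)},\dots,\sigma_s^{(\ell)})$ and the columns of $\mU_s$ are orthonormal. Hence $\mJ^{(\ell)}\sS = \mU_s(\boldsymbol{\Sigma}_s \sS')$, and $\mU_s$ is again an isometric embedding $\R^s\to\R^{Nd_\ell}$. This yields $\mathcal{H}^s(F^{(\ell)}(\sS)) = \mathcal{L}^s(\boldsymbol{\Sigma}_s\sS')$.

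The final step is the change-of-variables formula for a linear map on $\R^s$: $\mathcal{L}^s(\boldsymbol{\Sigma}_s \sS') = |\det \boldsymbol{\Sigma}_s|\,\mathcal{L}^s(\sS') = \prod_{i=1}^s \sigma_i^{(\ell)}\,\mathcal{L}^s(\sS')$. This is valid because all $\sigma_i^{(\ell)}>0$, so $\boldsymbol{\Sigma}_s$ is invertible, consistent with the injectivity hypothesis. Combining the equalities gives the claim. The case $s=r_\ell$ is the specialization in which every nonzero singular value appears.

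The only real subtlety is measure-theoretic bookkeeping rather than computation. One must check that $\mathcal{H}^s$ restricted to an $s$-dimensional affine subspace equals the pushforward of $\mathcal{L}^s$ under an isometric parametrization, and that measurability of $\sS$ transfers to $\sS'$. Both follow from the isometry invariance of Hausdorff measure and the theorem $\mathcal{H}^s=\mathcal{L}^s$ on $\R^s$. Alternatively, the whole statement follows at once from the area formula, with Jacobian factor $\sqrt{\det(\mV_s^\top \mJ^{\top}\mJ\mV_s)}=\prod_{i\le s}\sigma_i^{(\ell)}$; I would mention this as a cross-check. The smallness of $\sS$ is used only to guarantee that $\sS$ lies in a single linear region.
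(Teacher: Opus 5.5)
Your proposal is correct and follows essentially the same route as the paper: remove the translation $\vb$ using invariance of $\mathcal{H}^s$, then use the SVD, under which $\mJ^{(\ell)}$ sends the orthonormal basis $\vv_1,\dots,\vv_s$ of $\sV$ to $\sigma_1^{(\ell)}\vu_1,\dots,\sigma_s^{(\ell)}\vu_s$, so the factor is $|\det \text{diag}(\sigma_1^{(\ell)},\dots,\sigma_s^{(\ell)})|$. Your explicit isometric parametrizations via $\mV_s$ and $\mU_s$, together with the identification $\mathcal{H}^s=\mathcal{L}^s$ on $\R^s$, make rigorous the step the paper states only informally as computing the change in measure ``using the determinant in orthonormal coordinates.''
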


\begin{corollary}[Local ReLU Prism Effect] 
\label{thm:ReLU_prism}
Consider the ReLU activation function used within $F^{(\ell)}$.
At points where ReLU is differentiable, its Jacobian $\mJ_{\text{ReLU}}$ is a diagonal matrix with diagonal entries either 0 or 1, and hence idempotent ($\mJ_{\text{ReLU}}^2 = \mJ_{\text{ReLU}}$).
This implies that ReLU acts as a local projection, nullifying some dimensions (setting outputs to zero) and preserving others. 
% However, the overall Jacobian of $F^{(\ell)}$, which includes other operations such as graph convolutions and linear transformations, is generally not idempotent and has singular values that are not limited to 0 or 1. 
The ReLU component contributes to the prismatic effect by introducing sparsity and reducing the effective rank of the Jacobian in local regions.
\end{corollary}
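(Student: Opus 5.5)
The statement to prove is Corollary~\ref{thm:ReLU_prism}. I would prove it by direct computation of the ReLU Jacobian, then feed that structure into the chain rule for $F^{(\ell)}$ and into Theorem~\ref{thm:local_measure_contraction_factor}.

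\textbf{Step 1: the ReLU Jacobian.} ReLU acts entrywise on $\mathrm{vec}(\mZ)\in\R^{n}$, with $n=Nd_\ell$. At a point where ReLU is differentiable, no entry $z_k$ equals zero. Each output coordinate $\max(z_k,0)$ depends only on $z_k$, so the off-diagonal partials vanish. The diagonal entry is $1$ when $z_k>0$ and $0$ when $z_k<0$. Hence
\[
\mJ_{\text{ReLU}}=\mathrm{diag}(\mathbb{1}[z_1>0],\dots,\mathbb{1}[z_n>0]).
\]
Squaring each $0/1$ diagonal entry leaves it unchanged, so $\mJ_{\text{ReLU}}^2=\mJ_{\text{ReLU}}$. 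Since the matrix is also symmetric, it is an orthogonal projection onto the coordinate subspace spanned by the active indices. On the open orthant region where the sign pattern is constant, ReLU coincides with this linear map. This gives the \emph{local projection} claim: inactive coordinates are set to $0$ and active ones are passed through unchanged.

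\textbf{Step 2: effect on the layer Jacobian.} Write $F^{(\ell)}=\mathrm{ReLU}\circ G$, where $G$ collects the attention, fusion and update pieces, which are piecewise linear by Proposition~\ref{pro:piecewise_linear}. On a linear region the chain rule gives $\mJ^{(\ell)}=\mJ_{\text{ReLU}}\,\mJ_G$. Using $\mathrm{rank}(PB)\le\min(\mathrm{rank}P,\mathrm{rank}B)$, we get
\[
r_\ell\le\min\bigl(\#\{k:z_k>0\},\,\mathrm{rank}\,\mJ_G\bigr).
\]
So the number of inactive units directly lowers the effective rank. The rows of $\mJ^{(\ell)}$ at inactive indices are zero, which is the sparsity claim.

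\textbf{Step 3: connection to measure contraction.} Because $P=\mJ_{\text{ReLU}}$ is an orthogonal projection, $\|P\vv\|\le\|\vv\|$. By the Courant--Fischer min-max characterization, every singular value of $P\mJ_G$ is at most the corresponding singular value of $\mJ_G$. Plugging these bounds into Theorem~\ref{thm:local_measure_contraction_factor} shows that the factor $\prod_{i=1}^{s}\sigma_i^{(\ell)}$ can only decrease when ReLU is applied. Whenever $s$ exceeds the new rank, the corresponding $s$-dimensional measure collapses to zero. This is the \emph{prismatic} nullification of dimensions.

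\textbf{Main obstacle.} The computations themselves are routine. The real issue is the imprecise phrase ``contributes to the prismatic effect''. I would formalize it as the two inequalities from Steps 2 and 3: a rank bound and a singular-value bound. One technical point needs care, namely that ReLU need not be the outermost operation inside $\mathfrak{U}^{(\ell)}$. If the layer has the form $\mH+\mathrm{ReLU}(\cdot)$ or $\mathrm{ReLU}(\cdot)\mW$, I would state the rank bound only for the ReLU factor inside the chain-rule product, not for the whole of $\mJ^{(\ell)}$.
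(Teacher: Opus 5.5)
Your proposal is correct and matches the paper. The paper gives no separate proof: the diagonal $0/1$ Jacobian and its idempotence (your Step~1) are its entire justification, and its sparsity and rank-reduction claim is left informal, so your Steps~2--3 (the rank inequality of Lemma~\ref{lem:rank_inequality_composition_linear_maps} and $\sigma_i(PB)\le\|P\|_{\mathrm{op}}\,\sigma_i(B)$) sharpen that claim rather than depart from it. One refinement to your caveat: for $\mathrm{ReLU}(\cdot)\mW$ the bound $r_\ell\le\#\{k:z_k>0\}$ still holds for all of $\mJ^{(\ell)}$, since a product's rank is bounded by each factor's rank, and only the zero-row sparsity and the singular-value comparison genuinely need ReLU to be outermost (e.g.\ $A=(1\ \ 1)$, $B=(1,-1)^\top$, $P=\mathrm{diag}(1,0)$ give $\sigma_1(APB)=1>0=\sigma_1(AB)$).
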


The proof of Theorem~\ref{thm:local_measure_contraction_factor} is provided in Appendix~\ref{app:local_measure_contraction_factor}. 
Corollary~\ref{thm:ReLU_prism} provides a detailed explanation of the ReLU activation function.
Having characterized the local properties via the singular values of the layer Jacobians, we integrate this local view with the global perspective. 
This requires an abstract mathematical technique: constructing a global partition from local pieces is a common approach, even foundational to calculus.
By Proposition~\ref{pro:representation_manifold}, the piecewise linearity of the GFM network implies that the input manifold $\mathcal{M}_0$ is partitioned into multiple linear regions.

\begin{definition}[Linear Region Partition] 
\label{def:linear_region_partition} 
For each layer $\ell \in {1, \dots, L}$, let $\Omega^{(\ell)}$ be the set of polytopic regions in $\sH^{(\ell-1)}$ on which the function $F^{(\ell)}$ is linear. 
The GFM network $\Phi = F^{(L)} \circ \cdots \circ F^{(1)}$ defines a recursive partition of the input manifold $\mathcal{M}_0$ into cells $\{ C_k \}$, where each cell $C_k$ is a connected subset of $\mathcal{M}_0$ such that there exists a sequence of regions $R_1 \in \Omega^{(1)}, R_2 \in \Omega^{(2)}, \ldots, R_L \in \Omega^{(L)}$ satisfying the following sequential compatibility condition:
\begin{equation}
    \begin{gathered}
    C_k \subseteq R_1, F^{(1)}(C_k) \subseteq R_2, F^{(2)}(F^{(1)}(C_k)) \subseteq R_3, \\
    \ldots, F^{(L-1)} \circ \cdots \circ F^{(1)}(C_k) \subseteq R_L,
    \end{gathered}
\end{equation}
and on each cell $C_k$, the full network map $\Phi$ is linear. The total number of $\{ C_k \}$ is related to the specific architecture and parameters of the GFM network.
\end{definition}

\begin{theorem}[Prismatic Folding and Intrinsic Dimension]
\label{thm:prismatic_folding_and_intrinsic_dimension}
The global map $\Phi: \mathcal{M}_0 \to \mathcal{M}^{(L)}$ is piecewise linear. 
The intrinsic dimension of the final representation space is bounded by the maximum over linear regions of the minimum rank achieved across layers: 
\begin{equation}
    d_{\text{int}}(\mathcal{M}^{(L)}) \leq \max_{k} \min_{\ell} \text{rank}(\mJ^{(\ell)}|_{\Phi^{(\ell-1)}(C_k)}).
\end{equation} 
Furthermore, the map $\Phi$ is piecewise constant on its rank. 
The final output $\mathcal{M}^{(L)}$ is a prismatic space in $\mathbb{R}^{N \times d_L}$, likely with an intrinsic dimension much lower than that of $\mathcal{M}_0$.
\end{theorem}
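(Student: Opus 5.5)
The plan is to work cell by cell, using the partition $\{C_k\}$ of Definition~\ref{def:linear_region_partition}, and then assemble the global statement from the local ones. Piecewise linearity of $\Phi$ is already contained in Proposition~\ref{pro:representation_manifold}: a composition of finitely many piecewise linear maps is piecewise linear. The pieces of $\Phi$ are obtained by pulling back each $\Omega^{(\ell)}$ through $\Phi^{(\ell-1)}$, and intersecting these pullbacks gives a finite refinement whose intersection with $\mathcal{M}_0$ yields the cells $C_k$. Since $\mathcal{M}_0$ is compact and there are finitely many polyhedral regions per layer, only finitely many cells arise. So the first step is simply to record that $\Phi|_{C_k}$ is affine, with linear part given by the chain rule
\begin{equation}
\mJ_\Phi|_{C_k} = \mJ^{(L)}|_{\Phi^{(L-1)}(C_k)} \cdots \mJ^{(1)}|_{C_k},
\end{equation}
where every factor is constant on the relevant image by the sequential compatibility condition.

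For the dimension bound, I would use the elementary inequality $\text{rank}(\mA\mB) \le \min(\text{rank}\,\mA, \text{rank}\,\mB)$, applied inductively to the product above. This gives
\begin{equation}
\text{rank}(\mJ_\Phi|_{C_k}) \le \min_\ell \text{rank}(\mJ^{(\ell)}|_{\Phi^{(\ell-1)}(C_k)}) =: m_k.
\end{equation}
Hence $\Phi(C_k)$ lies in an affine subspace of dimension at most $m_k$. An affine (hence Lipschitz) image of a subset of a smooth manifold has Hausdorff dimension at most the rank of the linear part, because the image is contained in the image of the affine map restricted to an $m_k$-dimensional subspace. Consequently $d_{\text{int}}(\Phi(C_k)) \le m_k$. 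Finally, I would write $\mathcal{M}^{(L)} = \bigcup_k \Phi(C_k)$ as a finite union and use the fact that the (Hausdorff) dimension of a finite union is the maximum of the dimensions of its pieces. This yields $d_{\text{int}}(\mathcal{M}^{(L)}) \le \max_k m_k$.

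The remaining assertions come quickly. The rank of $\mJ_\Phi$ is constant on each cell, since the Jacobian itself is constant there, so $\Phi$ is piecewise constant in rank. The set $\mathcal{M}^{(L)}$ is prismatic by Definition~\ref{def:prismatic_manifold}, taking $\mathcal{N}=\mathcal{M}_0$ and $f=\Phi$. The phrase ``likely much lower'' is heuristic, and I would justify it only informally via Corollary~\ref{thm:ReLU_prism}: each ReLU pattern zeroes coordinates, so $\min_\ell$ tends to fall below $D_0$.

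I expect the main obstacle to be making ``intrinsic dimension'' precise for a non-manifold image with singularities. My choice is to interpret $d_{\text{int}}$ as Hausdorff dimension, which is consistent with the use of $\mathcal{H}^s$ in Theorem~\ref{thm:local_measure_contraction_factor}. A second point to handle is the measure-zero boundaries between cells, where $\Phi$ is not differentiable. I would handle these by taking the $C_k$ closed, so that they cover $\mathcal{M}_0$, and noting that continuity of $\Phi$ keeps boundary images inside the closures of the affine pieces, which does not raise the dimension. A sharper bound would also incorporate $D_0$, as $\min(D_0, m_k)$, since the tangent space of $C_k$ has dimension at most $D_0$, but this is not needed for the stated inequality.
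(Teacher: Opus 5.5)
Your proposal is correct and follows essentially the same route as the paper's proof. Both arguments use the constancy of the layer Jacobians on each cell, then apply the rank inequality for compositions (Lemma~\ref{lem:rank_inequality_composition_linear_maps}) inductively, place each $\Phi(C_k)$ in an affine subspace of dimension at most $\min_\ell \text{rank}(\mJ^{(\ell)}|_{\Phi^{(\ell-1)}(C_k)})$, and take the maximum over the finite union $\mathcal{M}^{(L)} = \bigcup_k \Phi(C_k)$. Your only departures are reading $d_{\text{int}}$ as Hausdorff dimension and checking prismaticity directly from Definition~\ref{def:prismatic_manifold} rather than through Proposition~\ref{pro:representation_manifold}, which carries an injectivity hypothesis; both are harmless and, if anything, slightly cleaner.
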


The proof of Theorem~\ref{thm:prismatic_folding_and_intrinsic_dimension} in Appendix~\ref{app:prismatic_folding_and_intrinsic_dimension} provides a method for analyzing the upper bound of the intrinsic dimension of the output prismatic space.
This bound is analytical, derived from the partition of the input manifold induced by the GFM network as defined in Definition~\ref{def:linear_region_partition}, making it difficult to compute numerically.
Building on the local measure computation derived in Theorem~\ref{thm:local_measure_contraction_factor}, we formulate the definition of a global measure on the prismatic space in the following theorem.

\begin{theorem}[Measure of the Final Prismatic Space]
\label{thm:measure_of_final_prismatic_manifold}
Assume the piecewise linear map $\Phi = F^{(L)} \circ \cdots \circ F^{(1)}$ is injective on the partition ${C_k}$ of the input manifold $\mathcal{M}_0$, where each $C_k$ is a cell in the linear region partition. 
Then, the $d_{\text{int}}$-dimensional Hausdorff measure of the final prismatic space $\mathcal{M}^{(L)} = \Phi(\mathcal{M}_0)$ is given by:
\begin{equation}
\begin{gathered}
    \mathcal{H}^{d_{\text{int}}}(\mathcal{M}^{(L)}) = \sum_k \mathcal{H}^{d_{\text{int}}}(\Phi(C_k)) \\
    = \sum_k \Big( \prod_{\ell=1}^L \prod_{i=1}^{d_{\text{int}}} \sigma_{i,k}^{(\ell)} \Big) \mathcal{H}^{d_{\text{int}}}(C_k),
\end{gathered}
\end{equation}
where for each layer $\ell$ and cell $C_k$, $\sigma_{i,k}^{(\ell)}$ for $i=1,\dots,d_{\text{int}}$ are the $d_{\text{int}}$ largest singular values of the Jacobian $\mJ^{(\ell)}$ of $F^{(\ell)}$ restricted to the tangent space of $\Phi^{(\ell-1)}(C_k)$ (which is $d_{\text{int}}$-dimensional). 
If $\Phi$ is not injective, the formula provides an upper bound.
\end{theorem}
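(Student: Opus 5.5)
The plan is to establish the two equalities separately: additivity over cells first, then a per-cell computation via the chain rule and Theorem~\ref{thm:local_measure_contraction_factor}.

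\textbf{Additivity over cells.} By Definition~\ref{def:linear_region_partition}, $\mathcal{M}_0=\bigcup_k C_k$ with finitely many cells. Adjacent cells meet only along boundaries lying in finitely many hyperplane pieces, so $C_k\cap C_{k'}$ has $\mathcal{H}^{d_{\text{int}}}$-measure zero for $k\neq k'$. After discarding a null set I may therefore treat the cells as disjoint. Under the injectivity assumption (read as global injectivity of $\Phi$, so that the images $\Phi(C_k)$ are pairwise disjoint up to images of null sets), countable additivity of $\mathcal{H}^{d_{\text{int}}}$ on Borel sets gives
\[
\mathcal{H}^{d_{\text{int}}}(\Phi(\mathcal{M}_0))=\sum_k \mathcal{H}^{d_{\text{int}}}(\Phi(C_k)).
\]
Images of null sets remain null because $\Phi$ is Lipschitz, being piecewise linear with finitely many pieces. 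If $\Phi$ is not injective, subadditivity yields only $\le$, which gives the final upper-bound claim.

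\textbf{Per-cell computation.} Fix $C_k$, a $d_{\text{int}}$-dimensional piece of the manifold on which $\Phi$ is affine. I will factor $\Phi|_{C_k}=F^{(L)}\circ\cdots\circ F^{(1)}$, where each $F^{(\ell)}$ is affine on $\Phi^{(\ell-1)}(C_k)$ with constant Jacobian $\mJ^{(\ell)}$. Let $T_{\ell-1}$ be the $d_{\text{int}}$-dimensional tangent space of $\Phi^{(\ell-1)}(C_k)$. Each linear map sends $T_{\ell-1}$ onto $T_\ell$, and injectivity keeps the dimension equal to $d_{\text{int}}$. I then apply Theorem~\ref{thm:local_measure_contraction_factor} layer by layer: the measure scales by the $d_{\text{int}}$-Jacobian of $\mJ^{(\ell)}|_{T_{\ell-1}}$, namely $\sqrt{\det(\mJ^\top\mJ)}$ restricted to $T_{\ell-1}$, which equals the product of the singular values of that restriction. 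Chaining over $\ell$ multiplies these factors:
\[
\mathcal{H}^{d_{\text{int}}}(\Phi(C_k))=\prod_{\ell=1}^{L}\prod_{i=1}^{d_{\text{int}}}\sigma_{i,k}^{(\ell)}\,\mathcal{H}^{d_{\text{int}}}(C_k).
\]
One subtlety is that $C_k$ is a curved subset of $\mathcal{M}_0$, not a flat subspace. I would handle this with the area formula: since $\Phi$ is linear on the ambient region, its tangential Jacobian is $\mJ\,|_{T_xC_k}$ at each point. If the tangent planes vary, the factor becomes an integral over $C_k$, and the stated formula holds exactly when the tangent space is constant on the cell, which is the paper's reading of ``restricted to the tangent space of $\Phi^{(\ell-1)}(C_k)$.'' Otherwise it holds with pointwise factors integrated, which can be made exact by refining the cells into sufficiently small pieces.

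\textbf{Main obstacle.} The hard step is justifying that the $\sigma_{i,k}^{(\ell)}$ are the singular values of $\mJ^{(\ell)}$ restricted to $T_{\ell-1}$, and not the top $d_{\text{int}}$ singular values of the full $\mJ^{(\ell)}$. Theorem~\ref{thm:local_measure_contraction_factor} gives the latter only when $T_{\ell-1}$ is spanned by the top right singular vectors. I would state the result with the restricted map $\mJ^{(\ell)}|_{T_{\ell-1}}:T_{\ell-1}\to T_\ell$ and apply the theorem to that map: its singular values are exactly the ones in the statement, and its domain is spanned by its own top right singular vectors, so the hypothesis of Theorem~\ref{thm:local_measure_contraction_factor} is met. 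Multiplicativity across layers then follows because $\sqrt{\det(\mJ^\top\mJ)}$ restricted to a subspace is multiplicative under composition of injective linear maps between $d_{\text{int}}$-dimensional spaces, by the Cauchy–Binet determinant identity.
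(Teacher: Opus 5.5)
Your proposal is correct and follows essentially the same route as the paper. On each cell you apply Theorem~\ref{thm:local_measure_contraction_factor} layer by layer to the Jacobian restricted to the $d_{\text{int}}$-dimensional tangent space and chain the factors by induction. You then sum over cells using additivity of $\mathcal{H}^{d_{\text{int}}}$ on the disjoint images under injectivity, and subadditivity otherwise.

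Your explicit treatment of the restricted Jacobian and of curved cells is more careful than the paper's proof, which tacitly treats each cell as flat. One correction: refining the cells only recovers the area-formula integral in the limit, so it does not make the formula exact. For a connected curved cell, exactness follows instead from the mean value theorem for integrals, taking the tangent planes at a suitable point of $C_k$.
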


The proof of Theorem~\ref{thm:measure_of_final_prismatic_manifold} is provided in Appendix~\ref{app:measure_of_final_prismatic_manifold}.
This theorem precisely quantifies the prismatic effect: the total {\em volume} of the final representation is the sum of the volumes of all fragments of the input manifold, each shrunk by the product of the singular values of the Jacobians along its path through the network.

At this point, we have established a mathematical framework (PS-Theory) for analyzing the output prismatic space of GFM. However, corresponding theoretical results on adaptation capacity still require integration with specific adaptation methods, such as graph prompt tuning.

\subsection{Adaptation Capacity of Graph Prompt Tuning}\label{sec:adaptation}
As demonstrated by Lemma 1 in \citet{wang2025graphprompt}, graph prompt tuning methods, such as GPF \citep{fang2024gpf} and All-in-One \citep{sun2023one}, are equivalent to a transformation of the node feature matrix $\mX$. 
This transformation can be simplified to the form $\mX_{\omega } = \tilde{\mX}  + \vc\vp^\top$, where $\vc\geq \mathbf{0}$ is the coefficient vector, $\vp$ is the prompt vector, and $\tilde{\mX}$ can be either $\mX$ or the natural extension of $\mX$: $\left[ \substack{\mX \\ \mathbf{0}} \right]$.
Leveraging the geometric interpretation of this additive formulation, we model graph prompt tuning as a perturbation on the input manifold and measure the adaptation capacity of graph prompt tuning by deriving the prompt efficacy bound.
\begin{definition}[Prompt Perturbation Manifold]
\label{def:prompt_perturbation_manifold}
Assume the original input manifold $\mathcal{M}_0$ is perturbed by a prompt $\mP$, forming a compact smooth manifold $\mathcal{M}_0(\mP)$, e.g., $\mathcal{M}_0(\mP) = \{ \mX + \mP \mid \mX \in \mathcal{M}_0 \}$. 
The prompt space $\mathcal{P}$ defines a family of manifolds: $\{ \mathcal{M}_0(\mP) \mid \mP \in \mathcal{P} \}$. (See Appendix~\ref{app:prompt_perturbation_manifold} for details.)
\end{definition}

\begin{theorem}[The Prompt Efficacy Bound]
\label{thm:prompt_efficacy_bound} 
The adaptation capacity of a prompt $\mP$ to influence model output is bounded by the measure and diameter of $\mathcal{M}^{(L)}(\mP)$:
\begin{equation}
\begin{gathered}
    % \text{(Measure Bound)} \quad 
    \mathcal{H}^{d_{\text{int}}}(\mathcal{M}^{(L)}(\mP)) \leq \Big( \sup_{k} \prod_{\ell=1}^L \prod_{i=1}^{d_{\text{int}}} \sigma_{i, k}^{(\ell)} \Big) \cdot \mathcal{H}^{d_{\text{int}}}(\mathcal{M}_0(\mP)), 
\end{gathered}
\end{equation}
\begin{equation}
\begin{gathered}
    % \text{(Diameter Bound)} \quad 
    \text{diam}(\mathcal{M}^{(L)}(\mP)) \leq \Big( \prod_{\ell=1}^L \sup_k |\mJ^{(\ell)}_k |_{\text{op}} \Big) \cdot \text{diam}(\mathcal{M}_0(\mP)), 
\end{gathered}
\end{equation}
where $\text{diam}(\mathcal{M}) = \sup_{x,y \in \mathcal{M}} \|x - y\|$ denote the diameter of a set $\mathcal{M}$ ,$|\cdot|_{\text{op}}$ is the spectral norm (the largest singular value), $\sigma_{i,k}^{(\ell)}$ are the singular values of the Jacobian of the $\ell$-th layer in the $k$-th linear region, and $d_{\text{int}}$ is the intrinsic dimension of $\mathcal{M}^{(L)}(\mP)$. 
\end{theorem}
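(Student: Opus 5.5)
We treat the two inequalities separately, and both reduce to earlier results once we note that $\mathcal{M}^{(L)}(\mP) = \Phi(\mathcal{M}_0(\mP))$. Here $\mathcal{M}_0(\mP)$ is again a compact smooth manifold by Definition~\ref{def:prompt_perturbation_manifold}. Hence Proposition~\ref{pro:representation_manifold} applies verbatim: $\Phi$ is piecewise linear, and it induces a linear region partition $\{C_k\}$ of $\mathcal{M}_0(\mP)$ as in Definition~\ref{def:linear_region_partition}. All quantities $\sigma^{(\ell)}_{i,k}$ and $\mJ^{(\ell)}_k$ are understood with respect to this partition.

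\textbf{Measure bound.} We invoke Theorem~\ref{thm:measure_of_final_prismatic_manifold} with $\mathcal{M}_0$ replaced by $\mathcal{M}_0(\mP)$. In the non-injective case that theorem gives
\begin{equation}
\mathcal{H}^{d_{\text{int}}}(\mathcal{M}^{(L)}(\mP)) \leq \sum_k \Big( \prod_{\ell=1}^L \prod_{i=1}^{d_{\text{int}}} \sigma_{i,k}^{(\ell)} \Big) \mathcal{H}^{d_{\text{int}}}(C_k).
\end{equation}
We bound each per-cell factor by its supremum over $k$, which is finite because there are finitely many cells. The cells are disjoint up to boundaries, and these boundaries are lower-dimensional polyhedral pieces intersected with the manifold, so they are $\mathcal{H}^{d_{\text{int}}}$-null. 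Therefore $\sum_k \mathcal{H}^{d_{\text{int}}}(C_k) = \mathcal{H}^{d_{\text{int}}}(\mathcal{M}_0(\mP))$, which yields the first inequality. One subtlety is that $d_{\text{int}}$ here is the intrinsic dimension of the output, which may be smaller than that of $\mathcal{M}_0(\mP)$. In that case the right-hand side may be infinite, and the bound holds trivially. Otherwise we use the convention of Theorem~\ref{thm:measure_of_final_prismatic_manifold}: the singular values are taken on the $d_{\text{int}}$-dimensional tangent directions, and the local factor comes from Theorem~\ref{thm:local_measure_contraction_factor} composed layer by layer.

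\textbf{Diameter bound.} We show that $\Phi$ is Lipschitz with constant $\prod_\ell \sup_k |\mJ^{(\ell)}_k|_{\text{op}}$, and we do it layer by layer. Each $F^{(\ell)}$ is continuous and piecewise linear (Propositions~\ref{pro:piecewise_linear} and \ref{pro:piecewise_linear_layer_map}). Take two points $\mH, \mH'$ and the segment joining them. This segment crosses finitely many polyhedral regions, so it splits into finitely many subsegments, and on each of them $F^{(\ell)}$ is affine with Jacobian $\mJ^{(\ell)}_k$. By continuity and the triangle inequality,
\begin{equation}
\|F^{(\ell)}(\mH)-F^{(\ell)}(\mH')\| \leq \sup_k |\mJ^{(\ell)}_k|_{\text{op}} \, \|\mH-\mH'\|.
\end{equation}
The Frobenius norm on matrices equals the Euclidean norm on $\text{vec}$, so the operator norm applies directly. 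Composing the $L$ layers multiplies the constants. Taking the supremum over pairs $x,y\in\mathcal{M}_0(\mP)$ then gives the diameter inequality.

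\textbf{Main obstacle.} The diameter part is routine, since it needs convexity of the ambient space and not of the manifold. The delicate step is the measure part: we must justify bounding a product of per-layer singular values along each cell's path by a single supremum, while keeping the dimension $d_{\text{int}}$ consistent across layers. My first approach would be to follow Theorem~\ref{thm:measure_of_final_prismatic_manifold} exactly, so that the cell-wise area formula already handles the restriction to tangent spaces, and the only new ingredient is the elementary bound $\sum_k a_k m_k \leq (\sup_k a_k)\sum_k m_k$.
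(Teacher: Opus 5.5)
Your measure bound follows the paper's argument. You apply Theorem~\ref{thm:measure_of_final_prismatic_manifold}, in its non-injective upper-bound form, cell by cell on $\mathcal{M}_0(\mP)$. You then use subadditivity, pull the cellwise factor out as a supremum, and resum the cell measures to $\mathcal{H}^{d_{\text{int}}}(\mathcal{M}_0(\mP))$. Your remark that the bound is vacuous when $d_{\text{int}}<D_0$ (the right-hand side is $+\infty$) is a correct addition. One small correction: the supremum is finite because the cellwise factor depends only on the finitely many region sequences $(R_1,\dots,R_L)$. It is not because there are finitely many connected cells; a merely smooth manifold can meet a polyhedron in infinitely many components.

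For the diameter bound you take a genuinely different route. The paper combines Lemma~\ref{lem:lipschitz_continuous} and Lemma~\ref{lem:existence_of_Jacobian}. It writes $\mJ_\Phi(\mX)$, a.e.\ on $\mathcal{M}_0(\mP)$, as the product of the layer Jacobians and bounds its operator norm submultiplicatively. It then identifies the Lipschitz constant with $\sup_{\mX\in\mathcal{M}_0(\mP)}|\mJ_\Phi(\mX)|_{\text{op}}$. That identification is a mean-value step, which needs the chord between two points to stay in the domain, and $\mathcal{M}_0(\mP)$ is not convex. The proof of Lemma~\ref{lem:lipschitz_continuous} makes the same jump from pairs inside one region to arbitrary pairs. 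Your chord-splitting argument in the convex ambient space supplies exactly this missing step. It needs no differentiability, chain rule, or null-set bookkeeping, so your route is both more elementary and more rigorous.

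The price is that your supremum runs over every polyhedral region of $F^{(\ell)}$ that such chords cross. That includes regions $\Phi^{(\ell-1)}(\mathcal{M}_0(\mP))$ never visits. This clashes with your opening convention that $\mJ^{(\ell)}_k$ is indexed by the cells $\{C_k\}$. For the diameter bound you must let $k$ range over $\Omega^{(\ell)}$, the linear regions of the $\ell$-th layer, which is compatible with the statement's wording.

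The distinction is not cosmetic. Take a piecewise linear map of $\R^{1\times 2}$ with operator norm at most $1$ on a thin polygonal annulus around the unit circle. Let it fold the circle (diameter $2$) onto a doubled segment of length close to $\pi$, with all large Jacobians on interior regions the circle never meets. This map violates the cell-indexed version of the bound.
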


The proof of Theorem~\ref{thm:prompt_efficacy_bound} is provided in Appendix~\ref{app:prompt_efficacy_bound}.
This theorem reveals that graph prompt tuning is fundamentally constrained by the frozen GFM's backbone architecture. 
The prompt's influence is compressed by the product of layer-wise Jacobian singular values, leading to irreversible information loss. 
% Furthermore, the intrinsic dimension of the prompt's effect is bounded by the network's narrowest spectral bottleneck, limiting its expressivity. 
Since the prismatic, piecewise linear structure of the pre-trained model is immutable, the prompt can only shift the input within this fixed, contracting geometric framework.
Due to space constraints, we provide a detailed analysis of Theorem~\ref{thm:prompt_efficacy_bound} in Appendix~\ref{app:limitations_of_prompt_tuning}.

The establishment of Prismatic Space Theory revolves around graph prompt tuning, yet it offers a more fundamental geometric perspective on how graph foundation models process input manifolds. 
The theory is developed layer by layer, making it not limited to adaptation methods that operate solely at the input data level, but also applicable to the analysis of other types of adaptation approaches.
We leave this as a direction for future research.

%% file: section/4_method.tex
\section{Message Tuning for GFMs} \label{sec4}
In this section, we first utilize the PS-Theory in Section~\ref{sec3} to derive design insights for improving graph prompt tuning.
Then we introduce Message Tuning for GFMs (MTG), a novel lightweight adaptation approach that dynamically guides message fusion across all layers (Figure~\ref{fig:comparison}).
Through the PS-Theory, we prove that MTG’s adaptation capacity can exceed the upper bound of graph prompt tuning.

\subsection{Design Insight}
The analysis in Section~\ref{sec:adaptation} and Appendix~\ref{app:limitations_of_prompt_tuning} implies that the influence of a prompt $\mP$ on the output space is constrained by the compositional prismatic effect of the frozen GFM layers.
An intuitive improvement is to apply prompts at each layer of the model, an idea that aligns with prefix-tuning \citep{li2021prefix} widely used in language models.
Under the framework of PS-Theory, applying prompt perturbation (Definition~\ref{def:prompt_perturbation_manifold}) at each layer can expand the range of the representation space (Definition~\ref{def:representation_manifold}). 
This effectively mitigates the prismatic effect caused by layer mappings that compress the space, thereby enhancing the expressive capacity introduced by the prompt.
However, prefix-tuning is specifically designed for transformer architectures and generative tasks on sequential data, making it not directly applicable to graph-structured data.
In this work, we introduce a general message tuning framework tailored for graph foundation models with diverse backbone architectures.

\subsection{Core Mechanism}
The core mechanism of MTG is to inject a small set of learnable message prototypes into each layer, which then undergo a dynamic fusion with the native messages computed by the model, while the original parameters $\mathbf{\Theta}^{(\ell)} = \{\mathbf{\Theta}^{(\ell)}_a, \mathbf{\Theta}^{(\ell)}_m, \mathbf{\Theta}^{(\ell)}_u\}$ in Eq.(\ref{equ:gfm_layer}) are kept frozen.

\textbf{Learnable Message Prototypes.}
Formally, for each layer $\ell$, we introduce a small set of $m$ learnable prototype vectors, denoted as $\textcolor{color04}{\mM^{(\ell)}} = [\textcolor{color04}{\vm^{(\ell)}_1}, \textcolor{color04}{\vm^{(\ell)}_2}, \dots, \textcolor{color04}{\vm^{(\ell)}_m}]^\top \in \R^{m \times d_{\ell-1}}$.
Then the GFM layer after injecting message prototypes can be expressed as:
\begin{equation}
\begin{gathered}\label{equ:message_prototypes}
    \mH^{(\ell)} = \mathfrak{U}^{(\ell)} \Bigg(\mathfrak{M}^{(\ell)} \bigg(\mathfrak{A}^{(\ell)}\left( \mA, \mH^{(\ell-1)}_{\textcolor{color04}{\mM}}; \mathbf{\Theta}^{(\ell)}_a \right), \\
    \mH^{(\ell-1)}_{\textcolor{color04}{\mM}}; \mathbf{\Theta}^{(\ell)}_m \bigg), \mH^{(\ell-1)}_{\textcolor{color04}{\mM}}; \mathbf{\Theta}^{(\ell)}_u \Bigg),
\end{gathered}
\end{equation}
\begin{equation}\label{equ:dynamic_message_fusion}
\mH^{(\ell-1)}_{\textcolor{color04}{\mM}} = \mathfrak{F}^{(\ell)}(\mH^{(\ell-1)}, \textcolor{color04}{\mM^{(\ell)}}; \textcolor{color04}{\mathbf{\Theta}^{(\ell)}_f}),
\end{equation}
where $\mathfrak{F}^{(\ell)}$ denotes dynamic message fusion operator, $\textcolor{color04}{\mM^{(\ell)}}$ and $\textcolor{color04}{\mathbf{\Theta}^{(\ell)}_f}$ are the learnable parameters. This is equivalent to replacing $\mH^{(\ell-1)}$ in Eq.(\ref{equ:gfm_layer}) with $\mH^{(\ell-1)}_{\textcolor{color04}{\mM}}$ defined in Eq.(\ref{equ:dynamic_message_fusion}), resulting in Eq.(\ref{equ:message_prototypes}).

\textbf{Dynamic Message Fusion.}
While both are message fusion operators, $\mathfrak{F}^{(\ell)}$ differs from $\mathfrak{M}^{(\ell)}$ in that it dynamically fuses learnable message prototypes with the input message representations at each layer, instead of fusing messages between nodes.
We simply employ a linear projection followed by a row-wise Softmax operation to compute the attention for fusing $\mH^{(\ell-1)}$ with $\textcolor{color04}{\mM^{(\ell)}}$.
Thus, $\mathfrak{F}^{(\ell)}$ can be expressed as:
\begin{equation}
\begin{gathered}
\mathfrak{F}^{(\ell)}(\mH^{(\ell-1)}, \textcolor{color04}{\mM^{(\ell)}}; \textcolor{color04}{\mathbf{\Theta}^{(\ell)}_f}) \\
= \mH^{(\ell-1)} + \text{Softmax}(\mH^{(\ell-1)}\textcolor{color04}{\mW_{p}^{(\ell)}}) \cdot \textcolor{color04}{\mM^{(\ell)}}
\end{gathered}
\end{equation}
where $\textcolor{color04}{\mathbf{\Theta}^{(\ell)}_f} = \textcolor{color04}{\mW_{p}^{(\ell)}} \in \R^{d_{\ell-1} \times m}$ is the projection matrix.
We choose a linear projection instead of MLPs or other alternatives because of its higher computational efficiency.
% Alternatively, one may consider replacing linear projections with MLPs or employing dot-product attention, though this may introduce higher computational complexity.

The key distinction between MTG and prefix-tuning \citep{li2021prefix} is that MTG dynamically modifies the core message-passing within each layer via learnable parameters, while prefix-tuning only statically prepends learnable context as external input at each layer, with the relationship between prefixes being represented by the model’s fixed attention modules.
Therefore, MTG is not a simple transfer of prefix-tuning from LLMs to GNN-based GFMs.

\subsection{Theoretical Analysis}
We analyze the adaptation capacity of MTG through the lens of PS-Theory.
Consider a pre-trained GFM $\Phi$ with $L$ layers as defined in Definition~\ref{gfm_layer}, and let $\mathcal{M}_0 \subset \R^{N \times d_0}$ be the compact smooth input manifold with intrinsic dimension $D_0$. 
Let $\mathcal{P}$ be the set of possible prompts for graph prompt tuning, and for any prompt $\mP \in \mathcal{P}$, let $\mathcal{M}_0(\mP)$ be the perturbed input manifold. 
The final representation space under graph prompt tuning is $\mathcal{M}_{\text{PT}}^{(L)}(\mP) = \Phi(\mathcal{M}_0(\mP))$.
\begin{theorem}[Message Tuning Has Greater Adaptation Capacity]
\label{thm:message_adaptation_capacity} 
For message tuning, we inject learnable message prototypes $\mM^{(\ell)} \in \R^{m \times d_{\ell-1}}$ and fusion parameters $\mathbf{\Theta}^{(\ell)}_f$ at each layer $\ell$, resulting in a modified network $\Phi_{\text{MTG}}$. 
Let $\mathcal{M}_{\text{MTG}}^{(L)}$ be the final representation space under message tuning with optimally chosen parameters.
Then for all $\mP \in \mathcal{P}$, the following inequalities hold:
\begin{equation}
% \text{(Intrinsic Dimension)} \quad 
d_{\text{int}}(\mathcal{M}_{\text{MTG}}^{(L)}) \geq d_{\text{int}}(\mathcal{M}_{\text{PT}}^{(L)}(\mP)),
\end{equation}
\begin{equation}
% \text{(Measure)} \quad 
\mathcal{H}^{d_{\text{int}}}(\mathcal{M}_{\text{MTG}}^{(L)}) \geq \mathcal{H}^{d_{\text{int}}}(\mathcal{M}_{\text{PT}}^{(L)}(\mP)),
\end{equation}
\begin{equation}
%\text{(Diameter)} \quad 
\text{diam}(\mathcal{M}_{\text{MTG}}^{(L)}) \geq \text{diam}(\mathcal{M}_{\text{PT}}^{(L)}(\mP)).
\end{equation}
Moreover, there exists a message tuning configuration such that the inequalities are strict.
\end{theorem}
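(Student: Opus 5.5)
The proof is a simulation (containment) argument followed by a perturbation argument. I would show that for every prompt $\mP \in \mathcal{P}$ there is a message tuning configuration whose first layer exactly reproduces the prompt perturbation. The reachable representation space of MTG then contains $\mathcal{M}_{\text{PT}}^{(L)}(\mP)$. Intrinsic dimension, Hausdorff measure and diameter are all monotone under inclusion, so the three non-strict inequalities follow. I read ``optimally chosen parameters'' as taking $\mathcal{M}_{\text{MTG}}^{(L)}$ to be the image under the best configuration, or equivalently the union over configurations that dominates any single one.

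\textbf{Step 1 (embedding prompts).} By Lemma 1 of \citet{wang2025graphprompt}, as recalled in Section~\ref{sec:adaptation}, a graph prompt acts as $\mX_\omega = \tilde{\mX} + \vc\vp^\top$, i.e.\ as $\mathcal{M}_0(\mP) = \{\mX + \mP\}$ in Definition~\ref{def:prompt_perturbation_manifold}.

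In MTG at layer $\ell = 1$, choose $\mW_p^{(1)} = \mathbf{0}$, so that $\text{Softmax}(\mH^{(0)}\mW_p^{(1)}) = \frac{1}{m}\mathbf{1}_N\mathbf{1}_m^\top$. Then choose $\mM^{(1)} = \mathbf{1}_m \vp^\top$, which gives $\mathfrak{F}^{(1)}(\mX) = \mX + \mathbf{1}_N\vp^\top$. This handles GPF-type prompts with $\vc = \mathbf{1}$.

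For general $\vc \ge \mathbf{0}$, or row-dependent $\mP$, I would let $\mW_p^{(1)}$ be nonzero so that the softmax rows realize node-dependent convex weights over prototypes. I would then argue with the linear approximation of the Softmax, invoked as in the Limitations section, that the needed rank-one additive term is attained or approximated arbitrarily well. If only approximation is possible, I would pass to closures: the three quantities are lower semicontinuous enough along the approximating family, or the inequality follows by taking the supremum over configurations.

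For $\ell \ge 2$, set $\mM^{(\ell)} = \mathbf{0}$, so that $\mathfrak{F}^{(\ell)}$ is the identity and Eq.~(\ref{equ:message_prototypes}) reduces to Eq.~(\ref{equ:gfm_layer}). Hence $\Phi_{\text{MTG}}(\mathcal{M}_0) = \Phi(\mathcal{M}_0(\mP)) = \mathcal{M}_{\text{PT}}^{(L)}(\mP)$, and this yields $\ge$ in all three quantities.

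\textbf{Step 2 (strictness).} Start from the configuration in Step 1 and add a perturbation at a deeper layer $\ell^\ast$ where the frozen Jacobians have collapsed rank. The choice of $\ell^\ast$ is guided by Theorem~\ref{thm:prismatic_folding_and_intrinsic_dimension}: take a layer attaining $\min_\ell \text{rank}(\mJ^{(\ell)})$ on a cell $C_k$.

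With $\mW_p^{(\ell^\ast)} \neq \mathbf{0}$, the fusion term $\text{Softmax}(\mH\mW_p)\mM$ is input-dependent. Its Jacobian, $\mathrm{diag}$-softmax derivative $\cdot\, \mW_p$ composed with $\mM$, can be chosen to add directions outside the range of the compressed representation. Adding this to the identity yields a layer-$\ell^\ast$ Jacobian whose image has strictly larger rank on the tangent space. By the cell-wise product formula in Theorem~\ref{thm:measure_of_final_prismatic_manifold}, the singular values and product factors increase, so the intrinsic dimension or Hausdorff measure increases.

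The diameter is the simpler case. A constant shift at layer $L$ does not change the diameter, but an input-dependent term that stretches the direction realizing the diameter does increase it. I would use Theorem~\ref{thm:local_measure_contraction_factor} locally to make this precise.

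\textbf{Main obstacle.} The main obstacle is making Step 1 exact for arbitrary prompts $\vc\vp^\top$ with node-dependent $\vc$ using only the Softmax fusion, and verifying that the strict increase in Step 2 is not undone by later frozen ReLU layers. For the latter, I would first try choosing the added direction inside the active (non-nullified) coordinates of the downstream linear regions, using Corollary~\ref{thm:ReLU_prism}. Small perturbations keep the point in the same cell, so the downstream Jacobians are unchanged and injective on the relevant subspace.
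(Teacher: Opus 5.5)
Your organization differs from the paper's. The paper uses the ``MTG simulates prompt tuning'' reduction essentially only for the diameter. For dimension and measure it argues layer by layer: an invertible $\mJ_{\mathfrak{F}}^{(\ell)}$ plus Sylvester's inequality, and fusion singular-value products $\ge 1$ together with $\mathcal{H}^{d_{\text{int}}}(\mathcal{M}_0(\mP))=\mathcal{H}^{d_{\text{int}}}(\mathcal{M}_0)$, a linearized Softmax and a same-partition assumption. It gets strictness by steering fused inputs into higher-rank linear regions, by fusion factors $>1$, and by the expansive fusion $\mI+c\,\mW_p\mJ_{\text{softmax}}\mW_p^\top$. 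Deriving all three non-strict inequalities from containment plus monotonicity (of dimension, of $\mathcal{H}^s$ for a fixed exponent $s$, and of the diameter) is cleaner and avoids those assumptions. Your construction ($\mW_p^{(1)}=\mathbf{0}$, $\mM^{(1)}=\mathbf{1}_m\vp^\top$, $\mM^{(\ell)}=\mathbf{0}$ for $\ell\ge 2$) is exact. GPF-plus, which is literally $\mX+\mathrm{Softmax}(\mX\mW)\mB$, is also simulated exactly if you take $m$ equal to its basis size. There are, however, two genuine gaps.

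First, Step 1 does not reach beyond such prompts. The Softmax weights are a shared function of the feature row $\vx_i$, not of the node index. A fixed $\mP=\vc\vp^\top$ with non-constant $\vc$ would therefore require $\mathrm{Softmax}(\vx_i\mW_p^{(1)})\mM^{(1)}=c_i\vp^\top$ for every $\mX\in\mathcal{M}_0$. This cannot even be approximated uniformly once the same feature row can occur at two nodes with $c_i\neq c_j$, so your closure/semicontinuity fallback has nothing to act on. Approximation would not give containment under the union reading anyway, and the ``best configuration'' and ``union'' readings are not equivalent. All-in-one's $\tilde{\mX}=[\mX;\mathbf{0}]$ changes $N$ and $\mA$, which no fusion on the original nodes reproduces. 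The paper's claim $\mathfrak{F}^{(1)}\overset{\sim}{\longrightarrow}\mathcal{M}_0(\mP)$ is equally unsupported, but there it only underpins the diameter. In your proof it carries all three inequalities, so you must either restrict $\mathcal{P}$ to realizable prompts or supply a simulation-free argument for dimension and measure, as the paper attempts with its layerwise Jacobian comparison.

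Second, the dimension-strictness step fails at the layer you pick. Suppose $\ell^\ast$ attains $\min_\ell\text{rank}(\mJ^{(\ell)})$ and the perturbation is small enough to stay in the same linear region. Then $\mJ_F^{(\ell^\ast)}$ is unchanged, and Lemma~\ref{lem:rank_inequality_composition_linear_maps} gives $\text{rank}(\mJ_F^{(\ell^\ast)}\mJ_{\mathfrak{F}}^{(\ell^\ast)})\le\text{rank}(\mJ_F^{(\ell^\ast)})$, so that bottleneck cannot rise. Nor can a fusion ``add directions'' to an $r$-dimensional set: a Lipschitz map does not raise dimension, it can only tilt the tangent space.

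The missing idea is the paper's: use the fusion preceding $F^{(\ell^\ast)}$ to move the representation into a different linear region where $\mJ_F^{(\ell^\ast)}$ has larger rank. When the loss comes from misalignment rather than low rank, you can instead tilt the tangent space off $\ker\mJ_F^{(\ell^\ast)}$. Either way you need an explicit non-degeneracy hypothesis. Some such hypothesis is unavoidable, since $d_{\text{int}}(\Phi_{\text{MTG}}(\mathcal{M}_0))\le D_0$ and no strict gain is possible once $\mathcal{M}_{\text{PT}}^{(L)}(\mP)$ is already $D_0$-dimensional. Region-crossing moves are not small, so your same-cell device for protecting the gain from downstream ReLU layers has to be redone. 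Your diameter step is fine under a mild non-degeneracy condition. It needs a direct two-point first-order computation at the last fusion, though, rather than Theorem~\ref{thm:local_measure_contraction_factor}, which concerns measure.
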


In the semantic context of the geometric properties of the prismatic space output by the GFM, this theorem reveals that MTG’s adaptation capacity can exceed the upper bound of graph prompt tuning.
The proof of Theorem~\ref{thm:message_adaptation_capacity} and further theoretical analysis are provided in Appendix~\ref{app:theoretical_analysis_message_tuning}. 

%% file: section/5_experiment.tex
% \vspace{-0.1 in}
\section{Experiments} \label{sec5}

In this section, we conduct extensive experiments to evaluate our proposed MTG on the ProG \citep{zi2024prog} benchmark, addressing the following six research questions:

{\bf Q1}: How does MTG's adaptation capacity compare to graph prompt baselines? (Section~\ref{sub:upper_bound}) \\
{\bf Q2}: How do different pre-training strategies affect MTG's adaptation capability? (Section~\ref{sub:robustness}) \\
{\bf Q3}: Can MTG effectively mitigate negative transfer under different pre-training strategies? (Section~\ref{sub:negative_transfer}) \\
{\bf Q4}: How does MTG's performance vary on different backbone models and model depths? ( Appendix~\ref{sec:backbones}) \\
{\bf Q5}: What is MTG’s computational efficiency (time/memory) compared to graph prompt methods? (Appendix~\ref{sec:efficiency}) \\
{\bf Q6}: How sensitive is MTG to the number $m$ of message prototypes? (Appendix~\ref{sec:sensitivity})

\subsection{Experiment Setting}
\label{sec:experiment_setting}

\textbf{Datasets.}
To investigate its adaptability, we evaluate MTG across 15 graph datasets, spanning 7 node classification datasets (including homophilic, heterophilic, and large-scale graphs) and 8 graph classification datasets (from biological, molecular, and social domains).
We provide the dataset statistics and full details in Appendix~\ref{app:detailed_datasets}.

\textbf{Backbones.}
Since the recent studies \citep{luo2024classicgnns, luo2025classicgnns} have once again validated the powerful capabilities of GCN \citep{kipf2017semisupervised}, we choose it as the backbone to compare MTG with graph prompt baselines. 
We also investigate other commonly used backbones for GFMs, such as GraphSAGE \citep{hamilton2017inductive}, GAT \citep{velivckovic2018graph}, GIN \citep{xu19gin}, and Graph Transformer \citep{ying2021transformers}.
% Details can be found in Appendix~\ref{app:backbones}.
We provide the details in Appendix~\ref{app:backbones}.

\textbf{Pre-training Strategies.}
We adopt six pre-training strategies including DGI \citep{veličković2018dgi} and GraphMAE \citep{hou2022graphmae} (node level), EdgePreGPPT \citep{sun2022gppt} and EdgePreGprompt \citep{liu2023graphprompt} (edge-level), and GraphCL \citep{you2020gcl} with SimGRACE \citep{xia2022SimGRACE} (graph-level).
See Appendix~\ref{app:strategies} for details.
% Following ProG \citep{zi2024prog}, we adopt six representative pre-training strategies across three levels:
% DGI \citep{veličković2018dgi} and GraphMAE \citep{hou2022graphmae} at the node level; 
% EdgePreGPPT \citep{sun2022gppt} and EdgePreGprompt \citep{liu2023graphprompt} for edge-level tasks; 
% GraphCL \citep{you2020gcl} and SimGRACE \citep{xia2022SimGRACE} at the graph level.

\textbf{Graph Prompt Baselines.}
We define negative transfer as adaptation underperforming a supervised learning baseline.
We compare MTG with fine-tuning and major graph prompt methods: GPPT \citep{sun2022gppt}, Gprompt \citep{liu2023graphprompt}, All-in-one \citep{sun2023one}, GPF and GPF-plus \citep{fang2024gpf}.
We provide the details in Appendix~\ref{app:prompt_tuning_baselines}.

\textbf{Implementation.}
We adopt the same data splits and preprocessing procedures as in \citet{zi2024prog}.
To ensure robustness, we repeat sampling five times and report average and standard deviation over these five results.
Hyperparameters are optimized via random search.
A comprehensive description of the experimental setup is provided in Appendix~\ref{app:datasets_experimental_details}.

\subsection{Upper Bound Performance of Message Tuning}\label{sub:upper_bound}
Few-shot node/graph classification is one of the most challenging downstream adaptation tasks for graph foundation models, as it requires learning the characteristics of an entire class using only a few samples. 
In Table~\ref{tab:node_graph_classification}, we present the best results achieved by various adaptation methods across 15 datasets, which represent the top adaptation performance from pre-trained models with different strategies.
This offers an intuitive reflection of the upper bound performance of each type of adaptation method.
The results in the tables demonstrate that our adaptation method MTG achieves a higher performance upper bound across all 15 datasets compared to state-of-the-art graph prompt tuning baselines, which aligns with our theoretical insights.
Despite being trained on only a small number of parameters, MTG still exhibits a substantial advantage over supervised learning and fine-tuning approaches, which underscores its high parameter efficiency.
Among node-level tasks, GPF-plus is the method that performs closest to MTG, while on graph-level tasks, All-in-one ranks as the second most effective method after MTG.
We provide the detailed results in Appendix~\ref{sec:details_experimental_results}.

\begin{table*}[t]
\centering
\caption{
Performance comparison of adaptation methods on 1/3/5-shot node classification and graph classification (accuracy).
The \colorbox{color01}{best} and \colorbox{color02}{second-best} results are highlighted in purple and blue, respectively.
}
\label{tab:node_graph_classification}
\vspace{-0.5em}
\begin{adjustbox}{max width=\textwidth}
\setlength{\tabcolsep}{2pt}
\begin{tabular}{c|cccccccccccccccccccccccc}
\toprule
\multirow{2}{*}{\textbf{Method}} & \multicolumn{3}{c}{\textbf{Cora}} & \multicolumn{3}{c}{\textbf{Citeseer}} & \multicolumn{3}{c}{\textbf{Pubmed}} & \multicolumn{3}{c}{\textbf{Wisconsin}} & \multicolumn{3}{c}{\textbf{Texas}} & \multicolumn{3}{c}{\textbf{Actor}} & \multicolumn{3}{c}{\textbf{ogbn-arxiv}} \\
& 1-shot & 3-shot & 5-shot & 1-shot & 3-shot & 5-shot & 1-shot & 3-shot & 5-shot & 1-shot & 3-shot & 5-shot & 1-shot & 3-shot & 5-shot & 1-shot & 3-shot & 5-shot & 1-shot & 3-shot & 5-shot \\
\midrule
Supervised & 26.56 & 37.79 & 50.25 & 21.78 & 35.18 & 41.22 & 39.37 & 57.33 & 67.88 & 41.60 & 41.03 & 39.43 & 37.97 & 40.78 & 43.91 & 20.57 & 18.62 & 21.92 & 10.99 & 19.03 & 22.38 \\
\midrule
Fine-tuning & 52.61 & 51.97 & 62.66 & 35.05 & 45.08 & 39.54 & 46.74 & 65.40 & \cellcolor{color01} 70.91 & 40.69 & 42.40 & 42.97 & 46.88 & 43.13 & 47.19 & 20.74 & 22.11 & 22.92 & 16.21 & 27.34 & 28.84 \\
\midrule
GPPT & 43.15 & 43.84 & 51.98 & 37.26 & 42.34 & 45.77 &  48.31 & 67.43 & 66.97 & 30.40 & 34.29 & 37.00 & 31.81 & 38.90 & 48.82 & 22.58 & 21.65 & 21.58 & 14.65 & 22.46 & 28.90 \\
\midrule
Gprompt & \cellcolor{color02}56.66 & \cellcolor{color02}63.78 & \cellcolor{color02}69.03 & 53.21 & 60.00 & 66.13 & 39.74 & 66.68 & 67.87 & 77.07 & 92.52 & 78.22 & 33.25 & 39.00 & 39.32 & 25.26 & 29.67 & 34.67 & \cellcolor{color02}75.72 & \cellcolor{color02}73.92 & \cellcolor{color02}85.40 \\
\midrule
All-in-one & 52.39 & 48.09 & 30.36 & 40.41 & 48.09 & 27.93 & 45.17 & 65.79 & 46.16 & 66.29 & 89.62 & 87.16 & 65.49 & 88.69 & 73.28 & 24.61 & 24.23 & 21.49 & 13.16 & 31.15 & 13.01 \\
\midrule
GPF & 38.57 & 34.84 & 35.43 & 31.16 & 25.92 & 25.12 & \cellcolor{color02}49.99 & \cellcolor{color02}71.20 & 68.96 & 78.35 & 93.85 & 98.26 & 73.54 & 95.47 & 98.42 & 28.70 & 37.44 & 44.07 & 65.11 & 59.67 & 71.83 \\
\midrule
GPF-plus & 55.77 & 56.38 & 66.22 & \cellcolor{color02}59.67 & \cellcolor{color02}72.48 & \cellcolor{color02}75.73 & 46.64 & 70.85 & 69.59 & \cellcolor{color02}82.11 & \cellcolor{color02}98.15 & \cellcolor{color02}99.01 & \cellcolor{color02}76.10 & \cellcolor{color02}97.66 & \cellcolor{color01}99.12 & \cellcolor{color02}29.32 & \cellcolor{color01}43.59 & \cellcolor{color02}44.58 & 71.98 & 64.63 & 66.88 \\
\midrule
MTG (Ours) & \cellcolor{color01}58.54 & \cellcolor{color01}66.11 & \cellcolor{color01}71.81 & \cellcolor{color01}62.31 & \cellcolor{color01}73.81 & \cellcolor{color01}76.34 & \cellcolor{color01}50.70 & \cellcolor{color01}71.38 & \cellcolor{color02}70.84 & \cellcolor{color01}83.32 & \cellcolor{color01}98.58 & \cellcolor{color01}99.12 & \cellcolor{color01}79.13 & \cellcolor{color01}98.17 & \cellcolor{color02}98.76 & \cellcolor{color01}29.44 & \cellcolor{color02}37.62 & \cellcolor{color01}45.09 & \cellcolor{color01}75.97 & \cellcolor{color01}76.01 & \cellcolor{color01}85.94 \\
\bottomrule
\end{tabular}
\end{adjustbox}

\vspace{0.5em}
\begin{adjustbox}{max width=\textwidth}
\setlength{\tabcolsep}{2pt}
\begin{tabular}{c|ccccccccccccccccccccccccccc}
\toprule
\multirow{2}{*}{\textbf{Method}} & \multicolumn{3}{c}{\textbf{IMDB-B}} & \multicolumn{3}{c}{\textbf{COLLAB}} & \multicolumn{3}{c}{\textbf{PROTEINS}} & \multicolumn{3}{c}{\textbf{MUTAG}} & \multicolumn{3}{c}{\textbf{ENZYMES}} & \multicolumn{3}{c}{\textbf{COX2}} & \multicolumn{3}{c}{\textbf{BZR}} & \multicolumn{3}{c}{\textbf{D\&D}} \\
& 1-shot & 3-shot & 5-shot & 1-shot & 3-shot & 5-shot & 1-shot & 3-shot & 5-shot & 1-shot & 3-shot & 5-shot & 1-shot & 3-shot & 5-shot & 1-shot & 3-shot & 5-shot & 1-shot & 3-shot & 5-shot & 1-shot & 3-shot & 5-shot \\
\midrule
Supervised & 57.30 & 53.33 & 62.60 & 47.23 & 50.77 & 55.23 & 56.36 & 61.33 & 62.90 & 65.20 & 59.47 & 73.47 & 20.58 & 15.96 & 25.67 & 27.08 & 65.15 & 64.99 & 25.80 & 52.35 & 51.48 & 55.33 & \cellcolor{color02}59.77 & 63.59 \\
\midrule
Fine-tuning & 57.75 & \cellcolor{color02}66.10 & 65.40 & 48.10 & 56.10 & 60.72 & 63.44 & 62.72 & 63.33 & 65.47 & 59.87 & 75.33 & 22.21 & 22.71 & 7.46 & 76.19 & 69.97 & \cellcolor{color01}73.19 & 34.69 & 52.22 & \cellcolor{color02}72.96 & 57.15 & 59.70 & 64.71 \\
\midrule
GPPT & 50.15 & 59.48 & 66.37 & 47.18 & 50.88 & 54.05 & 60.92 & 64.74 & 58.27 & 60.40 & 64.13 & 70.53 & 21.29 & 19.12 & 22.17 & \cellcolor{color02}78.23 & \cellcolor{color02}71.90 & 67.88 & 59.32 & 70.93 & 69.63 & 57.69 & 59.00 & 60.02 \\
\midrule
Gprompt & 54.75 & 64.35 & 66.70 & 48.25 & 54.95 & \cellcolor{color02}60.76 & 59.17 & 64.94 & 62.94 & 73.60 & 66.53 & 73.07 & 22.29 & 22.08 & 21.46 & 54.64 & 51.53 & 53.35 & 55.43 & 54.63 & 59.38 & 57.81 & 55.99 & 58.28 \\
\midrule
All-in-one & \cellcolor{color02}60.07 & 65.67 & 63.62 & \cellcolor{color02}51.66 & \cellcolor{color02}57.12 & 57.86 & \cellcolor{color02}66.49 & \cellcolor{color02}69.84 & \cellcolor{color01}71.37 & \cellcolor{color02}75.20 & \cellcolor{color01}80.00 & \cellcolor{color02}80.93 & \cellcolor{color02}23.96 & 23.96 & 26.71 & 76.14 & 66.06 & 62.95 & 64.38 & 61.98 & 62.78 & \cellcolor{color02}59.72 & 58.96 & 63.44 \\
\midrule
GPF & 59.65 & 65.97 & 67.80 & 47.42 & 53.87 & 59.65 & 63.91 & 63.35 & 63.37 & 68.40 & 74.27 & 74.00 & 22.00 & 23.87 & \cellcolor{color02}27.00 & 65.79 & 65.31 & 66.27 & \cellcolor{color02}71.67 & \cellcolor{color02}74.38 & 61.05 & 59.36 & 59.07 & 61.06 \\
\midrule
GPF-plus & 57.93 & 64.38 & \cellcolor{color02}68.13 & 47.24 & 56.50 & 60.68 & 62.92 & 63.55 & 63.51 & 65.20 & 75.20 & 73.87 & 22.92 & \cellcolor{color02}24.46 & 26.87 & 33.78 & 65.25 & \cellcolor{color02}72.87 & 71.17 & 71.67 & 71.54 & 57.62 & 59.51 & \cellcolor{color02}64.80 \\
\midrule
MTG (Ours) & \cellcolor{color01}62.25 & \cellcolor{color01}66.95 & \cellcolor{color01}69.15 & \cellcolor{color01}52.25 & \cellcolor{color01}57.49 & \cellcolor{color01}63.11 & \cellcolor{color01}66.98 & \cellcolor{color01}70.49 & \cellcolor{color02}70.10 & \cellcolor{color01}75.80 & \cellcolor{color02}78.13 & \cellcolor{color01}81.60 & \cellcolor{color01}26.08 & \cellcolor{color01}29.71 & \cellcolor{color01}35.08 & \cellcolor{color01}78.27 & \cellcolor{color01}73.86 & 71.84 & \cellcolor{color01}74.81 & \cellcolor{color01}74.65 & \cellcolor{color01}76.37 & \cellcolor{color01}60.68 & \cellcolor{color01}60.85 & \cellcolor{color01}66.07 \\
\bottomrule
\end{tabular}
\end{adjustbox}
\end{table*}

\subsection{Robustness Performance of MTG across Pre-training Strategies}\label{sub:robustness}
We further analyze whether MTG exhibits strong robustness across different pre-training strategies through detailed experimental results. 
In Section~\ref{sub:upper_bound}, we have verified that GPF-plus and All-in-one are the best-performing graph prompt tuning methods for few-shot node classification and few-shot graph classification tasks, respectively. 
Therefore, we select these two methods along with Fine-tuning for a more detailed comparison with MTG on 1-shot node/graph classification tasks.
Due to space constraints, we present the detailed result tables in Appendix~\ref{app:information_on_experiments}.
As shown in Tables~\ref{tab:1_shot_node_classification_ntr} and \ref{tab:1_shot_graph_classification_ntr}, Fine-tuning experiences performance collapse on the ogbn-arxiv dataset under the DGI and GraphCL pre-training strategies, with the accuracy dropping as low as 7.21\% and 4.65\%. 
Similarly, GPF-plus exhibits performance degradation on the Cora dataset under the DGI pre-training strategy, achieving an accuracy as low as 17.29\%. 
All-in-one also shows relatively low performance on the IMDB-B dataset under the GraphMAE and EdgePreGprompt pre-training strategies. 
% These results indicate that their adaptability varies significantly across different pre-training strategies.
In contrast, MTG demonstrates more stable performance across all datasets and all pre-training strategies, highlighting broader compatibility and better robustness.

\begin{figure}[t]
\begin{subfigure}{0.235\textwidth}
    \raggedleft  % 右对齐
    \includegraphics[width=\linewidth]{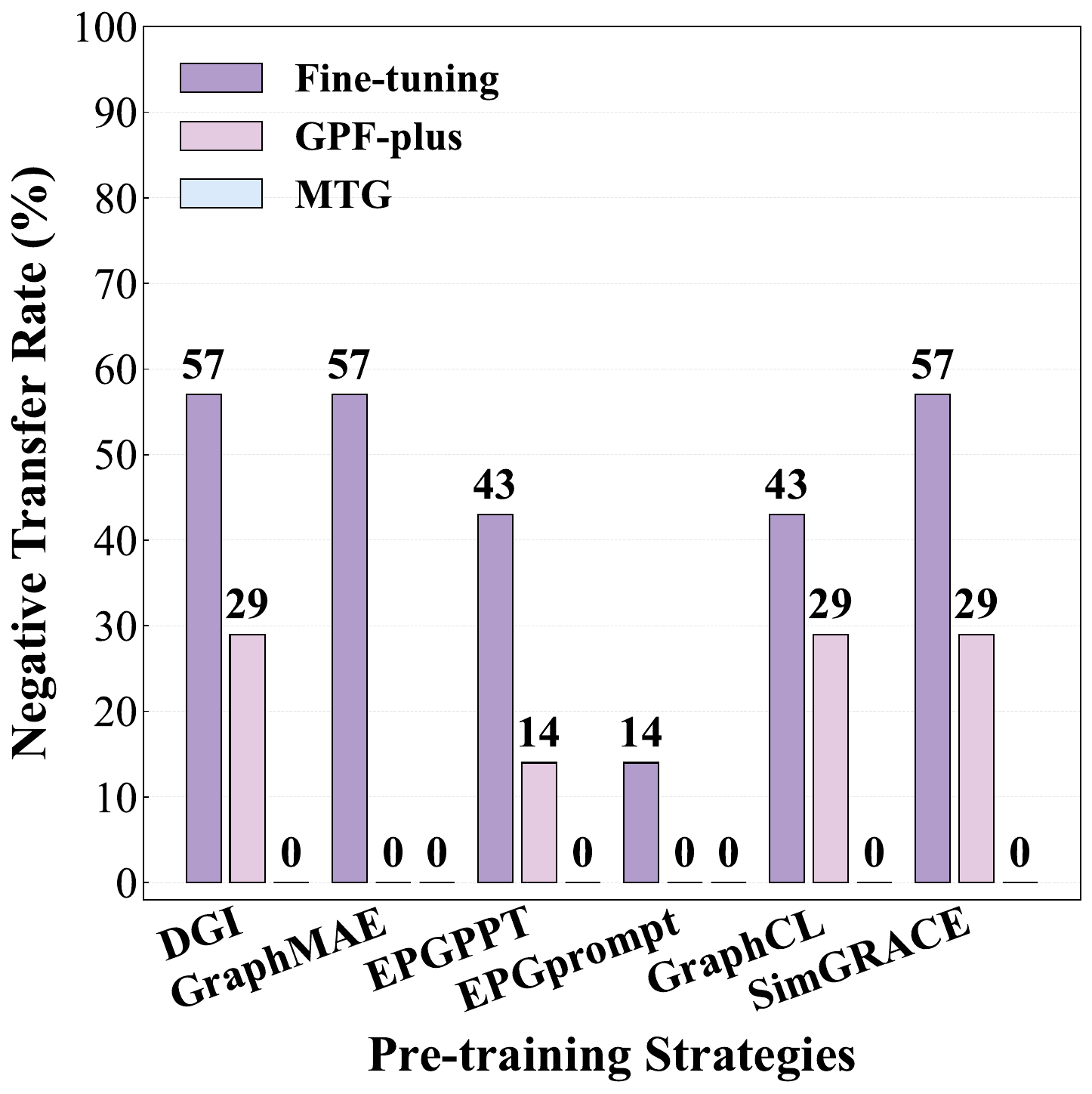}
\end{subfigure}
\hfill
\begin{subfigure}{0.235\textwidth}
    \raggedleft  % 左对齐
    \includegraphics[width=\linewidth]{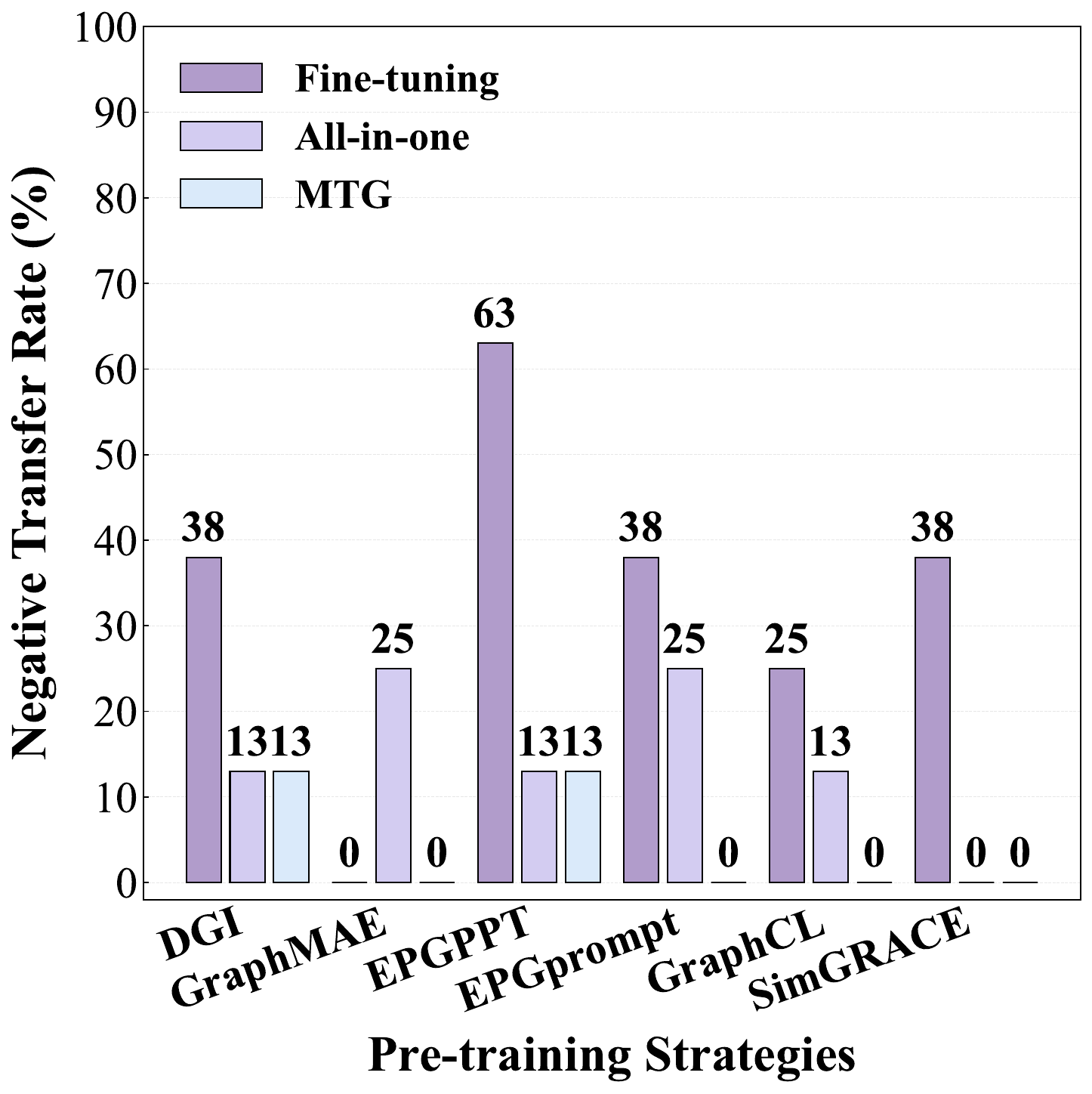}
\end{subfigure}
\caption{NTR comparison across different methods on (left) 1-shot node classification and (right) 1-shot graph classification.}
\label{fig:ntr_comparison}
\vspace{-1em}
\end{figure}

\subsection{Mitigation of Negative Transfer}\label{sub:negative_transfer}
Compared to visual images and natural language, fine-tuning pre-trained models on graph data for downstream tasks is more prone to negative transfer \citep{wang2021afec}. 
Therefore, the ability to effectively mitigate negative transfer serves as an important criterion for evaluating an adaptation method.
As shown in Figure~\ref{fig:ntr_comparison}, MTG achieves a lower NTR (Negative Transfer Rate) than GPF-plus and All-in-one, and is markedly superior to fine-tuning.
It is particularly noteworthy that MTG completely eliminates negative transfer in the 1-shot node classification task. % highlighting its exceptional capability in mitigating such issues
The definition of NTR and the detailed results can be found in Tables~\ref{tab:1_shot_node_classification_ntr} and \ref{tab:1_shot_graph_classification_ntr}.
We provide a preliminary theoretical analysis of why MTG mitigates negative transfer in Appendix~\ref{app:negative_transfer}.

% \begin{figure}[t]
% \centering
% \includegraphics[width=1.0\linewidth]{node_ntr_comparison.pdf}
% \caption{NTR comparison of Fine-tuning (left), GPF-plus (middle), and MTG (right) on 1-shot node classification.}
% \normalsize
% \vspace{-6mm}
% \label{fig:node_ntr_comparison}
% \end{figure}

% \begin{figure}[t]
% \centering
% \includegraphics[width=1.0\linewidth]{graph_ntr_comparison.pdf}
% \caption{NTR comparison of Fine-tuning (left), All-in-one (middle), and MTG (right) on 1-shot graph classification.} 
% \normalsize
% \vspace{-6mm}
% \label{fig:graph_ntr_comparison}
% \end{figure}

\begin{figure}[t]
\centering
\includegraphics[width=\linewidth]{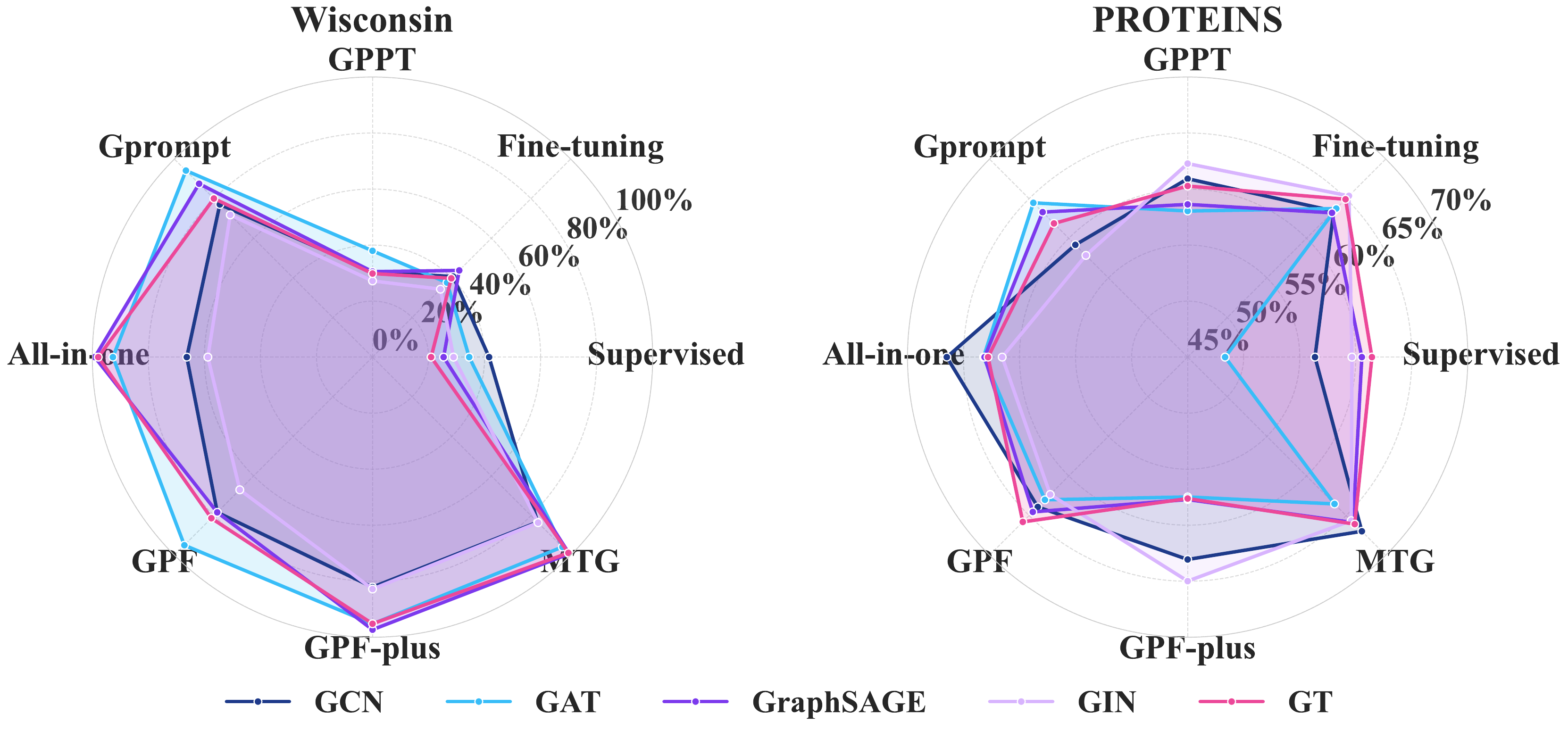}
\caption{Performance comparison across backbone models for 1-shot node classification accuracy (\%) on Wisconsin and 1-shot graph classification accuracy (\%) on PROTEINS.}
\label{fig:backbones_comparison}
\vspace{-1em}
\end{figure}

\subsection{Additional Experiments}
Additional experiments are presented in Appendix~\ref{app:information_on_experiments}, including the performance of MTG across \textbf{different backbone models and model depths} (Appendix~\ref{sec:backbones}), a comparative analysis of \textbf{computational efficiency} between MTG and graph prompt methods (Appendix~\ref{sec:efficiency}) and a \textbf{sensitivity analysis} on hyperparameter $m$ (Appendix~\ref{sec:sensitivity}).
Figures~\ref{fig:backbones_comparison} and \ref{fig:sensitivity} provide an overview of selected experimental results, summarizing a performance comparison across different backbone models and a comprehensive sensitivity analysis.

\begin{figure}[t]
\centering
\includegraphics[width=\linewidth]{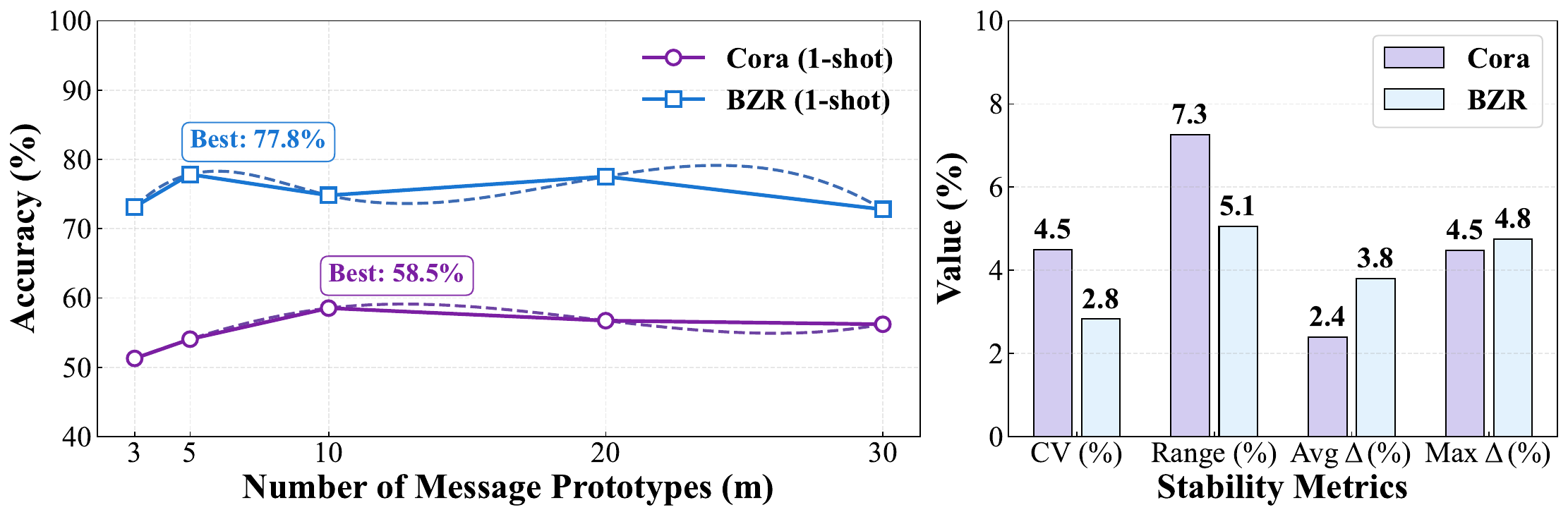}
\caption{Sensitivity analysis of MTG performance to the number of message prototypes $m$: (left) stability trend (accuracy \%) and (right) stability metrics (CV, Range, Avg, Max \%).
} 
\normalsize
\label{fig:sensitivity}
\vspace{-2em}
\end{figure}

%% file: section/6_conclusion.tex
\section{Conclusion}
In this paper, we propose the Prismatic Space Theory (PS-Theory) to quantify the adaptation capacity of graph prompt tuning. 
Inspired by PS-Theory, we introduce Message Tuning for GFMs (MTG), a lightweight adaptation method that injects learnable message prototypes into each layer of the GNN backbone without updating the frozen pre-trained weights.
Theoretical and empirical results demonstrate that MTG outshines graph prompt tuning for GFMs.

%% file: section/appendix.tex
\newpage
\appendix
\onecolumn

\section*{Appendix}

% \section{The Use of Large Language Models} \label{app:use_of_large_language_models}
% In the preparation of this work, we use large language models (LLMs) to assist with proofreading, grammatical correction, and language polishing. 
% The LLM serves solely as a tool to enhance the clarity and readability of our writing.

\section{Related Work} \label{app:related_work}
% \vspace{-0.05 in}

The adaptation of GNN-based GFMs involves tailoring models or adjusting input data to align with specific downstream tasks or domains through techniques such as fine-tuning and graph prompt learning. 
% To the best of our knowledge, prefix-tuning \citep{li2021prefix}, despite its prevalence in language models, remains unexplored in GNN-based GFMs.

% \textbf{Fine-tuning.} Specifically, fine-tuning can be further divided into full-parameter fine-tuning (FPFT) and parameter-efficient fine-tuning (PEFT).
% FPFT \citep{hu2020gptgnn, qiu2020gcc, rong2020ssgt, sun2024finetuning} entails training the entire pre-trained model on task-specific data, offering high customization at the cost of substantial computational resources. 
% In contrast, PEFT methods optimize only a subset of parameters, balancing adaptation efficiency with performance.
% For instance, AdapterGNN \citep{li2024adaptergnn} modifies input graphs via parallel adapters around message passing, G-Adapter \citep{gui2024gadapter} integrates graph structure into transformer fine-tuning through graph message passing, and GraphLoRA \citep{yang2025graphlora} enhances efficiency by injecting a low-rank trainable GNN alongside the pre-trained model to address structural distribution gaps while mitigating catastrophic forgetting.
% In this paper, fine-tuning generally refers to FPFT unless otherwise specified.

\textbf{Graph Prompt Learning.} As a lightweight tuning method, graph prompt learning \citep{sun2023gpl} typically freezes pre-trained model parameters while introducing additional learnable components to manipulate data and fill the task gap by reformulating downstream tasks to the pretext.
Graph prompt tuning methods discussed in this paper modify graph topology or node features in the input space to enhance task performance and boost adaptability.
For instance, GPF \citep{fang2024gpf} introduces an optimizable uniform feature vector for all nodes to adapt pre-trained GNNs across strategies, while All-in-one \citep{sun2023one} reformulates node-level and edge-level tasks to graph-level tasks and treats an additional subgraph as a prompt that merges with the node subgraph.
Beyond graph prompt tuning, there exists a broader category of graph prompt methods that insert prompts at various locations within the model.
For instance, GPPT \citep{sun2022gppt} transforms node classification into link prediction via class-specific token pairs, while GraphPrompt \citep{liu2023graphprompt} unifies tasks through subgraph similarity and learns task-specific prompt vectors to adapt the Readout operation, bridging link prediction and downstream tasks.
It should be noted that the scope of our theoretical analysis is limited to graph prompt tuning, whereas the experimental evaluation of MTG extends to a broader set of graph prompt methods for comparative assessment.

\section{Extra Materials for Prismatic Space Theory}
\label{app:PSTheory}

\textbf{Reading Guideline}: Appendix~\ref{app:PSTheory} is organized in strict accordance with the order in which definitions, theorems, and corollaries appear in the main text, serving as a detailed supplement. 
This includes supplementary explanations of definitions, lemmas required for proving theorems, interpretations of theorems, and more. 
\textbf{Some mathematical concepts not explicitly defined or elaborated in this paper can be found in \citet{paul1950measure, greub1975linear, lang1993real, steven2008, lee2011manifolds}.}
We recommend that readers first review the related work, such as other related theoretical works and relevant mathematical textbooks, to establish a theoretical foundation before proceeding through the main text in sequence with the aid of Appendix~\ref{app:PSTheory}.

\textbf{Notation}: The notation used in this paper has been closely aligned with the standardized notation provided in \url{https://github.com/goodfeli/dlbook_notation/}.
A summary of the primary notation used is provided in the table below.

\begin{table*}[hbtp]
\centering
\caption{Primary Notation.}
\label{tab:notation}
\vspace{-0.5em}
%\begin{adjustbox}{max width=\textwidth}
\begin{tabular}{ll}
\cmidrule[1.2pt]{1-2}
\textbf{Notation} & \textbf{Description} \\
\cmidrule{1-2}
$\gG$     & A graph  \\
$\gV$    & The set of nodes \\
$\gE$     & The set of edges \\
$N$     & Number of nodes \\
$\ell$     & Number of layers \\
$\R$     & The set of real numbers \\
$\{0,1\}$     & The set containing 0 and 1 \\
$\mX, \mA, \mH^{(\ell)}$ & The matrices \\
$\mathfrak{A}, \mathfrak{M}, \mathfrak{U}$ & The operators \\
$\mathbf{\Theta}$ & The parameters \\
$\sV, \sS, \sH^{(\ell)}$ & The sets \\
$F^{(\ell)}: \sH^{(\ell-1)} \to \sH^{(\ell)}$ & The map $F^{(\ell)}$ with domain $\sH^{(\ell-1)}$ and range $\sH^{(\ell)}$ \\
$\mathcal{X}, \mathcal{M}$ & The manifold or space \\
$\Phi^{(\ell)} = F^{(\ell)} \circ \cdots \circ F^{(1)}$ & The composition of maps \\
$d_{\text{int}}$ & The intrinsic dimension \\
$\mJ^{(\ell)}(\mH)$ & The Jacobian matrix \\
$\sigma_i^{(\ell)}$ & The singular values \\
$\mathcal{H}^s$ & The $s$-dimensional Hausdorff measure \\
$\{ C_k \}$ & The set of cells \\
$R_1,...,R_L$ & The regions \\
$\Omega^{(\ell)}$ & The set of polytopic regions \\
\cmidrule[1.2pt]{1-2}
\end{tabular}
%\end{adjustbox}
\vspace{-1.2em}
\end{table*}

\subsection{Related Work on Prismatic Space Theory}
\label{app:related_work_prismatic_space_theory}
We introduce, for the first time, the Prismatic Space Theory to provide a unified analysis of adaptation methods for graph foundation models. 
In constructing this theoretical framework, we adopt the perspective of piecewise linear maps, an approach that is not entirely new, as several outstanding theoretical studies have employed similar ideas to analyze ReLU neural networks \citep{arora2018relu,zhang2020linear,liu2023reluneural,fu2025toric,beshkov2025homologytheory}.

Previously, only \citet{wang2025graphprompt} conducted theoretical research on graph prompt tuning for GFMs, explaining why graph prompt tuning works from the perspective of data operations, primarily using mathematical tools from linear algebra, convex optimization, and probability. 
In contrast, our Prismatic Space Theory offers a more profound and fundamental geometric perspective to quantify the upper bound of graph prompt tuning's capability. 

\subsection{Details of Definition~\ref{gfm_layer}}
\label{app:gfm_layer}

The unified GFM layer formulation provided in Definition~\ref{gfm_layer} offers a general framework that encapsulates a wide range of popular GNN architectures. 
The three core operators $\mathfrak{A}^{(\ell)}$ (attention), $\mathfrak{M}^{(\ell)}$ (message fusion), and $\mathfrak{U}^{(\ell)}$ (update) can be instantiated in different ways to recover specific models. 
Below, we delineate how several classic models are special cases of this unified formulation.

\textbf{GraphSAGE \citep{hamilton2017inductive}} employs a uniform (or degree-based) attention weight over the sampled neighborhood, a configurable message aggregation function (e.g., mean, pool, LSTM), and an update function that concatenates the node's previous representation with the aggregated message.

Attention Operator $\mathfrak{A}^{(\ell)}$ often uses a static, uniform attention weight $\frac{1}{|\mathcal{N}(i)|}$ for each sampled neighbor of node $i$, or a learned weight based on node degree in some variants. 
It can be represented as a matrix $\mS^{(\ell)}$:
\begin{equation}
    \mathfrak{A}^{(\ell)}\left(\mA, \mH^{(\ell-1)}; \mathbf{\Theta}^{(\ell)}_a \right) = \mS^{(\ell)}, 
    \quad \mS^{(\ell)}_{ij} = 
    \begin{cases}
        \frac{1}{|\mathcal{N}(i)|} & \text{if } j \in \mathcal{N}(i) \\
        \quad 0 & \text{otherwise}
    \end{cases}
\end{equation}
where $\mathcal{N}(i)$ denotes the sampled neighbors of node $i$ and no parameters are used ($\mathbf{\Theta}^{(\ell)}_a = \varnothing$).

Message Fusion Operator $\mathfrak{M}^{(\ell)}$ aggregates messages from the sampled neighborhood using the specified aggregator $\mathrm{AGGREGATE}^{(\ell)}$ (e.g., mean, pool, LSTM).
For the mean aggregator:
\begin{equation}
    \mathfrak{M}^{(\ell)}\left( \mS^{(\ell)}, \mH^{(\ell-1)}; \mathbf{\Theta}^{(\ell)}_m \right) = \mS^{(\ell)}\mH^{(\ell-1)},
\end{equation}
where parameters $\mathbf{\Theta}^{(\ell)}_m$ depend on the choice of aggregator.

Update Operator $\mathfrak{U}^{(\ell)}$ concatenates the node's previous representation $\mH^{(\ell-1)}$ with the aggregated neighborhood message, applies a linear transformation $\mW^{(\ell)}$, and a non-linear activation function $\sigma$:
\begin{equation}
    \mathfrak{U}^{(\ell)}\left( \mS^{(\ell)}\mH^{(\ell-1)}, \mH^{(\ell-1)}; \mathbf{\Theta}^{(\ell)}_u \right) = \sigma\left( \mW^{(\ell)} \cdot \mathrm{CONCAT}( \mH^{(\ell-1)}, \mS^{(\ell)}\mH^{(\ell-1)}) \right),
\end{equation}
where $\mathbf{\Theta}^{(\ell)}_u = \mW^{(\ell)}$.

The resulting layer formulation for the mean aggregator is:
\begin{equation}
    \mH^{(\ell)} = \sigma\left( \mW^{(\ell)} \cdot \mathrm{CONCAT}( \mH^{(\ell-1)}, \mS^{(\ell)}\mH^{(\ell-1)}) \right).
\end{equation}

\textbf{GAT (Graph Attention Network) \citep{velivckovic2018graph}} introduces a learnable self-attention mechanism to compute dynamic attention weights between nodes.

Attention Operator $\mathfrak{A}^{(\ell)}$ computes pairwise attention coefficients $\alpha_{ij}$ for nodes $i$ and $j$ using a learnable function (a shared attentional mechanism $a$):
\begin{equation}
    e_{ij}^{(\ell)} = a(\mW^{(\ell)} \vh_i^{(\ell-1)}, \mW^{(\ell)} \vh_j^{(\ell-1)}) = \mathrm{LeakyReLU}\left( \mathbf{\vec{a}}^{(\ell)T} [\mW^{(\ell)} \vh_i^{(\ell-1)} \|  \mW^{(\ell)} \vh_j^{(\ell-1)}] \right),
\end{equation}
\begin{equation}
    \alpha_{ij}^{(\ell)} = \mathrm{Softmax}(e_{ij})= \frac{\exp(e_{ij})}{\sum_{k \in \mathcal{N}(i)} \exp(e_{ik})},
\end{equation}
\begin{equation}
    \mathfrak{A}^{(\ell)}\left(\mA, \mH^{(\ell-1)}; \mathbf{\Theta}^{(\ell)}_a \right) = \mA^{(\ell)}_{a}, 
    \quad {\mA^{(\ell)}_{a}}_{ij} = 
    \begin{cases}
        \quad \alpha_{ij}^{(\ell)} & \text{if } j \in \mathcal{N}(i) \\
        \quad 0 & \text{otherwise}
    \end{cases}
\end{equation}
where $\vh_i^{(\ell-1)}$ represents the vector of node $i$ at the $\ell-1$-th layer, ${}^{T}$ represents transposition, $\|$ is the concatenation operation, 
and the attention mechanism $a$ is a single-layer feedforward neural network parametrized by a weight vector $\mathbf{\vec{a}}^{(\ell)}$ ($\mathbf{\Theta}^{(\ell)}_a = \{ \mathbf{\vec{a}}^{(\ell)}, \mW^{(\ell)} \}$).

Message Fusion Operator $\mathfrak{M}^{(\ell)}$ performs a weighted sum of the transformed neighbor features based on the computed attention weights:
\begin{equation}
   \mathfrak{M}^{(\ell)}\left( \mA^{(\ell)}_{a}, \mH^{(\ell-1)}; \mathbf{\Theta}^{(\ell)}_m \right) = \mA^{(\ell)}_{a} \cdot  (\mH^{(\ell-1)} \mW^{(\ell)}),
\end{equation}
where $\mW^{(\ell)}$ is a shared linear transformation applied to every node and is also used by the Attention Operator ($\mathbf{\Theta}^{(\ell)}_m = { \mW^{(\ell)} }$).
The operations of the Attention Operator and the Message Fusion Operator are partially overlapping.

Update Operator $\mathfrak{U}^{(\ell)}$ combines the aggregated representations from multiple attention heads, typically through concatenation (for intermediate layers) or averaging (for the output layer), followed by application of a non-linear activation function $\sigma$ to produce the new node representations:
\begin{equation}
   \mathfrak{U}^{(\ell)}\left( \{ {\mA^{(\ell)}_{a}}^{k} \cdot (\mH^{(\ell-1)} {\mW^{(\ell)}}^{k}) \}_{k=1}^K, \mH^{(\ell-1)}; \mathbf{\Theta}^{(\ell)}_u \right) = \underset{k=1}{\overset{K}{\Big\Vert}} \sigma\left( {\mA^{(\ell)}_{a}}^{k} \cdot (\mH^{(\ell-1)} {\mW^{(\ell)}}^{k}) \right),
\end{equation}
where $\|$ represents concatenation, $K$ represents the number of attention heads and no parameters are used in this operator ($\mathbf{\Theta}^{(\ell)}_u = \varnothing$).
 
The resulting layer formulation is: 
\begin{equation}
   \mH^{(\ell)} = \underset{k=1}{\overset{K}{\Big\Vert}} \sigma\left( {\mA^{(\ell)}_{a}}^{k} \cdot (\mH^{(\ell-1)} {\mW^{(\ell)}}^{k}) \right).
\end{equation}

\textbf{GIN (Graph Isomorphism Network) \citep{xu19gin}} uses a fixed, uniform attention weight for neighbors and a powerful update function based on an MLP to achieve expressiveness equivalent to the Weisfeiler-Lehman graph isomorphism test.

Attention Operator $\mathfrak{A}^{(\ell)}$ employs a static attention weight of $1$ for all neighbors and a weight of $(1 + \epsilon^{(\ell)})$ for the central node itself:
\begin{equation}
   \mathfrak{A}^{(\ell)}\left(\mA, \mH^{(\ell-1)}; \mathbf{\Theta}^{(\ell)}_{a} \right) = \tilde{\mA}_{\epsilon} = \mA + (1 + \epsilon^{(\ell)}) \mI = \tilde{\mA}_{\epsilon},
\end{equation}
where $\mathbf{\Theta}^{(\ell)}_a = { \epsilon^{(\ell)} }$ is a potentially learnable parameter.

Message Fusion Operator $\mathfrak{M}^{(\ell)}$ sums the neighbor messages and the scaled central node's message:
\begin{equation}
   \mathfrak{M}^{(\ell)}\left( \tilde{\mA}_{\epsilon}, \mH^{(\ell-1)}; \mathbf{\Theta}^{(\ell)}_m \right) = \tilde{\mA}_{\epsilon} \mH^{(\ell-1)},
\end{equation} 
where no parameters are used ($\mathbf{\Theta}^{(\ell)}_m = \varnothing$).

Update Operator $\mathfrak{U}^{(\ell)}$ applies a multi-layer perceptron ($\mathrm{MLP}^{(\ell)}$) to the fused message:
\begin{equation}
   \mathfrak{U}^{(\ell)}\left( \tilde{\mA}_{\epsilon} \mH^{(\ell-1)}, \mH^{(\ell-1)}; \mathbf{\Theta}^{(\ell)}_u \right) = \mathrm{MLP}^{(\ell)}\left( \tilde{\mA}_{\epsilon} \mH^{(\ell-1)} \right),
\end{equation}
where $\mathbf{\Theta}^{(\ell)}_u$ are the parameters of the MLP.

The resulting layer formulation is:
\begin{equation}
   \mH^{(\ell)} = \mathrm{MLP}^{(\ell)}\left( \left( \mA + (1 + \epsilon^{(\ell)}) \mI \right) \mH^{(\ell-1)} \right).
\end{equation}

\textbf{GT (Graph Transformer) \citep{ying2021transformers}} enhances the standard transformer architecture to incorporate structural information of graphs, often by augmenting the self-attention mechanism with structural biases.

Attention Operator $\mathfrak{A}^{(\ell)}$ computes the query, key matrices $\mQ^{(\ell)}, \mK^{(\ell)} \in \R^{d_{\ell-1} \times d_\ell}$ via linear projections, 
with the core attention weight $\hat{\mA}^{(\ell)}$ formulated as a sum of standard semantic attention and a structural attention component $\mB^{(\ell)}$ (e.g., from positional encodings, edge features, connectivity patterns or node degrees).
\begin{equation}
   \mQ^{(\ell)} = \mH^{(\ell-1)}\mW_{Q}^{(\ell)}, \quad \mK^{(\ell)} = \mH^{(\ell-1)}\mW_{K}^{(\ell)},
\end{equation}
\begin{equation}
   \hat{\mA}^{(\ell)} = \mathrm{Softmax}\left( \frac{\mQ^{(\ell)} (\mK^{(\ell)})^T}{\sqrt{d_{\ell}}} + \mB^{(\ell)} \right),
\end{equation}
\begin{equation}
   \mathfrak{A}^{(\ell)}\left(\mA, \mH^{(\ell-1)}; \mathbf{\Theta}^{(\ell)}_a \right) = \hat{\mA}^{(\ell)},
\end{equation}
where $\mathbf{\Theta}^{(\ell)}_a$ including the projection weights for $\mQ^{(\ell)}, \mK^{(\ell)}$ and parameters for computing $\mB^{(\ell)}$.

Message Fusion Operator $\mathfrak{M}^{(\ell)}$ computes the value matrice $\mV^{(\ell)} \in \R^{d_{\ell-1} \times d_\ell}$ via linear projection $\mW_{V}^{(\ell)}$ and performs the weighted aggregation of the value vectors using the computed attention matrix $\hat{\mA}^{(\ell)}$:
\begin{equation}
   \mathfrak{M}^{(\ell)}\left( \hat{\mA}^{(\ell)}, \mH^{(\ell-1)}; \mathbf{\Theta}^{(\ell)}_m \right) = \hat{\mA}^{(\ell)} \mV^{(\ell)} = \hat{\mA}^{(\ell)}\cdot (\mH^{(\ell-1)} \mW^{(\ell)}_V),
\end{equation}
where $\mathbf{\Theta}^{(\ell)}_m = { \mW^{(\ell)}_V }$.

Update Operator $\mathfrak{U}^{(\ell)}$ applies a residual connection, layer normalization (LN), a position-wise feed-forward network (FFN), another residual connection, and layer normalization. 
\begin{equation}
   \tilde{\mH}^{(\ell)} = \mathrm{LN}\left( \mH^{(\ell-1)} + \hat{\mA}^{(\ell)}\cdot (\mH^{(\ell-1)} \mW^{(\ell)}_V) \right),
\end{equation}
\begin{equation}
   \mathrm{FFN}^{(l)}(\tilde{\mH}^{(\ell)}) = \sigma\left( \tilde{\mH}^{(\ell)} \mW_1^{(l)} + \vb_1^{(l)} \right) \mW_2^{(l)} + \vb_2^{(l)}
\end{equation}
\begin{equation}
   \mathfrak{U}^{(\ell)}\left( \hat{\mA}^{(\ell)} \mV^{(\ell)}, \mH^{(\ell-1)}; \mathbf{\Theta}^{(\ell)}_u \right) = \mathrm{LN}\left( \tilde{\mH}^{(\ell)} +  \mathrm{FFN}^{(\ell)}(\tilde{\mH}^{(\ell)}) \right),
\end{equation}
where $\mathbf{\Theta}^{(\ell)}_u$ are the parameters of the $\mathrm{FFN}^{(\ell)}$. Multi-head self-attention (MHA) can also correspond to the Update Operator.

The resulting layer formulation is: 
\begin{equation}
   \mH^{(\ell)} = \mathrm{LN}\left( \mathrm{LN}\left( \mH^{(\ell-1)} + \hat{\mA}^{(\ell)} \mV^{(\ell)} \right) +  \mathrm{FFN}^{(\ell)}\left(\mathrm{LN}\left( \mH^{(\ell-1)} + \hat{\mA}^{(\ell)} \mV^{(\ell)} \right)\right) \right).
\end{equation}

This analysis demonstrates that the proposed unified GFM layer provides a powerful and expressive framework that generalizes a broad spectrum of prevalent GNN architectures. 
The specific choices of the operators $\mathfrak{A}^{(\ell)}$, $\mathfrak{M}^{(\ell)}$, and $\mathfrak{U}^{(\ell)}$ determine the particular inductive biases and capabilities of the resulting model.

\subsection{Details of Definition~\ref{def:piecewise_linear_function_matrix}}
\label{app:piecewise_linear}

\begin{definition}[Polyhedral Region]
\label{def:polyhedral_region}
In the context of Euclidean spaces, a polyhedral region (or polyhedron) is a subset of $\R^n$ defined by a finite set of linear inequalities. 
Formally, a set $R \subseteq \R^n$ is a polyhedral region if there exist matrices $\mA \in \R^{m \times n}$ and vectors $\vb \in \R^m$ such that:
\begin{equation}
R = \{ \vx \in \R^n \mid \mA \vx \leq \vb \},
\end{equation}
where the inequality is applied component-wise.
\end{definition}

\begin{remark}
A polyhedral region may be described as the intersection of finitely many closed half-spaces and/or hyperplanes, making it a convex polytope (possibly unbounded). 
In many analytical contexts, polyhedral regions are assumed to be non-empty and may be required to have a non-empty interior to avoid degenerate cases.
\end{remark}

\begin{definition}[Polyhedral Region in Matrix Space]
\label{def:polyhedral_region_matrix}
A set $R \subseteq \R^{N \times d}$ is a polyhedral region if there exists a matrix $\mA \in \R^{m \times Nd}$ and a vector $\vb \in \R^m$ such that:
\begin{equation}
R = \left\{ \mH \in \R^{N \times d} \mid \mA \cdot \text{vec}(\mH) \leq \vb \right\},
\end{equation}
where $\text{vec}(\mH) \in \R^{N d}$ denotes the vectorization of the matrix $\mH$ (i.e., the column vector obtained by stacking the columns of $\mH$). 
The inequality $\leq$ is applied component-wise.
\end{definition}

\begin{remark}
Since the spaces $\R^{\alpha \times \beta}$ and $\R^{\alpha\beta}$ are isomorphic as vector spaces via the vectorization operation $\text{vec}: \R^{\alpha \times \beta} \to \R^{\alpha\beta}$ (which stacks the columns of a matrix into a vector) and its inverse $\text{unvec}: \R^{\alpha\beta} \to \R^{\alpha \times \beta}$, many theorems and proofs in this paper do not strictly distinguish between the matrix form and the vectorized form. 
This isomorphism allows us to apply concepts from Euclidean geometry and measure theory directly to matrix-valued functions by considering their vectorized counterparts, without loss of generality. 
Consequently, in the following analysis, we may interchangeably use matrix or vector representations as convenient, ensuring that all results hold equivalently in both forms.
\end{remark}

\begin{definition}[Piecewise Linear Function]
\label{def:piecewise_linear_function}
A function $f: \R^n \to \R^m$ is said to be piecewise linear if there exists a finite set of polyhedral regions $\{R_i\}_{i=1}^K$ such that $\R^n = \bigcup_{i=1}^K R_i$ and $f$ is affine on each $R_i$, 
i.e., $f(\vx) = \mA_i \vx + \vb_i$ for all $\vx \in R_i$, where $\mA_i \in \R^{m \times n}$ and $\vb_i \in \R^m$.
\end{definition}

\begin{definition}[Jacobian of a Matrix Map]
\label{def:jacobian_matrix}
For a function $F: \R^{N \times d_{\text{in}}} \to \R^{N \times d_{\text{out}}}$ that is differentiable at a point $\mH$, the Jacobian of $F$ at $\mH$ is defined as the Jacobian matrix of the vectorized function. 
Specifically, let $f: \R^{N d_{\text{in}}} \to \R^{N d_{\text{out}}}$ be given by $f(\vh) = \text{vec}(F(\text{unvec}(\vh)))$. 
Then, the Jacobian matrix $\mJ_F(\mH) \in \R^{N d_{\text{out}} \times N d_{\text{in}}}$ is:
\begin{equation}
\mJ_F(\mH) = \frac{\partial f}{\partial \vh} \bigg|_{{\vh} = \text{vec}(\mH)}.
\end{equation}
This matrix contains all first-order partial derivatives of the vectorized output with respect to the vectorized input.
\end{definition}

\begin{lemma}[Composition of Piecewise Linear Functions]
\label{lem:composition_piecewise_linear_function}
If $f: \R^n \to \R^m$ and $g: \R^m \to \R^p$ are piecewise linear functions, then the composition $g \circ f: \R^n \to \R^p$ is also piecewise linear.
\end{lemma}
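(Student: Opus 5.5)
The plan is to reduce the lemma to a refinement of polyhedral partitions. By Definition~\ref{def:piecewise_linear_function}, there are finite families of polyhedral regions $\{R_i\}_{i=1}^{K}$ covering $\R^n$ and $\{S_j\}_{j=1}^{K'}$ covering $\R^m$. On these, $f(\vx)=\mA_i\vx+\vb_i$ for $\vx\in R_i$, and $g(\vy)=\mB_j\vy+\vc_j$ for $\vy\in S_j$.

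For each pair $(i,j)$, I will set
\begin{equation}
T_{ij} = R_i \cap \{\vx \in \R^n \mid \mA_i\vx+\vb_i \in S_j\}.
\end{equation}
The key observation is that $T_{ij}$ is again a polyhedral region. If $S_j=\{\vy \mid \mQ_j\vy\leq \vu_j\}$, then the second set is
\begin{equation}
\{\vx \mid \mQ_j\mA_i\vx \leq \vu_j-\mQ_j\vb_i\},
\end{equation}
which is a preimage of a polyhedron under an affine map and hence defined by finitely many linear inequalities. Intersecting it with $R_i$ just stacks the inequality systems.

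Next I check the covering property. Take any $\vx\in\R^n$. It lies in some $R_i$, and $f(\vx)=\mA_i\vx+\vb_i$ lies in some $S_j$ because the $S_j$ cover $\R^m$. So $\vx\in T_{ij}$, and the finite family $\{T_{ij}\}$ of at most $KK'$ sets covers $\R^n$.

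On each $T_{ij}$ the composition is
\begin{equation}
g(f(\vx))=\mB_j(\mA_i\vx+\vb_i)+\vc_j=\mB_j\mA_i\vx+(\mB_j\vb_i+\vc_j),
\end{equation}
which is affine. This is exactly Definition~\ref{def:piecewise_linear_function} for $g\circ f$.

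The only subtle point is overlapping region boundaries: if $\vx$ lies in two regions, $f$ must have a single value there, and we need $g\circ f$ well defined. Since $f$ and $g$ are functions, their affine formulas agree on the overlaps, so the formula for $g\circ f$ on any $T_{ij}$ containing $\vx$ always returns the true value $g(f(\vx))$. Some $T_{ij}$ may be empty, which is harmless and can be discarded. I expect no real obstacle beyond stating the affine-preimage step carefully.
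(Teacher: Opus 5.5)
Your proof is correct and follows the same route as the paper: refine the two partitions into the sets $R_i \cap f^{-1}(S_j)$, check that these are polyhedral and cover $\R^n$, and observe that $g\circ f$ is affine on each. Your version is slightly more careful than the paper's. You write the second set as the preimage of $S_j$ under the affine map $\vx \mapsto \mA_i\vx+\vb_i$, which makes polyhedrality explicit, whereas the paper loosely treats $f^{-1}(S_j)$ itself as polyhedral.
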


\begin{proof}
Since $f$ is piecewise linear, there exists a partition of $\R^n$ into polyhedral regions $R_i$ such that $f$ is affine on each $R_i$. 
Similarly, $g$ is piecewise linear with polyhedral regions $S_j$ in $\R^m$ where $g$ is affine. 
For each $i$ and $j$, consider the set $R_i \cap f^{-1}(S_j)$. 
Since $f$ is affine on $R_i$, $f(R_i)$ is a polyhedral set, and $f^{-1}(S_j) \cap R_i$ is polyhedral (as the intersection of polyhedral sets). 
On $R_i \cap f^{-1}(S_j)$, $g \circ f$ is affine because $f$ is affine and $g$ is affine on $S_j$. 
The collection of all such sets $R_i \cap f^{-1}(S_j)$ covers $\R^n$, and there are finitely many such sets. 
Thus, $g \circ f$ is piecewise linear.
\end{proof}

\subsection{Proof of Proposition~\ref{pro:piecewise_linear}}
\label{app:piecewise_linear_proof}

\begin{proof}
Based on the derivations in Appendix~\ref{app:related_work_prismatic_space_theory}, we observe that most operators primarily involve matrix multiplication or can be approximated by matrix multiplications. 
Some operators further apply a piecewise linear activation function (e.g., ReLU or LeakyReLU) or an MLP based on ReLU after the matrix multiplication. 
As a result, these operators are generally piecewise linear functions. 
Their piecewise linearity stems directly from the piecewise linear activation functions used.
Since the activation functions are continuous, the continuity of these operators is obvious.
Piecewise linear functions are differentiable a.e. because they are differentiable in the interior of each polyhedral region (where they are affine) and non-differentiable only on the boundaries, which have Lebesgue measure zero.
\end{proof}

\begin{remark}
Linear functions are considered a special case of piecewise linear functions. 
If operator $\mathfrak{A}^{(\ell)}$ computes scaled dot-product attention, it somewhat exceeds the scope of our theoretical framework.
Alternatively, approximating the computation of dynamic attention using piecewise linear mappings may, from a mathematical limit perspective, exhibit certain compatibility with the theoretical framework presented in this paper. 
This constitutes a promising direction for future research aimed at extending the current theory.
\end{remark}

\subsection{Proof of Proposition~\ref{pro:piecewise_linear_layer_map}}
\label{app:piecewise_linear_layer_map}

\begin{proof}
The layer map $F^{(\ell)}$ is defined by the composition of the operators $\mathfrak{A}^{(\ell)}, \mathfrak{M}^{(\ell)}, \mathfrak{U}^{(\ell)}$, as given in Definition~\ref{gfm_layer}.
This can be viewed as a function $F^{(\ell)}$ that maps $\mH^{(\ell-1)}$ to $\mH^{(\ell)}$. 
Since each operator is piecewise linear, and by Lemma~\ref{lem:composition_piecewise_linear_function}, the composition of piecewise linear functions is itself piecewise linear. 
Therefore, $F^{(\ell)}$ is a piecewise linear function.
More formally, let $f_1 = \mathfrak{A}^{(\ell)}$, $f_2 = \mathfrak{M}^{(\ell)}$, and $f_3 = \mathfrak{U}^{(\ell)}$. 
Then $F^{(\ell)} = f_3 \circ ( f_2 \circ (f_1, \text{id}), \text{id})$, where $\text{id}$ denotes the identity function (which is linear and thus piecewise linear). 
The composition involves piecewise linear functions and Cartesian products (which preserve piecewise linearity), so $F^{(\ell)}$ is piecewise linear.

By Definition~\ref{def:piecewise_linear_function_matrix}, there exists a finite set of polyhedral regions $\{R_i\}_{i=1}^K$ such that $\R^{N \times d_{\ell-1}} = \bigcup_{i=1}^K R_i$ and $F^{(\ell)}$ is affine on each $R_i$, i.e., $F^{(\ell)}(\mH) = \text{unvec}(\mA_i \cdot \text{vec}(\mH) + \vb_i)$ for all $\mH \in R_i$, where $\mA_i \in \R^{N d_{\ell} \times N d_{\ell-1}}$ and $\vb_i \in \R^{N d_{\ell}}$.
An affine function is differentiable everywhere in the interior of its region. 
The polyhedral regions $R_i$ are closed and have boundaries that are sets of measure zero (since they are defined by finite sets of linear inequalities). 
Therefore, $F^{(\ell)}$ is differentiable almost everywhere (a.e.)—specifically, in the interior of each region $R_i$.
At any point $\mH$ where $F^{(\ell)}$ is differentiable (i.e., in the interior of some $R_i$), the derivative is given by the constant matrix $\mA_i$. The Jacobian matrix $\mJ^{(\ell)}(\mH)$ is precisely this matrix $\mA_i$, which exists and has dimensions $\R^{N d_{\ell} \times N d_{\ell-1}}$ (since the input space has dimension $N d_{\ell-1}$ and the output space has dimension $N d_{\ell}$).
Hence, for any point $\mH$ where $F^{(\ell)}$ is differentiable, the Jacobian $\mJ^{(\ell)}(\mH)$ exists.
\end{proof}

\subsection{Details of Definition~\ref{def:representation_manifold}}
\label{app:representation_manifold_details}

\begin{definition}[Compact Smooth Manifold] 
A set $\mathcal{M} \subset \R^n$ is said to be a \textit{compact smooth manifold} of dimension $D_0$ if it satisfies the following two conditions:
\begin{enumerate}[leftmargin=15pt, itemindent=1.5em]
    \item (Smooth Structure) For every point $p \in \mathcal{M}$, there exists an open neighborhood $U \subset \R^n$ containing $p$ and a smooth ($C^\infty$) mapping $F: U \to \R^{n-D_0}$ such that: $ U \cap \mathcal{M} = F^{-1}({0}) = \{x \in U \mid F(x) = 0\} $ and the Jacobian matrix $DF(x) \in \R^{(n-D_0) \times n}$ has full rank $(n-D_0)$ for all $x \in U \cap \mathcal{M}$.
    \item (Compactness) $\mathcal{M}$ is compact in the subspace topology induced from $\R^n$, which by the Heine-Borel theorem is equivalent to being closed and bounded in $\R^n$.
\end{enumerate}
\end{definition}

\begin{definition}[Intrinsic Dimension] 
The \textit{intrinsic dimension} $D_0 = d_{\text{int}}(\mathcal{M}_0)$ of a manifold $\mathcal{M}_0$ is the minimum number of parameters needed to locally parameterize the manifold. 
Formally, it is the dimension of the tangent space $T_p\mathcal{M}_0$ at any point $p \in \mathcal{M}_0$, which is constant for smooth connected manifolds. 
\end{definition}

\begin{remark}
For a more basic definition of manifold, please refer to introductory mathematics textbook \citep{lee2011manifolds}. 
In our subsequent discussion of prismatic space, we generalize the concept of intrinsic dimension. 
Since prismatic space lacks the well-behaved mathematical properties of smooth manifold, we define the intrinsic dimension as the maximum of the dimensions at all locally smooth points of the space.
\end{remark}

\subsection{Proof of Proposition~\ref{pro:representation_manifold}}
\label{app:representation_manifold}

\begin{proof}
We proceed by leveraging the definitions provided and establishing the piecewise linearity of the composite map $\Phi^{(\ell)}$, then analyzing its image on the input manifold $\mathcal{M}_0$.

By Proposition~\ref{pro:piecewise_linear_layer_map}, each layer map $F^{(\ell)}: \sH^{(\ell-1)} \to \sH^{(\ell)}$ is a piecewise linear function. 
This follows from the assumptions that the operators $\mathfrak{A}^{(\ell)}$, $\mathfrak{M}^{(\ell)}$, and $\mathfrak{U}^{(\ell)}$ are piecewise linear and almost everywhere differentiable, and that $\mathfrak{U}^{(\ell)}$ uses piecewise linear activations.
Since the composition of piecewise linear functions is piecewise linear (Lemma~\ref{lem:composition_piecewise_linear_function}), the composite map $\Phi^{(\ell)} = F^{(\ell)} \circ \cdots \circ F^{(1)}$ is also piecewise linear. 
Formally, there exists a finite set of polyhedral regions $\{R_i\}_{i=1}^K$ covering the domain of $\Phi^{(\ell)}$ such that for each $i$, the restriction of $\Phi^{(\ell)}$ to $R_i$ is affine: 
\begin{equation}
   \Phi^{(\ell)}(\mH) = \text{unvec}(\mA_i \cdot \text{vec}(\mH) + \vb_i) \quad \text{for all } \mH \in R_i,
\end{equation}
where $\mA_i \in \R^{N d_\ell \times N d_0}$ and $\vb_i \in \R^{N d_\ell}$ are constants specific to region $R_i$.

The input manifold $\mathcal{M}_0 \subset \R^{N \times d_0}$ is compact and smooth by Definition~\ref{def:representation_manifold}. 
Consider the intersection of $\mathcal{M}_0$ with the polyhedral regions $R_i$: 
\begin{equation}
   \mathcal{M}_0^{(i)} = \mathcal{M}_0 \cap R_i.
\end{equation}
Since $\mathcal{M}_0$ is a smooth manifold and each $R_i$ is polyhedral, the sets $\mathcal{M}_0^{(i)}$ are submanifolds with boundaries (possibly with corners). 
The collection $\{\mathcal{M}_0^{(i)}\}_{i=1}^K$ forms a finite cover of $\mathcal{M}_0$.

On each $\mathcal{M}_0^{(i)}$, the map $\Phi^{(\ell)}$ is affine. 
Therefore, the image $\Phi^{(\ell)}(\mathcal{M}_0^{(i)})$ is an affine transformation of $\mathcal{M}_0^{(i)}$: 
\begin{equation}
   \Phi^{(\ell)}(\mathcal{M}_0^{(i)}) = \{\text{unvec}(\mA_i \cdot \text{vec}(\mH) + \vb_i) \mid \mH \in \mathcal{M}_0^{(i)} \}.
\end{equation}

Assuming that $\Phi^{(\ell)}$ is injective on each $R_i$, it is also injective on each $\mathcal{M}_0^{(i)}$. 
Since affine maps preserve linear structures and injectivity ensures that the map is an embedding on each piece, $\Phi^{(\ell)}(\mathcal{M}_0^{(i)})$ is itself a submanifold with boundary (possibly with corners) in $\R^{N \times d_\ell}$. 

The full representation space is the union of these images: 
\begin{equation}
   \mathcal{M}^{(\ell)} = \bigcup_{i=1}^K \Phi^{(\ell)}(\mathcal{M}_0^{(i)}).
\end{equation}
Such a union is termed a prismatic space.

Singularities occur at the boundaries between the regions. Specifically:
\begin{itemize}[leftmargin=15pt, itemindent=1.5em]
\item The boundaries between different $\mathcal{M}_0^{(i)}$ correspond to points where $\Phi^{(\ell)}$ transitions from one affine piece to another.
\item At these boundaries, the Jacobian of $\Phi^{(\ell)}$ may be discontinuous or undefined, leading to non-smooth points in $\mathcal{M}^{(\ell)}$.
\item Since $\mathcal{M}_0$ is compact and smooth, it generically intersects multiple regions $R_i$, making such singularities typical. For example, if $\mathcal{M}_0$ is transversal to the boundaries of $R_i$, the intersections will be lower-dimensional manifolds where the image under $\Phi^{(\ell)}$ may not be smooth.
\end{itemize}

Thus, $\mathcal{M}^{(\ell)}$ is a prismatic space and may have singularities along the boundaries of the pieces $\Phi^{(\ell)}(\mathcal{M}_0^{(i)})$.
\end{proof}

\begin{remark}
The prismatic space we define constitutes a geometric structure more complex than a conventional topological manifold.
While its interior may largely exhibit the properties of a smooth manifold, its boundary can contain intricate corners or even singularities.
As a result, it is highly unlikely that the prismatic space satisfies the standard definitions of a topological manifold.
It should be emphasized that constructing a rigorous topological definition of this geometric structure is highly challenging.
Therefore, within the framework of this paper, we adopt a simplified definition grounded in piecewise linear map.
\end{remark}

\subsection{Details of Definition~\ref{def:spectral_prism_layer}}
\label{app:spectral_prism_layer}

\begin{remark}
    The prismatic effect of different singular values on space:
    \begin{itemize}[leftmargin=15pt, itemindent=1.5em]
        \item A singular value $\sigma_i^{(\ell)} \approx 1$ represents an unrefracted dimension, typically corresponding to node features preserved through linear identity paths or attention mechanisms that remain active.
        \item A singular value $0 < \sigma_i^{(\ell)} < 1$ represents a contracted dimension, potentially  arising from the scaling of weight matrices ($\| \mW^{(\ell)} \| < 1$) and the gradient attenuation of activation functions like ReLU/LeakyReLU in their unsaturated regimes.
        \item A singular value $\sigma_i^{(\ell)} = 0$ represents a nullified dimension, resulting directly from the sparsity induced by ReLU activations which reduces the rank of the layer's Jacobian.
        \item A singular value $\sigma_i^{(\ell)} > 1$ represents an expanded dimension, potentially arising from feature amplification in weight matrices ($\| \mW^{(\ell)} \| > 1$) or certain graph convolution operations.
    \end{itemize}
\end{remark}

\subsection{Proof of Theorem~\ref{thm:local_measure_contraction_factor}}
\label{app:local_measure_contraction_factor}

\begin{proof}
Since $F^{(\ell)}$ is linear on $\sS$, there exists a matrix $\mA^{(\ell)} \in \R^{N d_\ell \times N d_{\ell-1}}$ and a vector $\vb^{(\ell)}$ such that for all $\mX \in \sS$: 
\begin{equation}
   F^{(\ell)}(\mX) = \text{unvec}(\mA^{(\ell)} \cdot \text{vec}(\mX) + \vb^{(\ell)}).
\end{equation}
The Jacobian $\mJ^{(\ell)}$ is constant and equal to $\mA^{(\ell)}$. 
By assumption, $\mA^{(\ell)}$ has rank $r_{\ell}$, and its singular value decomposition is: 
\begin{equation}
   \mA^{(\ell)} = \mU^{(\ell)} \boldsymbol{\Sigma}^{(\ell)} \mV^{(\ell)\top},
\end{equation}
where $\mU^{(\ell)}$ and $\mV^{(\ell)}$ are orthogonal matrices, and $\boldsymbol{\Sigma}^{(\ell)} = \text{diag}(\sigma_1^{(\ell)}, \dots, \sigma_{r_{\ell}}^{(\ell)}, 0, \dots, 0)$ with $\sigma_1^{(\ell)} \geq \sigma_2^{(\ell)} \geq \dots \geq \sigma_{r_{\ell}}^{(\ell)} > 0$.

Let $\mV_s^{(\ell)}$ be the first $s$ columns of $\mV^{(\ell)}$, spanning the subspace $\sV^{(\ell)}$. 
The restriction of $\mA^{(\ell)}$ to $\sV^{(\ell)}$ is the linear map $L^{(\ell)}: \sV^{(\ell)} \to \R^M$ defined by $L^{(\ell)}(\vx) = \mA^{(\ell)} \vx$.

Since $\mA^{(\ell)}$ is injective on $\sV^{(\ell)}$ (as $\sV^{(\ell)}$ is spanned by right singular vectors corresponding to positive singular values), $L^{(\ell)}$ is injective. 
The image $L^{(\ell)}(\sV^{(\ell)})$ is an $s$-dimensional subspace of $\R^M$, spanned by the first $s$ columns of $\mU^{(\ell)}$.

Let $\{\vv_1^{(\ell)}, \dots, \vv_s^{(\ell)}\}$ be an orthonormal basis for $\sV^{(\ell)}$ (e.g., the columns of $\mV_s^{(\ell)}$). 
Then $\{L^{(\ell)}(\vv_1^{(\ell)}), \dots, L^{(\ell)}(\vv_s^{(\ell)})\}$ is a basis for $L^{(\ell)}(\sV^{(\ell)})$, and: 
\begin{equation}
L^{(\ell)}(\vv_i^{(\ell)}) = \sigma_i^{(\ell)} \vu_i^{(\ell)},
\end{equation}
where $\vu_i^{(\ell)}$ is the $i$-th column of $\mU^{(\ell)}$. 
Thus, $\{\vu_1^{(\ell)}, \dots, \vu_s^{(\ell)}\}$ is an orthonormal basis for $L^{(\ell)}(\sV^{(\ell)})$.

The $s$-dimensional Hausdorff measure $\mathcal{H}^s$ is equivalent to the $s$-dimensional Lebesgue measure on $s$-dimensional subspaces. Consider the linear map $L^{(\ell)}: \sV^{(\ell)} \to L^{(\ell)}(\sV^{(\ell)})$. 
Since $\sV^{(\ell)}$ and $L^{(\ell)}(\sV^{(\ell)})$ are $s$-dimensional Euclidean spaces, we can compute the change in measure using the determinant of $L^{(\ell)}$ (in orthonormal coordinates).

Let $\vx \in \sV^{(\ell)}$ have coordinates $\vx = \sum_{i=1}^s x_i \vv_i^{(\ell)}$. Then: 
\begin{equation}
L^{(\ell)}(\vx) = \sum_{i=1}^s x_i L^{(\ell)}(\vv_i) = \sum_{i=1}^s x_i \sigma_i^{(\ell)} \vu_i^{(\ell)}.
\end{equation}
Thus, the matrix representation of $L^{(\ell)}$ with respect to the bases ${\vv_i^{(\ell)}}$ and ${\vu_i^{(\ell)}}$ is the diagonal matrix $\text{diag}(\sigma_1^{(\ell)}, \dots, \sigma_s^{(\ell)})$.

The absolute determinant of this matrix is $\prod_{i=1}^s \sigma_i^{(\ell)}$. Therefore, for any measurable set $\sS \subset \sV^{(\ell)}$: 
\begin{equation}
\mathcal{H}^s(L^{(\ell)}(\sS)) = \left( \prod_{i=1}^s \sigma_i^{(\ell)} \right) \mathcal{H}^s(\sS).
\end{equation}

Since $F^{(\ell)}(\mX) = L^{(\ell)}(\mX) + \vb^{(\ell)}$ and translation preserves Hausdorff measure, we have: 
\begin{equation}
\mathcal{H}^s(F^{(\ell)}(\sS)) = \mathcal{H}^s(L^{(\ell)}(\sS) + \vb^{(\ell)}) = \mathcal{H}^s(L^{(\ell)}(\sS)) = \left( \prod_{i=1}^s \sigma_i^{(\ell)} \right) \mathcal{H}^s(\sS).
\end{equation}

When $s = r_{\ell}$, $\sV^{(\ell)}$ is the entire row space of $\mA^{(\ell)}$, and the product is over all positive singular values. This gives the volume contraction factor for the full rank part of the map.
\end{proof}

\begin{remark}
We will not elaborate on mathematical concepts such as Hausdorff measure and Lebesgue measure in this article. 
For details, please refer to mathematics textbook \citep{steven2008}.
\end{remark}

\subsection{Simple Linear Algebra}
\label{app:simple_linear_algebra}

\begin{lemma}[The Rank Inequality for Composition of Linear Maps]
\label{lem:rank_inequality_composition_linear_maps}
Let $A: \sV \to \sW$ and $B: \sW \to \sU$ be linear maps between vector spaces. 
The composition $B \circ A: \sV \to \sU$ is also a linear map. 
The rank of a linear map is defined as the dimension of its image:
\begin{equation}
\text{rank}(A) = \dim(\text{im}(A)), \quad \text{rank}(B) = \dim(\text{im}(B)), \quad \text{rank}(B \circ A) = \dim(\text{im}(B \circ A)).
\end{equation}
Then:
\begin{equation}
\text{rank}(B \circ A) \leq \min(\text{rank}(A), \text{rank}(B)).
\end{equation}
\end{lemma}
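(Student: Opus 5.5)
The plan is to prove the rank inequality by comparing images along the chain of linear maps, treating the two bounds separately and then taking the minimum.

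For the bound by $\text{rank}(B)$: since $\text{im}(A) \subseteq \sW$, I will show that
\begin{equation}
\text{im}(B \circ A) = B(\text{im}(A)) \subseteq B(\sW) = \text{im}(B).
\end{equation}
Taking dimensions, and using that the dimension of a subspace is at most the dimension of the ambient space, gives $\text{rank}(B\circ A) \leq \text{rank}(B)$.

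For the bound by $\text{rank}(A)$: I will consider the restriction $B|_{\text{im}(A)} : \text{im}(A) \to \sU$. Its image is exactly $\text{im}(B\circ A)$. Applying the rank--nullity theorem to this restriction gives
\begin{equation}
\dim \text{im}(B\circ A) = \dim \text{im}(A) - \dim \ker(B|_{\text{im}(A)}) \leq \dim \text{im}(A).
\end{equation}
Alternatively, one can avoid rank--nullity by noting that $B$ maps a spanning set of $\text{im}(A)$, of size $\text{rank}(A)$, onto a spanning set of $\text{im}(B\circ A)$.

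Combining the two inequalities gives $\text{rank}(B\circ A) \leq \min(\text{rank}(A), \text{rank}(B))$.

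There is no real obstacle here. The only point needing care is finite-dimensionality, which is needed for rank--nullity and for monotonicity of dimension under inclusion. This holds in the paper's setting, where all spaces are $\R^{N\times d}$. In infinite dimensions I would instead argue with cardinalities of bases, or simply restrict the statement to finite-dimensional spaces.
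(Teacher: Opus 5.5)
Your proposal is correct and follows essentially the same route as the paper. You both bound by $\text{rank}(B)$ via $\text{im}(B\circ A)\subseteq\text{im}(B)$, and bound by $\text{rank}(A)$ by applying rank--nullity (or the fact that a linear map cannot raise dimension) to the restriction $B|_{\text{im}(A)}$, whose image is $\text{im}(B\circ A)$; your remark on finite-dimensionality is a reasonable addition that the paper leaves implicit.
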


\begin{proof}
Prove the first inequality: $\text{rank}(B \circ A) \leq \text{rank}(A)$.

Observe that for any $\vv \in \sV$,
\begin{equation}
(B \circ A)(\vv) = B(A(\vv)),
\end{equation}
so the image of $B \circ A$ is:
\begin{equation}
\text{im}(B \circ A) = \{ B(A(\vv)) : \vv \in \sV \} = B(\{ A(\vv) : \vv \in \sV \}) = B(\text{im}(A)).
\end{equation}
Thus, $\text{im}(B \circ A) = B(\text{im}(A))$.
Since $\text{im}(A) \subseteq \sW$, we can restrict $B$ to $\text{im}(A)$, obtaining a linear map:
\begin{equation}
B|_{\text{im}(A)} : \text{im}(A) \to \sU.
\end{equation}
The image of this restricted map is exactly $B(\text{im}(A)) = \text{im}(B \circ A)$. 
By the Rank-Nullity Theorem (or simply by the fact that the image of a linear map cannot exceed the dimension of its domain), we have:
\begin{equation}
\dim(B(\text{im}(A))) \leq \dim(\text{im}(A)).
\end{equation}
Therefore,
\begin{equation}\label{equ:linear_algebra_1}
\text{rank}(B \circ A) = \dim(\text{im}(B \circ A)) \leq \dim(\text{im}(A)) = \text{rank}(A).
\end{equation}

Prove the second inequality: $\text{rank}(B \circ A) \leq \text{rank}(B)$.

We now show that $\text{im}(B \circ A) \subseteq \text{im}(B)$. 
Let $\vu \in \text{im}(B \circ A)$. Then there exists $\vv \in \sV$ such that:
\begin{equation}
\vu = (B \circ A)(\vv) = B(A(\vv)).
\end{equation}
Since $A(\vv) \in \sW$, it follows that $\vu = B(\vw)$ for some $\vw \in \sW$, so $\vu \in \text{im}(B)$. Hence,
\begin{equation}
\text{im}(B \circ A) \subseteq \text{im}(B),
\end{equation}
and therefore:
\begin{equation}\label{equ:linear_algebra_2}
\dim(\text{im}(B \circ A)) \leq \dim(\text{im}(B)) \quad \Rightarrow \quad \text{rank}(B \circ A) \leq \text{rank}(B).
\end{equation}

Combining both inequalities~(\ref{equ:linear_algebra_1}) and (\ref{equ:linear_algebra_2}), we conclude:
\begin{equation}
\text{rank}(B \circ A) \leq \min(\text{rank}(A), \text{rank}(B)).
\end{equation}
\end{proof}

\subsection{Proof of Theorem~\ref{thm:prismatic_folding_and_intrinsic_dimension}}
\label{app:prismatic_folding_and_intrinsic_dimension}

\begin{proof}
From Proposition~\ref{pro:piecewise_linear_layer_map}, each layer map $F^{(\ell)}$ is piecewise linear and differentiable almost everywhere. 
By Proposition~\ref{pro:representation_manifold}, the composite map $\Phi = F^{(L)} \circ \cdots \circ F^{(1)}$ is also piecewise linear and $\mathcal{M}^{(L)}$ is a prismatic space.
On each linear region $C_k$ (as defined in Definition~\ref{def:linear_region_partition}), $\Phi$ is linear, so its rank is constant on $C_k$. 
Thus, $\Phi$ is piecewise constant on its rank.

Let $\{C_k\}$ be the linear region partition of $\mathcal{M}_0$ from Definition~\ref{def:linear_region_partition}. 
For each $C_k$, the map $\Phi|_{C_k}$ is linear. Let $T_k = \Phi|_{C_k}$ denote this linear map. 
The image $\Phi(C_k)$ is contained in a linear subspace of dimension $\text{rank}(T_k)$.

The local dimension of $\mathcal{M}^{(L)}$ at any point in $\Phi(C_k)$ is at most $\text{rank}(T_k)$. Since $\mathcal{M}^{(L)} = \bigcup_k \Phi(C_k)$, the intrinsic dimension $d_{\text{int}}(\mathcal{M}^{(L)})$ is the supremum of the local dimensions over all points in $\mathcal{M}^{(L)}$. 
Thus, 
\begin{equation}
d_{\text{int}}(\mathcal{M}^{(L)}) \leq \max_k \text{rank}(T_k).
\end{equation}
Now, we bound $\text{rank}(T_k)$. Since $T_k = F^{(L)} \circ \cdots \circ F^{(1)}|_{C_k}$, and each $F^{(\ell)}$ is linear on the relevant region, we have: 
\begin{equation}
\text{rank}(T_k) \leq \min_{\ell} \text{rank}\left( F^{(\ell)}|_{\Phi^{(\ell-1)}(C_k)} \right).
\end{equation}
This follows from Lemma~\ref{lem:rank_inequality_composition_linear_maps}: for linear maps $A$ and $B$, $\text{rank}(B \circ A) \leq \min(\text{rank}(A), \text{rank}(B))$. 
By induction, this holds for the composition of $L$ linear maps.

For each layer $\ell$, $\text{rank}\left( F^{(\ell)}|_{\Phi^{(\ell-1)}(C_k)} \right) = \text{rank}\left( \mJ^{(\ell)}|_{\Phi^{(\ell-1)}(C_k)} \right)$ because the Jacobian is constant on the region where $F^{(\ell)}$ is linear (from Definition~\ref{def:linear_region_partition}). 

Let $r_{\ell,k} = \text{rank}\left( \mJ^{(\ell)}|_{\Phi^{(\ell-1)}(C_k)} \right)$. Then, 
\begin{equation}
\text{rank}(T_k) \leq \min_{\ell} r_{\ell,k}.
\end{equation}
Therefore, 
\begin{equation}
d_{\text{int}}(\mathcal{M}^{(L)}) \leq \max_k \text{rank}(T_k) \leq \max_k \min_{\ell} r_{\ell,k}.
\end{equation}
Due to the contraction effect of the layers (especially with ReLUs, which project dimensions to zero), the ranks $r_{\ell,k}$ are often much smaller than the input dimension $D_0$. 
Thus, $\max_k \min_{\ell} r_{\ell,k}$ is typically less than $D_0$, implying that $\mathcal{M}^{(L)}$ has a lower intrinsic dimension than $D_0$.
\end{proof}

\subsection{Proof of Theorem~\ref{thm:measure_of_final_prismatic_manifold}}
\label{app:measure_of_final_prismatic_manifold}

\begin{proof}
By Proposition~\ref{pro:piecewise_linear_layer_map}, each layer map $F^{(\ell)}$ is piecewise linear and differentiable almost everywhere. 
By Proposition~\ref{pro:representation_manifold}, the composite map $\Phi = F^{(L)} \circ \cdots \circ F^{(1)}$ is piecewise linear.
By Definition~\ref{def:linear_region_partition}, the input manifold $\mathcal{M}_0$ is partitioned into cells ${C_k}$ such that on each $C_k$, $\Phi$ is linear.

We assume $\Phi$ is injective on ${C_k}$. This implies that for each $C_k$, $\Phi$ restricted to $C_k$ is a linear injection, so $d_{\text{int}}(\Phi(C_k)) = d_{\text{int}}(C_k) = d_{\text{int}}$, where $d_{\text{int}} = D_0$ is the intrinsic dimension of $\mathcal{M}_0$ (Definition~\ref{def:representation_manifold}).
Since $\Phi$ is injective, each layer $F^{(\ell)}$ must be injective on $\Phi^{(\ell-1)}(C_k)$ for all $\ell$ and $k$. Otherwise, the composition would not be injective. 
Thus, for each $\ell$ and $k$, the Jacobian $\mJ^{(\ell)}$ of $F^{(\ell)}$ restricted to the tangent space of $\Phi^{(\ell-1)}(C_k)$ has rank at least $d_{\text{int}}$. 
Since the tangent space is $d_{\text{int}}$-dimensional, $\mJ^{(\ell)}$ has exactly $d_{\text{int}}$ positive singular values $\sigma_{1,k}^{(\ell)} \geq \sigma_{2,k}^{(\ell)} \geq \dots \geq \sigma_{d_{\text{int}},k}^{(\ell)} > 0$ on the region corresponding to $C_k$.

Consider a fixed cell $C_k$. Since $\Phi$ is linear on $C_k$, we can write $\Phi(\mX) = \text{unvec}(\mJ_k \cdot \text{vec}(\mX) + \vb_k)$ for $\mX \in C_k$, where $\mJ_k$ is the Jacobian of $\Phi$ on $C_k$ (constant). 
However, to understand the layer-wise measure change, we use the composition structure.

For the first layer $F^{(1)}$, since it is linear on $C_k$, it maps $C_k$ to $F^{(1)}(C_k)$. 
By Theorem~\ref{thm:local_measure_contraction_factor}, the $d_{\text{int}}$-dimensional Hausdorff measure changes as: 
\begin{equation}
\mathcal{H}^{d_{\text{int}}}(F^{(1)}(C_k)) = \Big( \prod_{i=1}^{d_{\text{int}}} \sigma_{i,k}^{(1)} \Big) \mathcal{H}^{d_{\text{int}}}(C_k),
\end{equation}
where $\sigma_{i,k}^{(1)}$ are the singular values of $\mJ^{(1)}$ restricted to the tangent space of $C_k$ (which is $d_{\text{int}}$-dimensional).

For the second layer $F^{(2)}$, it is linear on $F^{(1)}(C_k)$ (which is $d_{\text{int}}$-dimensional). 
It maps $F^{(1)}(C_k)$ to $F^{(2)}(F^{(1)}(C_k))$. Again, by Theorem~\ref{thm:local_measure_contraction_factor}: 
\begin{equation}
\mathcal{H}^{d_{\text{int}}}(F^{(2)}(F^{(1)}(C_k))) = \Big( \prod_{i=1}^{d_{\text{int}}} \sigma_{i,k}^{(2)} \Big) \mathcal{H}^{d_{\text{int}}}(F^{(1)}(C_k)) = \Big( \prod_{i=1}^{d_{\text{int}}} \sigma_{i,k}^{(2)} \Big) \Big( \prod_{i=1}^{d_{\text{int}}} \sigma_{i,k}^{(1)} \Big) \mathcal{H}^{d_{\text{int}}}(C_k),
\end{equation}
where $\sigma_{i,k}^{(2)}$ are the singular values of $\mJ^{(2)}$ restricted to the tangent space of $F^{(1)}(C_k)$.

Proceeding inductively for all $L$ layers, we get:
\begin{equation}
\mathcal{H}^{d_{\text{int}}}(\Phi(C_k)) = \Big( \prod_{\ell=1}^L \prod_{i=1}^{d_{\text{int}}} \sigma_{i,k}^{(\ell)} \Big) \mathcal{H}^{d_{\text{int}}}(C_k).
\end{equation} 
This is because each layer's measure change factor is multiplicative, and the composition preserves the $d_{\text{int}}$-dimensional measure up to the product of the singular values.

Since the cells ${C_k}$ form a partition of $\mathcal{M}_0$ (Definition~\ref{def:linear_region_partition}), and $\Phi$ is injective on ${C_k}$, the images ${\Phi(C_k)}$ are disjoint and cover $\mathcal{M}^{(L)}$ (up to sets of measure zero, due to piecewise linearity). 
Therefore, by the additivity of the Hausdorff measure: 
\begin{equation}
\mathcal{H}^{d_{\text{int}}}(\mathcal{M}^{(L)}) = \sum_k \mathcal{H}^{d_{\text{int}}}(\Phi(C_k)) = \sum_k \Big( \prod_{\ell=1}^L \prod_{i=1}^{d_{\text{int}}} \sigma_{i,k}^{(\ell)} \Big) \mathcal{H}^{d_{\text{int}}}(C_k).
\end{equation}
This establishes the desired formula.

If $\Phi$ is not injective, then the images $\Phi(C_k)$ may overlap. Since the Hausdorff measure is subadditive, we have: 
\begin{equation}
\mathcal{H}^{d_{\text{int}}}(\mathcal{M}^{(L)}) \leq \sum_k \mathcal{H}^{d_{\text{int}}}(\Phi(C_k)) = \sum_k \Big( \prod_{\ell=1}^L \prod_{i=1}^{d_{\text{int}}} \sigma_{i,k}^{(\ell)} \Big) \mathcal{H}^{d_{\text{int}}}(C_k).
\end{equation}
Thus, the formula provides an upper bound.
\end{proof}

\subsection{Details of Definition~\ref{def:prompt_perturbation_manifold}}
\label{app:prompt_perturbation_manifold}

\begin{remark}
This definition formalizes the notion of how a prompt $\mP$ modifies the input data manifold in the context of graph prompt tuning. 
The original input manifold $\mathcal{M}_0$, which represents the natural data distribution (e.g., graph node features), is typically assumed to be a compact smooth manifold embedded in $\mathbb{R}^{N \times d_0}$. 
The prompt $\mP$ is a low-dimensional perturbation applied to every point in $\mathcal{M}_0$, resulting in a new manifold $\mathcal{M}_0(\mP)$. 
The operation $\mathcal{M}_0(\mP) = \{ \mX + \mP \mid \mX \in \mathcal{M}_0 \}$ is a translation of the entire manifold by $\mP$, which preserves the topological and geometric properties of $\mathcal{M}_0$, such as compactness and smoothness, since translation is a diffeomorphism. 
The prompt space $\mathcal{P}$ is the set of all possible prompts, often constrained to be low-dimensional (e.g., a subspace of $\mathbb{R}^{N \times d_0}$), and each prompt $\mP \in \mathcal{P}$ defines a distinct perturbed manifold. 
This family of manifolds $\{ \mathcal{M}_0(\mP) \mid \mP \in \mathcal{P} \}$ encapsulates the variability introduced by graph prompt tuning, and the goal is to understand how the GNN-based Graph Foundation Models (GFMs) transform these manifolds through its layers.
\end{remark}

\subsection{Lipschitz Continuous and Jacobian}
\label{app:lipschitz_continuous}

\begin{lemma}[Continuity of the Layer Map $F^{(\ell)}$]
\label{lem:continuous}
Assume the operators $\mathfrak{A}^{(\ell)}$, $\mathfrak{M}^{(\ell)}$, and $\mathfrak{U}^{(\ell)}$ defining the GFM layer in Definition~\ref{gfm_layer} are continuous. 
Then, the layer map $F^{(\ell)}$ is continuous.
\end{lemma}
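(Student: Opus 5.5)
The plan is to show that $F^{(\ell)}$ is a composition of continuous maps, assembled exactly as in the proof of Proposition~\ref{pro:piecewise_linear_layer_map}. Write $f_1 = \mathfrak{A}^{(\ell)}(\mA,\cdot\,;\mathbf{\Theta}^{(\ell)}_a)$, $f_2 = \mathfrak{M}^{(\ell)}(\cdot,\cdot\,;\mathbf{\Theta}^{(\ell)}_m)$ and $f_3 = \mathfrak{U}^{(\ell)}(\cdot,\cdot\,;\mathbf{\Theta}^{(\ell)}_u)$. Throughout, the adjacency matrix $\mA$ and the frozen parameters are held fixed, so each $f_i$ is viewed as a map between finite-dimensional real matrix spaces. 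We identify these spaces with Euclidean spaces via $\text{vec}$, which is a linear isometric isomorphism. With this notation we write
\begin{equation*}
F^{(\ell)} = f_3 \circ \bigl( f_2 \circ (f_1, \text{id}), \ \text{id} \bigr),
\end{equation*}
where $(g,h)$ denotes the map $\mH \mapsto (g(\mH), h(\mH))$ into the product space.

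The key steps, in order, are as follows.

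\textbf{Step 1.} The identity map $\text{id}$ is continuous.

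\textbf{Step 2.} If $g$ and $h$ are continuous, then the pairing $(g,h)$ is continuous into the product space. On a product of Euclidean spaces, the product topology coincides with the norm topology. The inequality $\|(g(\mH),h(\mH)) - (g(\mH'),h(\mH'))\| \le \|g(\mH)-g(\mH')\| + \|h(\mH)-h(\mH')\|$ then gives the claim by an $\varepsilon$–$\delta$ argument, or equivalently via sequences.

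\textbf{Step 3.} A composition of continuous maps is continuous. Applying Steps 1–3 successively to $f_1$, then $f_2$, then $f_3$ yields continuity of $F^{(\ell)}$.

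One subtlety needs care. Each operator is assumed continuous as a function of \emph{all} of its matrix arguments jointly, not merely separately in each argument. The pairing construction in Step 2 is exactly what lets joint continuity pass through composition, so this is where the hypothesis is used.

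The main obstacle is interpretive rather than technical. For specific backbones such as GCN, GAT, GIN and GT, one may want to confirm that the hypothesis holds. These operators are built from matrix products, sums and concatenations, which are polynomial and hence continuous. They also use ReLU or LeakyReLU, which are continuous, as well as Softmax, $\exp$ and LayerNorm. LayerNorm is continuous wherever the variance is nonzero, or everywhere once the usual $\epsilon$ is added inside the square root.

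The lemma itself only requires the stated hypothesis, so I would verify these backbone cases briefly in a remark rather than in the proof. Finally, for later use with the diameter bound, I would note that on each polyhedral region $F^{(\ell)}$ is affine with Jacobian $\mJ^{(\ell)}_k$. Continuity across region boundaries then allows one to chain segments and obtain the Lipschitz estimate $\|F^{(\ell)}(\mX)-F^{(\ell)}(\mY)\| \le \sup_k |\mJ^{(\ell)}_k|_{\text{op}}\,\|\mX-\mY\|$.
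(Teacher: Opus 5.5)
Your proof is correct and is essentially the paper's argument: the same decomposition $F^{(\ell)} = f_3 \circ \bigl(f_2 \circ (f_1,\text{id}),\text{id}\bigr)$ followed by closure of continuity under composition, with your Step~2 (continuity of the pairing map, which is where joint continuity of the operators is used) spelling out a point the paper leaves implicit. The backbone remarks and the segment-chaining Lipschitz estimate go beyond what this lemma needs, though the latter is a valid argument.
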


\begin{proof}
Similar to the proof of Proposition~\ref{pro:piecewise_linear_layer_map}, let $f_1 = \mathfrak{A}^{(\ell)}$, $f_2 = \mathfrak{M}^{(\ell)}$, and $f_3 = \mathfrak{U}^{(\ell)}$. 
Then $F^{(\ell)} = f_3 \circ ( f_2 \circ (f_1, \text{id}), \text{id})$, where $\text{id}$ denotes the identity function
Since the composition of continuous functions is continuous, the overall layer map $F^{(\ell)}$ is continuous.
\end{proof}

\begin{lemma}[Lipschitz Continuity of the GFM Map $\Phi$]
\label{lem:lipschitz_continuous}
Let $\mathcal{M}_0(\mP) \subset \mathbb{R}^{N \times d_0}$ be the compact prompt-perturbed input manifold as defined in Definition \ref{def:prompt_perturbation_manifold}. 
The composite map $\Phi = F^{(L)} \circ F^{(L-1)} \circ \cdots \circ F^{(1)}$, where each $F^{(\ell)}$ is a piecewise linear layer map (Proposition~\ref{pro:piecewise_linear_layer_map}), is Lipschitz continuous on $\mathcal{M}_0(\mP)$. 
That is, there exists a constant $L_{\Phi} < \infty$ such that for all $\mX, \mY \in \mathcal{M}_0(\mP)$, 
\begin{equation}
\| \Phi(\mX) - \Phi(\mY) \| \leq L_\Phi \| \mX - \mY \|.
\end{equation}
Moreover, the Lipschitz constant $L_{\Phi}$ satisfies:
\begin{equation}
L_\Phi \leq \prod_{\ell=1}^L L_\ell,
\end{equation}
where $L_\ell$ is the Lipschitz constant of the $\ell$-th layer $F^{(\ell)}$ on the appropriate domain.
\end{lemma}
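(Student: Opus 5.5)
We prove Lemma~\ref{lem:lipschitz_continuous} in two steps. First we show that each layer map $F^{(\ell)}$ is globally Lipschitz. Then we chain these bounds through the composition. Everything is done in vectorized form, as the remark after Definition~\ref{def:polyhedral_region_matrix} permits, with the Frobenius norm on matrices, which equals the Euclidean norm on the vectorizations.

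\textbf{Step 1 (each layer is Lipschitz).} By Proposition~\ref{pro:piecewise_linear_layer_map}, $F^{(\ell)}$ is piecewise linear. It is also continuous, by Proposition~\ref{pro:piecewise_linear} together with Lemma~\ref{lem:continuous}. So there are finitely many polyhedral regions $R_1,\dots,R_K$ covering the domain, and on each $R_i$ the map is affine with Jacobian $\mA_i=\mJ^{(\ell)}_i$. Define
\begin{equation}
L_\ell := \max_{1\le i\le K} |\mJ^{(\ell)}_i|_{\text{op}}.
\end{equation}
This maximum exists because $K$ is finite. Now fix two points $\vx,\vy$ and parametrize the segment between them by $\gamma(t)=\vx+t(\vy-\vx)$ for $t\in[0,1]$. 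Each $R_i$ is convex, so $\gamma^{-1}(R_i)$ is a closed interval. Hence $[0,1]$ splits into finitely many subintervals $0=t_0<t_1<\dots<t_p=1$, with each piece $\gamma([t_{j-1},t_j])$ lying inside a single region. On each such piece $F^{(\ell)}$ is affine, which gives
\begin{equation}
\|F^{(\ell)}(\gamma(t_j))-F^{(\ell)}(\gamma(t_{j-1}))\|\le L_\ell (t_j-t_{j-1})\|\vy-\vx\|.
\end{equation}
Continuity ensures the affine pieces agree at the breakpoints $t_j$. Summing the bounds and applying the triangle inequality yields $\|F^{(\ell)}(\vx)-F^{(\ell)}(\vy)\|\le L_\ell\|\vx-\vy\|$. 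The argument uses only the convexity of the domain, namely the whole ambient space $\R^{N\times d_{\ell-1}}$. Therefore the bound holds on every image set $\Phi^{(\ell-1)}(\mathcal{M}_0(\mP))$, even though these sets are themselves not convex.

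\textbf{Step 2 (composition).} Take $\mX,\mY\in\mathcal{M}_0(\mP)$ and write $\mX_\ell=\Phi^{(\ell)}(\mX)$, $\mY_\ell=\Phi^{(\ell)}(\mY)$. Step 1 gives $\|\mX_\ell-\mY_\ell\|\le L_\ell\|\mX_{\ell-1}-\mY_{\ell-1}\|$. Inducting on $\ell$ then gives $\|\Phi(\mX)-\Phi(\mY)\|\le\big(\prod_{\ell=1}^L L_\ell\big)\|\mX-\mY\|$. Thus $\Phi$ is Lipschitz with $L_\Phi\le\prod_\ell L_\ell<\infty$.

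We can also take $L_\ell$ as the supremum over only those regions that meet the compact set $\Phi^{(\ell-1)}(\mathcal{M}_0(\mP))$. This gives the quantity $\sup_k|\mJ^{(\ell)}_k|_{\text{op}}$ used in Theorem~\ref{thm:prompt_efficacy_bound}, and the segment argument still applies there by applying the global bound.

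\textbf{Main obstacle.} The key step is passing from the local Jacobian bound to a global Lipschitz bound across region boundaries. This works only because $F^{(\ell)}$ is continuous and the partition into regions is finite, which is what makes the segment decomposition valid. Without continuity, a jump across a boundary would break the estimate. For smooth non-linear components such as Softmax, this step holds only in the approximate sense noted in the Limitations.
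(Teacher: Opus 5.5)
Your Steps 1--2 are correct and follow the same two-step structure as the paper: bound each layer by the largest operator norm among its finitely many affine pieces, then chain the bounds through the composition.

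The real difference is in Step 1. The paper writes only the within-region estimate $\|F^{(\ell)}(\mH)-F^{(\ell)}(\mH')\|\le\|\mA^{(\ell)}_k\|_{\text{op}}\|\mH-\mH'\|$ for $\mH,\mH'\in R^{(\ell)}_k$. It then asserts that $\max_k L^{(\ell)}_k$ works for all pairs. Your segment decomposition supplies exactly the missing argument for pairs lying in different regions.

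Your argument also buys two things the paper lacks:
\begin{itemize}
\item It uses convexity of the ambient space rather than of the data set, so compactness of $\mathcal{M}_0(\mP)$ is not needed; $\Phi$ is in fact globally Lipschitz on $\R^{N\times d_0}$.
\item You correctly apply the layer-$\ell$ bound on $\Phi^{(\ell-1)}(\mathcal{M}_0(\mP))$, whereas the paper states it on $\mathcal{M}_0(\mP)$ at every layer.
\end{itemize}
One small point on your ``main obstacle'': under the paper's definition the regions are closed polyhedra covering the space. So continuity is automatic, since the affine formulas must agree on overlaps. It is a consequence of the definition, not an extra ingredient.

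Your closing remark, however, does not hold in general. Take the maximum only over regions that meet $K=\Phi^{(\ell-1)}(\mathcal{M}_0(\mP))$. This does not give a Lipschitz constant on $K$, because $K$ is not convex and the segment between two of its points can cross regions that $K$ never meets. Also, ``applying the global bound'' only returns the global maximum, not the restricted one.

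A counterexample: let $F(x)=0$ for $x\le 1$, $F(x)=10(x-1)$ for $1\le x\le 2$, $F(x)=10$ for $x\ge 2$, and $K=\{0,3\}$. Both regions meeting $K$ have slope $0$, yet $|F(3)-F(0)|=10$.

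A valid localized constant is the maximum over regions meeting $\mathrm{conv}(K)$; your segment argument applies to it verbatim. Localization is not needed anyway. If you read $\sup_k|\mJ^{(\ell)}_k|_{\text{op}}$ in Theorem~\ref{thm:prompt_efficacy_bound} as a supremum over all linear regions of $F^{(\ell)}$, it is exactly your global $L_\ell$. The paper's own phrase ``regions intersecting the compact set'' invites the same mistake.
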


\begin{proof}
Prove the piecewise linear layers are Lipschitz continuous.

Each layer map $F^{(\ell)}: \mathbb{R}^{N \times d_{\ell-1}} \to \mathbb{R}^{N \times d_\ell}$ is piecewise linear and continuous by Proposition~\ref{pro:piecewise_linear_layer_map} and Lemma~\ref{lem:continuous}. 
Since $\mathcal{M}_0(\mP)$ is compact and each $F^{(\ell)}$ is continuous, the image $F^{(\ell)}(\mathcal{M}_0(\mP))$ is also compact. 
The piecewise linearity implies that there exists a finite partition of the domain into polyhedral regions ${R_k^{(\ell)}}$ such that $F^{(\ell)}$ is linear on each region $R_k^{(\ell)}$. 
On each such region, for any $\mH, \mH' \in R_k^{(\ell)}$, we have
\begin{equation}
\| F^{(\ell)}(\mH) - F^{(\ell)}(\mH') \| = \| \mA_k^{(\ell)} (\mH - \mH') \| \leq \| \mA_k^{(\ell)} \|_{\text{op}} \| \mH - \mH' \|,
\end{equation}
where $\mA_k^{(\ell)}$ is the matrix representing the linear map on $R_k^{(\ell)}$ and $| \cdot |_{\text{op}}$ denotes the operator norm (spectral norm). 
Define the local Lipschitz constant for $F^{(\ell)}$ on region $R_k^{(\ell)}$ as $L_k^{(\ell)} = | \mA_k^{(\ell)} |_{\text{op}}$. 
Since the number of regions intersecting the compact set $\mathcal{M}_0(\mP)$ is finite, the global Lipschitz constant for $F^{(\ell)}$ on $\mathcal{M}_0(\mP)$ is finite and given by 
\begin{equation}
L_\ell = \max_k L_k^{(\ell)} < \infty.
\end{equation}
Thus, for any $\mH, \mH' \in \mathcal{M}_0(\mP)$,
\begin{equation}
\| F^{(\ell)}(\mH) - F^{(\ell)}(\mH') \| \leq L_\ell \| \mH - \mH' \|.
\end{equation}

Prove the composite map $\Phi$ is Lipschitz continuous.

The composite map $\Phi = F^{(L)} \circ F^{(L-1)} \circ \cdots \circ F^{(1)}$ is a composition of Lipschitz continuous maps. 
For any $\mX, \mY \in \mathcal{M}_0(\mP)$, let $\mH^{(\ell)} = F^{(\ell)} \circ \cdots \circ F^{(1)}(\mX)$ and $\mK^{(\ell)} = F^{(\ell)} \circ \cdots \circ F^{(1)}(\mY)$ denote the intermediate representations. 
Then, 
\begin{equation}
\begin{aligned}
\| \mH^{(1)} - \mK^{(1)} \| &= \| F^{(1)}(\mX) - F^{(1)}(\mY) \| \leq L_1 \| \mX - \mY \|, \\
\| \mH^{(2)} - \mK^{(2)} \| &= \| F^{(2)}(\mH^{(1)}) - F^{(2)}(\mK^{(1)}) \| \leq L_2 \| \mH^{(1)} - \mK^{(1)} \| \leq L_2 L_1 \| \mX - \mY \|, \\
&\vdots \\
\| \mH^{(L)} - \mK^{(L)} \| &= \| \Phi(\mX) - \Phi(\mY) \| \leq L_L \| \mH^{(L-1)} - \mK^{(L-1)} \| \leq \Big( \prod_{\ell=1}^L L_\ell \Big) \| \mX - \mY \|.
\end{aligned}
\end{equation}
Therefore, $\Phi$ is Lipschitz continuous with constant $L_{\Phi} = \prod_{\ell=1}^L L_\ell$.

This completes the proof. 
\end{proof}

\begin{lemma}[Existence of the Jacobian $\mJ_\Phi(\mX)$]
\label{lem:existence_of_Jacobian}
In the context of the unified GFM framework, we aim to prove that the Jacobian of the composite map $\Phi = F^{(L)} \circ F^{(L-1)} \circ \cdots \circ F^{(1)}$ exists almost everywhere (a.e.) on the input manifold $\mathcal{M}_0(\mP)$, and that at points where it exists, it is given by the product of the layer Jacobians.
\end{lemma}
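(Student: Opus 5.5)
The plan is to reduce everything to the piecewise-affine structure of each $F^{(\ell)}$ and then use the chain rule cell by cell. Fix $\mP$ and work on the compact manifold $\mathcal{M}_0(\mP)$. By Proposition~\ref{pro:piecewise_linear_layer_map}, $F^{(1)}$ is affine on finitely many polyhedral regions $R^{(1)}_i$ and is differentiable on their interiors. The complement of $\bigcup_i \mathrm{int}(R^{(1)}_i)$ lies in a finite union of hyperplanes. So it suffices to show that each such hyperplane meets $\mathcal{M}_0(\mP)$ in a set of $\mathcal{H}^{D_0}$-measure zero, which is the natural ``a.e.'' on a $D_0$-dimensional manifold. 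That is the first step, and I expect it to be the main obstacle.

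If $\mathcal{M}_0(\mP)$ lies inside a hyperplane, or shares an open piece with one, the claim fails as stated. I would handle this in one of two ways. The first option is to refine the regions: replace each $R^{(1)}_i$ by its intersection with the affine hull of the relevant piece of $\mathcal{M}_0(\mP)$, and differentiate only along the manifold, i.e.\ use the tangential Jacobian on $T_\mX\mathcal{M}_0(\mP)$. The second option is to invoke a transversality or genericity assumption in the spirit of the proof of Proposition~\ref{pro:representation_manifold}. In the transversal case, the intersection of a smooth $D_0$-manifold with a hyperplane is a submanifold of dimension $D_0-1$. Hence it has zero $\mathcal{H}^{D_0}$-measure, and a finite union of such sets is still null. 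I would commit to the tangential-Jacobian formulation, since it is exactly what Theorem~\ref{thm:measure_of_final_prismatic_manifold} uses.

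Next I would induct through the layers using the cell partition $\{C_k\}$ of Definition~\ref{def:linear_region_partition}, applied to $\mathcal{M}_0(\mP)$. Let $E_\ell$ be the set of points $\mX$ such that $\Phi^{(\ell-1)}(\mX)$ lies on a boundary of a region in $\Omega^{(\ell)}$. On the interior of each cell the composite $\Phi^{(\ell-1)}$ is affine, so $E_\ell \cap C_k$ is the preimage of finitely many hyperplanes under an affine map restricted to a manifold piece. Each such preimage is either all of the cell's affine piece, in which case it is absorbed by refining the cell as above, or a relatively closed null set. The union of the $E_\ell$ together with the cell boundaries is therefore $\mathcal{H}^{D_0}$-null.

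Off this null set, $\mX$ has a neighbourhood on which every $F^{(\ell)}$ is affine along the trajectory. The classical chain rule then gives
$\mJ_\Phi(\mX)=\mJ^{(L)}(\Phi^{(L-1)}(\mX))\cdots\mJ^{(1)}(\mX)$.
Each factor equals the constant matrix $\mA_i$ of the active region, as in the proof of Proposition~\ref{pro:piecewise_linear_layer_map}. The product is consistent with the Lipschitz bound of Lemma~\ref{lem:lipschitz_continuous}, since its operator norm is at most $\prod_\ell L_\ell$.

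As an alternative safety net, I would note that Lemma~\ref{lem:lipschitz_continuous} already makes $\Phi$ Lipschitz. Rademacher's theorem (or its manifold version, via charts) then gives differentiability almost everywhere directly. The explicit cellwise argument above is still needed to identify the derivative with the product of layer Jacobians, because the chain rule can fail at points where an inner image lands on a non-differentiability set of an outer layer. This is precisely what the null-set bookkeeping of the $E_\ell$ rules out.
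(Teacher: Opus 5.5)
Your skeleton is the paper's: a good set defined inductively along the composition, the chain rule on that set, and each factor identified with the constant matrix of the active region. The difference is the null-set bookkeeping, and there your route is the sounder one.

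\textbf{How the paper handles null sets.} It works with ambient Lebesgue measure. It argues that $E_\ell$ has full measure because $\Phi^{(\ell-1)}$ is Lipschitz and hence ``preserves sets of measure zero''. Only at the end does it pass to $\mathcal{M}_0(\mP)$ through a Lipschitz parametrization. Both steps are thin:
\begin{itemize}
\item A Lipschitz map sends null sets \emph{forward} to null sets. But $E_\ell$ needs the \emph{preimage} under $\Phi^{(\ell-1)}$ of the non-differentiability set of $F^{(\ell)}$ to be null. This fails when, for example, a bias-free GCN layer (ReLU after a linear map) sends a nonempty open set onto $\mathbf{0}$, where the next such layer is not differentiable.
\item Full ambient measure says nothing about a $D_0$-dimensional manifold, which is itself ambient-null when $D_0<Nd_0$.
\end{itemize}
Measuring intrinsically with $\mathcal{H}^{D_0}$ and tracking preimages of boundary hyperplanes cellwise addresses both. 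The same example shows that $\Phi$ can be differentiable (even constant) on a positive-measure set where the ambient layer Jacobian $\mJ^{(2)}$ does not exist. So reading the factors as region matrices, i.e.\ restricted Jacobians as in Theorem~\ref{thm:measure_of_final_prismatic_manifold}, is forced rather than optional. Your remark that Rademacher gives differentiability but not the product formula is also correct.

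\textbf{The claim to repair.} Do not rely on the dichotomy ``either the whole affine piece or a relatively closed null set'' (equivalently: open piece, or transversal intersection). For a curved manifold it is not exhaustive. The graph of a smooth function whose zero set is a fat Cantor set meets a hyperplane in a set of positive $\mathcal{H}^{D_0}$-measure with empty relative interior. Neither refinement nor the transversal case covers this.

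\textbf{A short fix within your tangential formulation.} Use the polyhedral partition of $\Phi$ itself, with regions $R_{i_1}\cap(F^{(1)})^{-1}(R_{i_2})\cap\cdots$ and linear parts $\mA^{(L)}_{i_L}\cdots\mA^{(1)}_{i_1}$. This also absorbs the collapsing case automatically, since pulled-back constraints become trivial there.
\begin{itemize}
\item Suppose $T_p\mathcal{M}_0(\mP)$ is parallel to every constraint hyperplane through $p$. Then small tangent vectors at $p$ stay in every region containing $p$, so the linear parts of all these regions agree on $T_p\mathcal{M}_0(\mP)$. Each nearby point $q$ of the manifold lies in one such region, and $q-p$ differs from its tangential projection by $O(\|q-p\|^2)$. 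Hence $\Phi|_{\mathcal{M}_0(\mP)}$ is tangentially differentiable at $p$, with that common product as derivative.
\item At the remaining points, some hyperplane through $p$ is transversal to $T_p\mathcal{M}_0(\mP)$. These points lie in finitely many embedded $(D_0-1)$-dimensional submanifolds (regular value theorem), so they are $\mathcal{H}^{D_0}$-null.
\end{itemize}
Alternatively, assume transversality to these finitely many hyperplanes as an explicit genericity hypothesis, in the spirit of the proof of Proposition~\ref{pro:representation_manifold}.
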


\begin{proof}
By Proposition~\ref{pro:piecewise_linear_layer_map}, each layer map $F^{(\ell)}: \mathbb{R}^{N \times d_{\ell-1}} \to \mathbb{R}^{N \times d_{\ell}}$ is piecewise linear. 
This means that the domain of $F^{(\ell)}$ can be partitioned into a finite number of polyhedral regions ${R_k^{(\ell)}}$ such that $F^{(\ell)}$ is linear on each region. 
Since linear functions are differentiable everywhere, $F^{(\ell)}$ is differentiable on the interior of each region. 
The boundaries between regions have Lebesgue measure zero in $\mathbb{R}^{N \times d_{\ell-1}}$ (as they are subsets of lower-dimensional affine spaces). 
Therefore, $F^{(\ell)}$ is differentiable almost everywhere in its domain. 
Let $D_\ell$ denote the set of points where $F^{(\ell)}$ is differentiable; then $D_\ell$ has full measure (i.e., its complement has measure zero).

The composite map $\Phi$ is defined as $\Phi = F^{(L)} \circ F^{(L-1)} \circ \cdots \circ F^{(1)}$. 
Consider the sets where each $F^{(\ell)}$ is differentiable. 
Since each $F^{(\ell)}$ is differentiable a.e., the set of points where all $F^{(\ell)}$ are differentiable along the composition path is also of full measure. 
More formally, define:
\begin{itemize}[leftmargin=15pt, itemindent=1.5em]
   \item $E_1 = D_1$ (the set where $F^{(1)}$ is differentiable).
   \item For $\ell = 2$ to $L$, define $E_\ell = \{ \mX \in E_{\ell-1}: F^{(\ell)} \}$ is differentiable at $\Phi^{(\ell-1)}(\mX)$, where $\Phi^{(\ell-1)} = F^{(\ell-1)} \circ \cdots \circ F^{(1)}$.
\end{itemize}
Since $F^{(\ell)}$ is differentiable a.e., and $\Phi^{(\ell-1)}$ is continuous and piecewise linear (hence Lipschitz), it preserves sets of measure zero. 
Thus, by induction, each $E_\ell$ has full measure. 
Therefore, the set $E = E_L$ where all $F^{(\ell)}$ are differentiable at the appropriate points has full measure in $\mathcal{M}_0(\mP)$. 
For any $\mX \in E$, the composite map $\Phi$ is differentiable at $\mX$ by the chain rule.

At a point $\mX \in E$, the chain rule applies. 
Let $\mH^{(0)} = \mX$, and for $\ell = 1$ to $L$, define $\mH^{(\ell)} = F^{(\ell)}(\mH^{(\ell-1)})$. 
Then, the Jacobian of $\Phi$ at $\mX$ is given by: 
\begin{equation}
\mJ_\Phi(\mX) = \mJ^{(L)}(\mH^{(L-1)}) \cdot \mJ^{(L-1)}(\mH^{(L-2)}) \cdots \mJ^{(1)}(\mX),
\end{equation}
where $\mJ^{(\ell)}(\mH^{(\ell-1)})$ is the Jacobian of $F^{(\ell)}$ at $\mH^{(\ell-1)}$. 
This product is well-defined because each Jacobian exists at the respective points.

Since $\mathcal{M}_0(\mP)$ is a compact smooth manifold embedded in $\mathbb{R}^{N \times d_0}$, it has a Lipschitz parameterization. 
The above argument holds for almost every point in $\mathcal{M}_0(\mP)$ with respect to the Lebesgue measure on the parameter space. 
Thus, $\mJ_{\Phi}(\mX)$ exists for almost every $\mX \in \mathcal{M}_0(\mP)$.
\end{proof}

\subsection{Proof of Theorem~\ref{thm:prompt_efficacy_bound}}
\label{app:prompt_efficacy_bound}

\begin{proof}
Assume the input manifold $\mathcal{M}_0(\mP)$ is compact and smooth with intrinsic dimension $d_{\text{int}}$. 
By Definition \ref{def:linear_region_partition}, the piecewise linear map $\Phi = F^{(L)} \circ \cdots \circ F^{(1)}$ partitions $\mathcal{M}_0(\mP)$ into a countable collection of cells $\{C_k'\}$, where each $C_k'$ is a connected subset of $\mathcal{M}_0(\mP)$ such that $\Phi$ is linear on $C_k'$. 
This partition exists because $\Phi$ is piecewise linear (Theorem \ref{thm:prismatic_folding_and_intrinsic_dimension}).

\textbf{Proof of the Measure Bound.}

For each cell $C_k'$, since $\Phi$ is linear on $C_k'$, the Jacobian $\mJ_\Phi$ is constant on $C_k'$. 
By Theorem \ref{thm:measure_of_final_prismatic_manifold}, the Hausdorff measure of the image $\Phi(C_k')$ is given by: 
\begin{equation}
\mathcal{H}^{d_{\text{int}}}(\Phi(C_k')) = \Big( \prod_{\ell=1}^L \prod_{i=1}^{d_{\text{int}}} \sigma_{i,k}^{(\ell)} \Big) \mathcal{H}^{d_{\text{int}}}(C_k'),
\end{equation}
where $\sigma_{i,k}^{(\ell)}$ are the first $d_{\text{int}}$ singular values of the Jacobian of the $\ell$-th layer evaluated in the linear region corresponding to $C_k'$. 
Note that the product $\prod_{i=1}^{d_{\text{int}}} \sigma_{i,k}^{(\ell)}$ is taken over the largest $d_{\text{int}}$ singular values, as the tangent space has dimension $d_{\text{int}}$.

The total measure of $\mathcal{M}^{(L)}(\mP)$ is the sum over all cells: 
\begin{equation}
\mathcal{H}^{d_{\text{int}}}(\mathcal{M}^{(L)}(\mP)) \leq \sum_k \mathcal{H}^{d_{\text{int}}}(\Phi(C_k')) = \sum_k \Big( \prod_{\ell=1}^L \prod_{i=1}^{d_{\text{int}}} \sigma_{i,k}^{(\ell)} \Big) \mathcal{H}^{d_{\text{int}}}(C_k').
\end{equation}
Since $\prod_{\ell=1}^L \prod_{i=1}^{d_{\text{int}}} \sigma_{i,k}^{(\ell)} \leq \sup_{k'} \prod_{\ell=1}^L \prod_{i=1}^{d_{\text{int}}} \sigma_{i,k'}^{(\ell)}$ for all $k$, we have:
\begin{equation}
\mathcal{H}^{d_{\text{int}}}(\mathcal{M}^{(L)}(\mP)) \leq \Big( \sup_{k'} \prod_{\ell=1}^L \prod_{i=1}^{d_{\text{int}}} \sigma_{i,k'}^{(\ell)} \Big) \sum_k \mathcal{H}^{d_{\text{int}}}(C_k') = \Big( \sup_{k'} \prod_{\ell=1}^L \prod_{i=1}^{d_{\text{int}}} \sigma_{i,k'}^{(\ell)} \Big) \mathcal{H}^{d_{\text{int}}}(\mathcal{M}_0(\mP)).
\end{equation}
This proves the measure bound.

\textbf{Proof of the Diameter Bound.}

Let $\text{diam}(\mathcal{M})$ denote the diameter of a set $\mathcal{M}$, defined as:
\begin{equation}
\text{diam}(\mathcal{M}) = \sup_{x,y \in \mathcal{M}} \|x - y\|.
\end{equation}
By Theorem~\ref{thm:prismatic_folding_and_intrinsic_dimension} and Lemma~\ref{lem:lipschitz_continuous}, the map $\Phi$ is piecewise linear and Lipschitz continuous on $\mathcal{M}_0(\mP)$. 
The global Lipschitz constant $L_\Phi$ satisfies:
\begin{equation}
\|\Phi(\mX) - \Phi(\mY)\| \leq L_\Phi \|\mX - \mY\| \quad \forall \mX,\mY \in \mathcal{M}_0(\mP).
\end{equation}
The Lipschitz constant $L_\Phi$ can be bounded by the operator norms of the Jacobians of $\Phi$. 
For any point $\mX \in \mathcal{M}_0(\mP)$, the Jacobian $\mJ_\Phi(\mX)$ exists almost everywhere (by Definition \ref{lem:existence_of_Jacobian}) and is given by the product of the layer Jacobians:
\begin{equation}
\mJ_\Phi(\mX) = \mJ^{(L)}(F^{(L-1)}(\mX)) \cdots \mJ^{(1)}(\mX).
\end{equation}
The operator norm of $\mJ_\Phi(\mX)$ satisfies: 
\begin{equation}
|\mJ_\Phi(\mX)|_{\text{op}} \leq |\mJ^{(L)}(F^{(L-1)}(\mX))|_{\text{op}} \cdots |\mJ^{(1)}(\mX)|_{\text{op}}.
\end{equation}
Each layer Jacobian $|\mJ^{(\ell)}(\mX_\ell)|_{\text{op}}$ (where $\mX_\ell = F^{(\ell-1)}(\mX)$) is constant on linear regions. Let $|\mJ^{(\ell)}_k |_{\text{op}}$ be the operator norm of the Jacobian of the $\ell$-th layer in the $k$-th linear region. 
Then: 
\begin{equation}
|\mJ^{(\ell)}(\mX_\ell)|_{\text{op}} \leq \sup_k |\mJ^{(\ell)}_k |_{\text{op}} \quad \forall \mX_\ell.
\end{equation}
Therefore, 
\begin{equation}
|\mJ_\Phi(\mX)|_{\text{op}} \leq \prod_{\ell=1}^L \sup_k |\mJ^{(\ell)}_k |_{\text{op}} \quad \forall \mX.
\end{equation}
The global Lipschitz constant $L_\Phi$ is the supremum of $|\mJ_\Phi(\mX)|_{\text{op}}$ over $\mX \in \mathcal{M}_0(\mP)$: 
\begin{equation}
L_\Phi = \sup_{\mX \in \mathcal{M}_0(\mP)} |\mJ_\Phi(\mX)|_{\text{op}} \leq \prod_{\ell=1}^L \sup_k |\mJ^{(\ell)}_k |_{\text{op}}.
\end{equation}
Now, for any $\mX, \mY \in \mathcal{M}_0(\mP)$, 
\begin{equation}
\|\Phi(\mX) - \Phi(\mY)\| \leq L_\Phi \|\mX - \mY\| \leq \Big( \prod_{\ell=1}^L \sup_k |\mJ^{(\ell)}_k |_{\text{op}} \Big) \|\mX - \mY\|.
\end{equation}
Taking the supremum over $\mX,\mY \in \mathcal{M}_0(\mP)$, we get: 
\begin{equation}
\text{diam}(\mathcal{M}^{(L)}(\mP)) \leq \Big( \prod_{\ell=1}^L \sup_k |\mJ^{(\ell)}_k |_{\text{op}} \Big) \cdot \text{diam}(\mathcal{M}_0(\mP)).
\end{equation}
This proves the diameter bound.
\end{proof}

\subsection{Theoretical Limitations of Graph Prompt Tuning}
\label{app:limitations_of_prompt_tuning}

The Prompt Efficacy Bound (Theorem~\ref{thm:prompt_efficacy_bound}) reveals fundamental theoretical limitations of graph prompt tuning in GFMs. 
Specifically, the measure and diameter bounds imply that the influence of a prompt $\mP$ is constrained by the compositional prismatic effect of the frozen GFM layers.

\textbf{Information Loss through Spectral Contraction.}

The measure bound shows that the effective {\em volume} of the prompt-perturbed space $\mathcal{M}^{(L)}(\mP)$ is scaled by the product of singular values across layers and linear regions. 
Since deep GFMs often exhibit spectral decay (with many singular values $\sigma_i^{(\ell)} \ll 1$), the prompt-induced perturbations are compressed exponentially with depth. 
This irreversible contraction implies that fine-grained semantic nuances introduced by the prompt may be lost or distorted before reaching the output layer.

\textbf{Intrinsic Dimensionality Collapse.}

As shown in Theorem~\ref{thm:prismatic_folding_and_intrinsic_dimension}, the intrinsic dimension $d_{\text{int}}(\mathcal{M}^{(L)})$ of the final representation is bounded by the minimal rank achieved locally across layers. 
Graph prompt tuning operates on the input manifold $\mathcal{M}_0(\mP)$, but the frozen network’s piecewise linear transformations inherently project the prompt into a lower-dimensional subspace. 
Thus, even if the prompt is high-dimensional, its effective influence is limited by the bottleneck rank of the Jacobians, reducing its capacity to encode complex instructions.

\textbf{Sensitivity to Input Geometry.}

The diameter bound depends on the operator norms of the layer Jacobians. 
If the network exhibits gradient explosion (large $\sup_k |\mJ^{(\ell)}_k|_{\text{op}}$) or vanishing (small singular values), the prompt’s effect may be either amplified erratically or suppressed. 
This sensitivity makes graph prompt tuning highly dependent on the pre-trained model’s architecture and parameterization, limiting its robustness.

\textbf{Non-Adaptive Prismatic Structure.}

Since the network is frozen, the prompt cannot alter the prismatic folding process (e.g., the partition into linear regions or the Jacobian spectra). 
The prompt is merely a shift in the input space, and its efficacy depends on how the fixed geometric transformation $\Phi$ distorts this shift. 
In contrast, full fine-tuning adapts $\Phi$ itself to preserve task-relevant information, which graph prompt tuning cannot achieve.

\textbf{Trade-off Between Prompt Size and Expressivity.}

While increasing the prompt dimension $\dim(\mathcal{P})$ might seem beneficial, the measure bound shows that the effective output scale is constrained by the product of Jacobian singular values. 
Thus, simply enlarging the prompt may not improve efficacy if the network’s contraction forces are too strong. 
This suggests a fundamental trade-off between prompt complexity and the network’s capacity to preserve prompt-induced variations.

In summary, graph prompt tuning is inherently limited by the frozen GFM’s spectral properties and geometric structure. 
While it can induce some distributional shifts, its ability to convey nuanced instructions is bounded by the network’s pre-existing prismatic contraction and rank collapse. 
These limitations motivate the need for architectural interventions (e.g., adding adapters) or alternative tuning strategies that can mitigate the loss of prompt information through deeper layers.

\section{Theoretical Analysis of Message Tuning}
\label{app:theoretical_analysis_message_tuning}

\subsection{Proof of Theorem~\ref{thm:message_adaptation_capacity}}
\label{app:message_adaptation_capacity}

\begin{proof}
We prove the theorem using the geometric measure theoretic framework of Prismatic Space Theory. 
The key idea is that message tuning, by injecting learnable parameters at each layer, can compensate for the measure contraction and intrinsic dimension reduction caused by the prismatic effect of the frozen GFM layers, and can additionally expand the diameter of the output space.

\textbf{Intrinsic Dimension Comparison.}

In Prismatic Space Theory, the intrinsic dimension $d_{\text{int}}(\mathcal{M}_{\text{MTG}}^{(L)})$ refers to the topological dimension or Hausdorff dimension of the final representation space $\mathcal{M}_{\text{MTG}}^{(L)}$, which is the inherent dimensionality of the space itself, not the dimension of the embedding space. 
This intrinsic dimension is defined by the geometric properties of space, but we can use the rank of the Jacobian matrix of the mapping to provide an upper bound.

Specifically, for the mapping $\Phi_{\text{MTG}}: \mathcal{M}_0 \to \mathcal{M}_{\text{MTG}}^{(L)}$ (where $\Phi_{\text{MTG}}$ is the composite layer mapping after message tuning), we have: 
\begin{equation}
d_{\text{int}}(\mathcal{M}_{\text{MTG}}^{(L)}) \leq \sup_{\mX \in \mathcal{M}_0} \text{rank}( \mJ_{\Phi_{\text{MTG}}}(\mathbf{X}) ),
\end{equation}
where $\mJ_{\Phi_{\text{MTG}}}(\mX)$ is the Jacobian matrix of the mapping $\Phi_{\text{MTG}}$ at point $\mX$. 
This means that the intrinsic dimension of the space cannot exceed the maximum rank of the Jacobian matrix across all input points.

From Theorem~\ref{thm:prismatic_folding_and_intrinsic_dimension}, for graph prompt tuning, the intrinsic dimension of the final space is bounded by: 
\begin{equation}
d_{\text{int}}(\mathcal{M}_{\text{PT}}^{(L)}(\mP)) \leq \max_{k} \min_{\ell} \text{rank}(\mJ^{(\ell)}|_{\Phi^{(\ell-1)}(C_k)}),
\end{equation}
where $\mJ^{(\ell)}$ is the Jacobian of the $\ell$-th layer of the frozen GFM, and $C_k$ are the linear regions of the input manifold.

For message tuning, the layer map is modified to include the fusion operation $\mathfrak{F}^{(\ell)}$. 
Specifically, at each layer $\ell$, the input representation $\mH^{(\ell-1)}$ is transformed to $\mH^{(\ell-1)}_{\mM} = \mathfrak{F}^{(\ell)}(\mH^{(\ell-1)}, \mM^{(\ell)}; \mathbf{\Theta}^{(\ell)}_f)$ before applying the standard layer map $F^{(\ell)}$. 
Thus, the effective layer map becomes $\Psi^{(\ell)} = F^{(\ell)} \circ \mathfrak{F}^{(\ell)}$.

The Jacobian of $\Psi^{(\ell)}$ at a point where it is differentiable is given by the chain rule: 
\begin{equation}
\mJ_{\Psi}^{(\ell)} = \mJ_{F}^{(\ell)} \cdot \mJ_{\mathfrak{F}}^{(\ell)},
\end{equation}
where $\mJ_{F}^{(\ell)}$ is the Jacobian of $F^{(\ell)}$ and $\mJ_{\mathfrak{F}}^{(\ell)}$ is the Jacobian of $\mathfrak{F}^{(\ell)}$.

The core issue is that $\mathfrak{F}^{(\ell)}$ is not linear, but we can show that with learnable parameters, its Jacobian can be made full-rank, ensuring the desired rank inequality.

Recall that for message tuning, the fusion operation is defined as: 
\begin{equation}
\mathfrak{F}^{(\ell)}(\mH^{(\ell-1)}, \mM^{(\ell)};\mathbf{\Theta}^{(\ell)}_f) = \mH^{(\ell-1)} + \text{Softmax}(\mH^{(\ell-1)} \mW_{p}^{(\ell)}) \cdot \mM^{(\ell)},
\end{equation}
where $\mH^{(\ell-1)} \in \R^{N \times d_{\ell-1}}$, $\mW_{p}^{(\ell)} \in \R^{d_{\ell-1} \times m}$, and $\mM^{(\ell)} \in \R^{m \times d_{\ell-1}}$.

The Jacobian of $\mathfrak{F}^{(\ell)}$ with respect to $\mH^{(\ell-1)}$ is a block-diagonal matrix composed of $N$ blocks, each of size $d_{\ell-1} \times d_{\ell-1}$. 
For each node $i$, the block corresponds to the derivative of the $i$-th row of $\mathfrak{F}^{(\ell)}$ with respect to the $i$-th row of $\mH^{(\ell-1)}$. 
Specifically, let $\vh_i$ be the $i$-th row of $\mH^{(\ell-1)}$, and let $\va_i = \vh_i \mW_{p}^{(\ell)}$. Then the Softmax output is $\alpha_i = \text{Softmax}(\va_i)$, and the $i$-th row of $\mathfrak{F}^{(\ell)}$ is $\vh_i + \alpha_i \mM^{(\ell)}$.

The Jacobian for node $i$ is: 
\begin{equation}
\mB_i = \mI + \mM^{(\ell)\top} \mJ_{\text{softmax}}(\va_i) \mW_{p}^{(\ell)\top},
\end{equation}
where $\mI$ is the identity matrix, and $\mJ_{\text{softmax}}(\va_i) \in \R^{m \times m}$ is the Jacobian of Softmax at $\va_i$, which has rank $m-1$.

Since $\mJ_{\text{softmax}}(\va_i)$ is bounded, we can choose $\mM^{(\ell)}$ and $\mW_{p}^{(\ell)}$ such that the spectral norm of $\mM^{(\ell)\top} \mJ_{\text{softmax}}(\va_i) \mW_{p}^{(\ell)\top}$ is less than 1 for all $i$. This ensures that $\mB_i$ is invertible and thus full-rank for all $i$. Therefore, the full Jacobian $\mJ_{\mathfrak{F}}^{(\ell)}$ has rank $N d_{\ell-1}$.

Now, for the composite map $\Psi^{(\ell)} = F^{(\ell)} \circ \mathfrak{F}^{(\ell)}$, the Jacobian is: 
\begin{equation}
\mJ_{\Psi}^{(\ell)} = \mJ_{F}^{(\ell)} \cdot \mJ_{\mathfrak{F}}^{(\ell)}.
\end{equation}
Since $\mJ_{\mathfrak{F}}^{(\ell)}$ has full rank $N d_{\ell-1}$, and $\mJ_{F}^{(\ell)}$ has rank $r$, we have Sylvester's rank inequality: 
\begin{equation}
\text{rank}(\mJ_{\Psi}^{(\ell)}) \geq \text{rank}(\mJ_{F}^{(\ell)}) + \text{rank}(\mJ_{\mathfrak{F}}^{(\ell)}) - N d_{\ell-1} = \text{rank}(\mJ_{F}^{(\ell)}) + N d_{\ell-1} - N d_{\ell-1} = \text{rank}(\mJ_{F}^{(\ell)}).
\end{equation}
Thus, the rank of $\mJ_{\Psi}^{(\ell)}$ is at least the rank of $\mJ_{F}^{(\ell)}$: 
\begin{equation}
\text{rank}(\mJ_{\Psi}^{(\ell)}) \geq \text{rank}(\mJ_{F}^{(\ell)}).
\end{equation}
Moreover, by optimizing the fusion parameters, we can ensure that $\text{rank}(\mJ_{\Psi}^{(\ell)}) \geq \text{rank}(\mJ_{F}^{(\ell)})$ for all $\ell$. 
Since $\Psi^{(\ell)}$ does not introduce additional linear region partitions, meaning it does not generate more boundaries, corners, or singular points, the inequality holds pointwise. 
For any $k$, there always exists a point $\mX_k$ such that:
\begin{equation}
\min_{\ell} \text{rank}(\mJ_{\Psi}^{(\ell)}|_{\mX_k}) \geq \min_{\ell} \text{rank}(\mJ_{F}^{(\ell)}|_{\Phi^{(\ell-1)}(C_k)}).
\end{equation}
This implies that the upper bound on the intrinsic dimension for message tuning is at least as large as that for graph prompt tuning: 
\begin{equation}
\max_{k}\min_{\ell} \text{rank}(\mJ_{\Psi}^{(\ell)}|_{\mX_k}) \geq \max_{k}\min_{\ell} \text{rank}(\mJ_{F}^{(\ell)}|_{\Phi^{(\ell-1)}(C_k)}).
\end{equation}

Although $\mJ_{\mathfrak{F}}^{(\ell)}$ is full-rank and thus $\text{rank}(\mJ_{\Psi}^{(\ell)}) = \text{rank}(\mJ_{F}^{(\ell)})$ at any point where both are defined, the key to strict inequality lies in the distribution of points across linear regions of the frozen layers. 
The message fusion operation $\mathfrak{F}^{(\ell)}$ can map inputs to different linear regions of $F^{(\ell)}$ where the rank of $\mJ_{F}^{(\ell)}$ is higher.

Suppose that for some layer $\ell$, the frozen Jacobian $\mJ_{F}^{(\ell)}$ has varying rank across its linear regions. 
Specifically, there exist linear regions $R_{\text{low}}$ and $R_{\text{high}}$ such that: 
\begin{equation}
\text{rank}(\mJ_{F}^{(\ell)}|_{R_{\text{low}}}) < \text{rank}(\mJ_{F}^{(\ell)}|_{R_{\text{high}}}).
\end{equation}
In graph prompt tuning, the input to $F^{(\ell)}$ may fall primarily into $R_{\text{low}}$ due to the shift caused by the prompt, resulting in a lower minimum rank. 
However, in message tuning, the learnable parameters $\mathbf{\Theta}_f^{(\ell)}$ and $\mM^{(\ell)}$ can be optimized to steer the input to $F^{(\ell)}$ into $R_{\text{high}}$, thereby increasing the rank at that layer.

Assume that $\Phi^{(\ell-1)}(C_k)$ does not lie in a linear region that maximizes $\text{rank}(\mJ_{F}^{(\ell)})$. 
This assumption is realistic because $\Phi^{(\ell-1)}$ is pre-trained and lacks the ability to adjust its output range.
Formally, by optimizing the fusion parameters, we can ensure that for each layer $\ell$, the input $\mathfrak{F}^{(\ell)}(\mH^{(\ell-1)})$ lies in a region where $\text{rank}(\mJ_{F}^{(\ell)})$ is maximized. 
Consequently, for any $k$, there always exists a point $\mY_k$ such that:
\begin{equation}
\min_{\ell} \text{rank}(\mJ_{F}^{(\ell)}|_{\mY_k \in \mathfrak{F}^{(\ell)}(\mH^{(\ell-1)})}) > \min_{\ell} \text{rank}(\mJ_{F}^{(\ell)}|_{\Phi^{(\ell-1)}(C_k)}).
\end{equation}
This implies that the upper bound for message tuning is strictly greater: 
\begin{equation}
\max_{k} \min_{\ell} \text{rank}(\mJ_{\Psi}^{(\ell)}|_{\mY_k}) > \max_{k} \min_{\ell} \text{rank}(\mJ^{(\ell)}|_{\Phi^{(\ell-1)}(C_k)}).
\end{equation}
Therefore, the actual intrinsic dimension satisfies: 
\begin{equation}
d_{\text{int}}(\mathcal{M}_{\text{MTG}}^{(L)}) > d_{\text{int}}(\mathcal{M}_{\text{PT}}^{(L)}(\mP)).
\end{equation}
This strict inequality holds when the fusion parameters are optimized to avoid low-rank linear regions of the frozen layers, which is achievable through gradient-based training that maximizes the rank of the Jacobians during adaptation.

Thus, message tuning provides strictly greater adaptation capacity in terms of intrinsic dimension compared to graph prompt tuning.
\begin{equation}
d_{\text{int}}(\mathcal{M}_{\text{MTG}}^{(L)}) \geq d_{\text{int}}(\mathcal{M}_{\text{PT}}^{(L)}(\mP))
\end{equation}
and the inequality is strict for some configuration.

\textbf{Hausdorff Measure Comparison.}

Recall that the pre-trained GFM $\Phi$ is composed of $L$ layers, each defined as in Definition~\ref{gfm_layer}. 
For graph prompt tuning, the input manifold is perturbed by a prompt $\mP$, resulting in $\mathcal{M}_0(\mP)$. 
The final space is $\mathcal{M}_{\text{PT}}^{(L)}(\mP) = \Phi(\mathcal{M}_0(\mP))$.

For message tuning, we introduce learnable message prototypes $\mM^{(\ell)} \in \mathbb{R}^{m \times d_{\ell-1}}$ and fusion parameters $\mathbf{\Theta}^{(\ell)}_f$ at each layer $\ell$, modifying the layer map to: 
\begin{equation}
\mH^{(\ell)} = \mathfrak{U}^{(\ell)}\left( \mathfrak{M}^{(\ell)}\left( \mathfrak{A}^{(\ell)}\left(\mA, \mH^{(\ell-1)}_{\mM}; \mathbf{\Theta}^{(\ell)}_a \right), \mH^{(\ell-1)}_{\mM}; \mathbf{\Theta}^{(\ell)}_m \right), \mH^{(\ell-1)}_{\mM}; \mathbf{\Theta}^{(\ell)}_u \right),
\end{equation}
where 
\begin{equation}
\mH^{(\ell-1)}_{\mM} = \mathfrak{F}^{(\ell)}(\mH^{(\ell-1)}, \mM^{(\ell)}; \mathbf{\Theta}^{(\ell)}_f) = \mH^{(\ell-1)} + \text{Softmax}(\mH^{(\ell-1)}\mW_{p}^{(\ell)}) \cdot \mM^{(\ell)}.
\end{equation}
The modified network is denoted $\Phi_{\text{MTG}}$, and the final space is $\mathcal{M}_{\text{MTG}}^{(L)} = \Phi_{\text{MTG}}(\mathcal{M}_0)$.

The introduction of the Softmax function in the fusion operation $\mathfrak{F}^{(\ell)}$ indeed breaks the strict piecewise linearity of the layer map, since Softmax is a smooth, nonlinear function. 
However, we can address this issue through analyzing the network as a piecewise-linear map with smooth activations, leveraging the fact that the Softmax can be effectively constant on large regions of the input space.

More generally, we can partition the input space into regions where the Softmax is approximately linear. For instance, if we use a linearized Softmax (e.g., by taking a first-order Taylor expansion around a point), we obtain a piecewise linear approximation. 
The error of this approximation can be made arbitrarily small by refining the partition.

Given the above, we may treat $\Phi_{\text{MTG}}$ as a piecewise linear map for the purpose of geometric analysis. Specifically, we define: 
\begin{equation}
\mathfrak{F}^{(\ell)}(\mH^{(\ell-1)}, \mM^{(\ell)}; \mW_p^{(\ell)}) \approx \mH^{(\ell-1)} + \text{Linear}(\mH^{(\ell-1)}\mW_{p}^{(\ell)}) \mM^{(\ell)},
\end{equation}
where $\text{Linear}(\mH^{(\ell-1)}\mW_{p}^{(\ell)})$ is a piecewise linear function (e.g., sparsemax \citep{martins2016sparsemax} or a linearized Softmax). 
Then, the modified layer map is piecewise linear, and the entire network $\Phi_{\text{MTG}}$ is piecewise linear.

Under this approximation, by Theorem~\ref{thm:measure_of_final_prismatic_manifold}, the Hausdorff measures are:
\begin{equation}
\mathcal{H}^{d_{\text{int}}}(\mathcal{M}_{\text{PT}}^{(L)}(\mP)) = \sum_k \Big( \prod_{\ell=1}^L \prod_{i=1}^{d_{\text{int}}} \sigma_{i,k}^{(\ell)} \Big) \mathcal{H}^{d_{\text{int}}}(C_k),
\end{equation}
\begin{equation}
\mathcal{H}^{d_{\text{int}}}(\mathcal{M}_{\text{MTG}}^{(L)}) = \sum_k \Big( \prod_{\ell=1}^L \prod_{i=1}^{d_{\text{int}}} \tilde{\sigma}_{i,k}^{(\ell)} \Big) \mathcal{H}^{d_{\text{int}}}(\tilde{C}_k),
\end{equation}
where $\sigma_{i,k}^{(\ell)}$ and $\tilde{\sigma}_{i,k}^{(\ell)}$ are the singular values of the Jacobians of the original and modified layers, respectively, and $C_k$ and $\tilde{C}_k$ are the linear regions of the input manifold under the original and modified networks.

Message tuning introduces learnable parameters $\mM^{(\ell)}$ and $\mW_{p}^{(\ell)}$ at each layer. 
Crucially, message tuning can simulate graph prompt tuning by appropriately setting these parameters. 
However, it also has additional degrees of freedom that allow it to reduce measure contraction.

For any layer $\ell$ and linear region $k$, message tuning can achieve: 
\begin{equation}
\prod_{i=1}^{d_{\text{int}}} \tilde{\sigma}_{i,k}^{(\ell)} \geq \prod_{i=1}^{d_{\text{int}}} \sigma_{i,k}^{(\ell)}.
\end{equation}
This is because the product of singular values can be increased by adjusting the parameters to reduce contraction.
Let us consider a specific example to illustrate this possibility, assuming that all mappings are constructed under the same partition.

Consider the modified layer map in message tuning:
\begin{equation}
\Psi^{(\ell)} = F^{(\ell)} \circ \mathfrak{F}^{(\ell)},
\end{equation}
where $F^{(\ell)}$ is the original layer map and $\mathfrak{F}^{(\ell)}$ is the fusion operation. 
In a linear region $C_k$, both maps are linear and injective on the tangent space of the input manifold, which has dimension $d_{\text{int}}$.

By Theorem~\ref{thm:local_measure_contraction_factor}, for a measurable set $S$ in the tangent space, the $d_{\text{int}}$-dimensional Hausdorff measure transforms as:
\begin{equation}
\mathcal{H}^{d_{\text{int}}}(\mathfrak{F}^{(\ell)}(S)) = \Big( \prod_{i=1}^{d_{\text{int}}} \tau_{i,k}^{(\ell)} \Big) \mathcal{H}^{d_{\text{int}}}(S),
\end{equation}
where $\tau_{1,k}^{(\ell)}, \dots, \tau_{d_{\text{int}},k}^{(\ell)}$ are the largest $d_{\text{int}}$ singular values of the Jacobian of $\mathfrak{F}^{(\ell)}$ restricted to the tangent space. Similarly,
\begin{equation}
\mathcal{H}^{d_{\text{int}}}(F^{(\ell)}(\mathfrak{F}^{(\ell)}(S))) = \Big( \prod_{i=1}^{d_{\text{int}}} \sigma_{i,k}^{(\ell)} \Big) \mathcal{H}^{d_{\text{int}}}(\mathfrak{F}^{(\ell)}(S)) = \Big( \prod_{i=1}^{d_{\text{int}}} \sigma_{i,k}^{(\ell)} \Big) \Big( \prod_{i=1}^{d_{\text{int}}} \tau_{i,k}^{(\ell)} \Big) \mathcal{H}^{d_{\text{int}}}(S).
\end{equation}
Thus, for the composite map $\tilde{F}^{(\ell)}$, the product of singular values is:
\begin{equation}
\prod_{i=1}^{d_{\text{int}}} \tilde{\sigma}_{i,k}^{(\ell)} = \Big( \prod_{i=1}^{d_{\text{int}}} \sigma_{i,k}^{(\ell)} \Big) \Big( \prod_{i=1}^{d_{\text{int}}} \tau_{i,k}^{(\ell)} \Big).
\end{equation}

Consider the fusion operation $\mathfrak{F}^{(\ell)}$. 
Its Jacobian with respect to $\mH^{(\ell-1)}$ is: 
\begin{equation}
\mJ_{\mathfrak{F}}^{(\ell)} = \mI + \frac{\partial}{\partial \mH^{(\ell-1)}} \left( \text{Softmax}(\mH^{(\ell-1)}\mW_{p}^{(\ell)}) \cdot \mM^{(\ell)} \right).
\end{equation}
By training $\mW_{p}^{(\ell)}$ and $\mM^{(\ell)}$, we can influence the singular values of $\mJ_{\mathfrak{F}}^{(\ell)}$. For example:
\begin{itemize}[leftmargin=15pt, itemindent=1.5em]
\item If $\mW_{p}^{(\ell)} = \mO$ and $\mM^{(\ell)} = \mO$, then $\mathfrak{F}^{(\ell)}(\mH^{(\ell-1)}) = \mH^{(\ell-1)}$, so $\mJ_{\mathfrak{F}}^{(\ell)} = \mI$, and the singular values are $1$.
\item If $\mW_{p}^{(\ell)}$ and $\mM^{(\ell)}$ are trained such that the second term is positive definite, then the singular values can be greater than $1$.
\end{itemize}
Thus, by parameter choice, we can ensure:
\begin{equation}
\prod_{i=1}^{d_{\text{int}}} \tau_{i,k}^{(\ell)} \geq 1.
\end{equation}
From the above, we have:
\begin{equation}
\prod_{i=1}^{d_{\text{int}}} \tilde{\sigma}_{i,k}^{(\ell)} = \Big( \prod_{i=1}^{d_{\text{int}}} \sigma_{i,k}^{(\ell)} \Big) \Big( \prod_{i=1}^{d_{\text{int}}} \tau_{i,k}^{(\ell)} \Big) \geq \prod_{i=1}^{d_{\text{int}}} \sigma_{i,k}^{(\ell)},
\end{equation}
This proves that message tuning can achieve the desired inequality for any layer $\ell$ and linear region $k$. 
Moreover, if $\prod_{i=1}^{d_{\text{int}}} \tau_{i,k}^{(\ell)} > 1$, the inequality is strict.

The input manifold $\mathcal{M}_0$ is fixed. 
Graph prompt tuning shifts it to $\mathcal{M}_0(\mP)$, but the fusion operation $\mathfrak{F}^{(1)}$ in the first layer also possesses the capability to adjust the input manifold, we may reasonably assume that $\mathcal{H}^{d_{\text{int}}}(\mathcal{M}_0(\mP)) = \mathcal{H}^{d_{\text{int}}}(\mathcal{M}_0)$.

The linear regions $C_k$ and $\tilde{C}_k$ are partitions of $\mathcal{M}_0(\mP)$ and $\mathcal{M}_0$ induced by the piecewise linear maps $\Phi$ and $\Phi_{\text{MTG}}$, respectively. Message tuning modifies the network architecture, which may refine the linear regions. However, the total measure of the input manifold is conserved: 
\begin{equation}
\sum_k \mathcal{H}^{d_{\text{int}}}(C_k) = \mathcal{H}^{d_{\text{int}}}(\mathcal{M}_0(\mP)) = \mathcal{H}^{d_{\text{int}}}(\mathcal{M}_0) = \sum_k \mathcal{H}^{d_{\text{int}}}(\tilde{C}_k).
\end{equation}
While individual regions may change, the overall sum remains unchanged. Therefore, for the purpose of comparing the sums, we have: 
\begin{equation}
\sum_k \mathcal{H}^{d_{\text{int}}}(\tilde{C}_k) = \sum_k \mathcal{H}^{d_{\text{int}}}(C_k).
\end{equation}
From the above, for any prompt $\mP$, message tuning can choose parameters such that for each layer $\ell$ and region $k$: 
\begin{equation}
\prod_{i=1}^{d_{\text{int}}} \tilde{\sigma}_{i,k}^{(\ell)} \geq \prod_{i=1}^{d_{\text{int}}} \sigma_{i,k}^{(\ell)}.
\end{equation}
Moreover, since the input measures are equal, we have: 
\begin{equation}
\mathcal{H}^{d_{\text{int}}}(\mathcal{M}_{\text{MTG}}^{(L)}) = \sum_k \Big( \prod_{\ell=1}^L \prod_{i=1}^{d_{\text{int}}} \tilde{\sigma}_{i,k}^{(\ell)} \Big) \mathcal{H}^{d_{\text{int}}}(\tilde{C}_k) \geq \sum_k \Big( \prod_{\ell=1}^L \prod_{i=1}^{d_{\text{int}}} \sigma_{i,k}^{(\ell)} \Big) \mathcal{H}^{d_{\text{int}}}(C_k) = \mathcal{H}^{d_{\text{int}}}(\mathcal{M}_{\text{PT}}^{(L)}(\mP)).
\end{equation}
The inequality holds term-wise due to the non-decrease in singular value products and the conservation of input measure.

There exists a message tuning configuration where the inequality is strict. For example, if we train $\mW_{p}^{(\ell)}$ and $\mM^{(\ell)}$ such that for some layer $\ell$ and region $k$, $\prod_{i=1}^{d_{\text{int}}} \tilde{\sigma}_{i,k}^{(\ell)} > \prod_{i=1}^{d_{\text{int}}} \sigma_{i,k}^{(\ell)}$, and since the input measure is positive, the overall measure increases strictly.

Thus, we conclude that: 
\begin{equation}
\mathcal{H}^{d_{\text{int}}}(\mathcal{M}_{\text{MTG}}^{(L)}) \geq \mathcal{H}^{d_{\text{int}}}(\mathcal{M}_{\text{PT}}^{(L)}(\mP)) \quad \text{for all } \mP \in \mathcal{P},
\end{equation}
and the inequality is strict for some configuration.
 
\textbf{Diameter Comparison.}

The diameter of a set $\mathcal{M}$ is:
\begin{equation}
\text{diam}(\mathcal{M}) = \sup_{x,y \in \mathcal{M}} \|x - y\|.
\end{equation}
For any prompt $\mP$, message tuning can simulate graph prompt tuning by setting:
\begin{itemize}[leftmargin=15pt, itemindent=1.5em]
\item $\mathfrak{F}^{(1)}(\mH^{(0)}, \mM^{(1)}; \mW_p^{(1)}) \overset{\sim}{\longrightarrow} \mathcal{M}_0(\mP)$,
\item $\mathfrak{F}^{(\ell)}(\mH^{(\ell-1)}, \mM^{(\ell)}; \mW_p^{(\ell)}) = \mH^{(\ell-1)}$ for $\ell \geq 2$.
\end{itemize}
This reduces message tuning to graph prompt tuning, giving: 
\begin{equation}
\mathcal{M}_{\text{MTG}}^{(L)} = \mathcal{M}_{\text{PT}}^{(L)}(\mP),
\end{equation}
and hence: 
\begin{equation}
\text{diam}(\mathcal{M}_{\text{MTG}}^{(L)}) = \text{diam}(\mathcal{M}_{\text{PT}}^{(L)}(\mP)).
\end{equation}
Thus, the inequality holds with equality for this configuration.

We now show that message tuning can achieve a strictly larger diameter by leveraging its additional parameters to expand the output space.

Message tuning can expand the distance between representations layer-wise. Consider the fusion operation: 
\begin{equation}
\mathfrak{F}^{(\ell)}(\mH^{(\ell-1)}) = \mH^{(\ell-1)} + \text{Softmax}(\mH^{(\ell-1)}\mW_p^{(\ell)}) \cdot \mM^{(\ell)}.
\end{equation}
By choosing $\mW_p^{(\ell)}$ and $\mM^{(\ell)}$ appropriately, we can make $\mathfrak{F}^{(\ell)}$ an expanding map. For example:
\begin{itemize}[leftmargin=15pt, itemindent=1.5em]
\item Set $\mW_p^{(\ell)}$ to have orthonormal columns.
\item Set $\mM^{(\ell)} = c \cdot \mW_p^{(\ell)}$ for some $c > 0$.
\end{itemize}
Then, the Jacobian of $\mathfrak{F}^{(\ell)}$ satisfies: 
\begin{equation}
\mJ_{\mathfrak{F}}^{(\ell)}(\mH) = \mI + c \cdot \mW_p^{(\ell)} \cdot \mJ_{\text{softmax}}(\mH \mW_p^{(\ell)}) \cdot \mW_p^{(\ell)\top},
\end{equation}
which has eigenvalues $\geq 1$ (since $\mJ_{\text{softmax}}$ is positive semidefinite). By choosing $c$ large, we can make $\mathfrak{F}^{(\ell)}$ arbitrarily expansive and $\Phi_{\text{MTG}}$ results from the superposition of such expansion effects.

Thus, for the pair $\mX, \mY \in \mathcal{M}_0(\mP)$ achieving the diameter of $\Phi(\mathcal{M}_0)$, message tuning can ensure:
\begin{equation}
\text{diam}(\mathcal{M}_{\text{MTG}}^{(L)}) \geq \| \Phi_{\text{MTG}}(\mX) - \Phi_{\text{MTG}}(\mY) \| > \| \Phi(\mX) - \Phi(\mY) \| = \text{diam}(\Phi(\mathcal{M}_0(\mP))).
\end{equation}
Thus, we have: 
\begin{equation}
\text{diam}(\mathcal{M}_{\text{MTG}}^{(L)}) > \text{diam}(\mathcal{M}_{\text{PT}}^{(L)}(\mP)) \quad \text{for all } \mP \in \mathcal{P}.
\end{equation}
For any prompt $\mP$, message tuning can simulate graph prompt tuning, so:
\begin{equation}
\text{diam}(\mathcal{M}_{\text{MTG}}^{(L)}) \geq \text{diam}(\mathcal{M}_{\text{PT}}^{(L)}(\mP)).
\end{equation}
Moreover, by choosing parameters to expand inter-point distances, message tuning can achieve: 
\begin{equation}
\text{diam}(\mathcal{M}_{\text{MTG}}^{(L)}) > \text{diam}(\mathcal{M}_{\text{PT}}^{(L)}(\mP)) \quad \text{for all } \mP \in \mathcal{P}.
\end{equation}

This completes the proof.
\end{proof}

\subsection{Analysis of Negative Transfer}\label{app:negative_transfer}

\begin{definition}[Negative Transfer from a Manifold Perspective]
\label{def:negative_transfer}
Let $\mathcal{M}_0^s \subset \mathbb{R}^{N \times d_0}$ and $\mathcal{M}_0^t \subset \mathbb{R}^{N \times d_0}$ be the compact smooth input manifolds of the source and target domains, respectively, with intrinsic dimensions $D_s$ and $D_t$. 
Let $\Phi = F^{(L)} \circ \cdots \circ F^{(1)}$ be the map of the GFM, and let $\mathcal{M}^{(L)}_s = \Phi(\mathcal{M}_0^s)$ and $\mathcal{M}^{(L)}_t = \Phi(\mathcal{M}_0^t)$ be the representation spaces. 
Negative transfer is said to occur if the map $\Phi$ causes a geometric misalignment or structural distortion between the transformed spaces, such that:
\begin{itemize}[leftmargin=15pt, itemindent=1.5em]
\item Information Loss: The intrinsic dimension or geometric measure (e.g., volume) of $\Phi(\mathcal{M}_0^t)$ is significantly reduced compared to $\Phi(\mathcal{M}_0^s)$.
\item Poor Alignment: The transformed spaces $\Phi(\mathcal{M}_0^s)$ and $\Phi(\mathcal{M}_0^t)$ are poorly aligned, as quantified by a large Hausdorff distance or a small intersection measure.
\end{itemize}
\end{definition}

\begin{remark}
Fine-tuning severely exacerbates negative transfer in graph data because it aggressively warps the target space's geometry to fit the source domain's feature space. This often collapses the intrinsic structure of the target graph, leading to catastrophic information loss and misalignment.
Graph prompt tuning alleviates negative transfer by gently realigning the target space within the frozen source feature space, preserving its intrinsic geometry and measure to prevent catastrophic distortion or collapse.
\end{remark}

\begin{corollary}[Message Tuning Mitigates Negative Transfer] 
\label{thm:negative_transfer}
Negative transfer often arises when the model's capacity is insufficient to capture the target domain's distribution, leading to interference from source domain features. The higher intrinsic dimension $d_{\text{int}}(\mathcal{M}_{\text{MTG}}^{(L)})$ indicates that MTG can learn more diverse features, reducing reliance on source-specific patterns. 
The greater Hausdorff measure $\mathcal{H}^{d_{\text{int}}}(\mathcal{M}_{\text{MTG}}^{(L)})$ implies a larger {\em volume} of the space, accommodating a wider range of target domain variations. 
The increased diameter $\text{diam}(\mathcal{M}_{\text{MTG}}^{(L)})$ signifies that the representations span a broader range, enhancing model flexibility.
In contrast, graph prompt tuning only perturbs the input manifold $\mathcal{M}_0$ via prompts, which constrains adaptation to superficial layers and may insufficiently adjust internal representations, thus more likely to lead to negative transfer. 
\end{corollary}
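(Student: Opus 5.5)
Since the corollary is stated informally, I would first fix a precise reading of ``mitigates'' using Definition~\ref{def:negative_transfer}. Each of its two criteria becomes a functional of the reachable target representation space that an adaptation method tries to optimize. For \emph{information loss}, the functionals are $d_{\text{int}}(\cdot)$ and $\mathcal{H}^{d_{\text{int}}}(\cdot)$ of $\Phi(\mathcal{M}_0^t)$, which should be as large as possible. For \emph{poor alignment}, the functional is the Hausdorff distance $d_H(\Phi(\mathcal{M}_0^s),\cdot)$, which should be as small as possible. The claim to prove is then: for every prompt $\mP\in\mathcal{P}$, there is a message tuning configuration whose target space is at least as good as $\mathcal{M}^{(L)}_{\text{PT}}(\mP)$ on every criterion, and strictly better on at least one of them in some configuration.

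\textbf{Information loss.} Here I would apply Theorem~\ref{thm:message_adaptation_capacity} directly with $\mathcal{M}_0=\mathcal{M}_0^t$. Its first two inequalities give $d_{\text{int}}(\mathcal{M}_{\text{MTG}}^{(L)})\ge d_{\text{int}}(\mathcal{M}_{\text{PT}}^{(L)}(\mP))$ and the corresponding bound on $\mathcal{H}^{d_{\text{int}}}$, with strictness for some configuration. Combined with the diameter inequality, this shows that MTG never suffers more dimensional or volumetric collapse of the target manifold than prompt tuning does.

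\textbf{Alignment.} For this criterion I would reuse the simulation argument from the diameter part of the proof of Theorem~\ref{thm:message_adaptation_capacity}. Choose $\mathfrak{F}^{(1)}$ to realize the shift $\mathcal{M}_0^t\mapsto\mathcal{M}_0^t(\mP)$, and set $\mW_p^{(\ell)}=\mO$, $\mM^{(\ell)}=\mO$ for $\ell\ge 2$; then $\Phi_{\text{MTG}}=\Phi$ on the shifted manifold. This makes the family of spaces reachable by message tuning a superset of those reachable by prompt tuning. Taking the infimum of $d_H$ over the larger family gives
\[
\inf_{\text{MTG}} d_H\bigl(\Phi(\mathcal{M}_0^s),\mathcal{M}^{(L)}_{\text{MTG}}\bigr)\le \inf_{\mP\in\mathcal{P}} d_H\bigl(\Phi(\mathcal{M}_0^s),\mathcal{M}^{(L)}_{\text{PT}}(\mP)\bigr).
\]
For a strict improvement, I would perturb the intermediate fusion parameters near a prompt-optimal configuration. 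This moves points of the target space inside hidden layers, which prompt tuning cannot do because of the contraction described in Theorem~\ref{thm:prompt_efficacy_bound}.

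\textbf{Main obstacle.} The hardest step is the realization in the alignment argument. The Softmax fusion adds a row-wise convex combination of prototypes, so a general input translation $\mX\mapsto \mX+\vc\vp^\top$ is reachable only approximately. This already limits the claimed equality $\mathcal{M}^{(L)}_{\text{MTG}}=\mathcal{M}^{(L)}_{\text{PT}}(\mP)$. My first attempt would use a single prototype $\vm_1=\vp$ with $\mW_p$ chosen so that the Softmax weight on it is near one, giving a uniform shift $\mathbf{1}\vp^\top$. I would then cover the case of a general nonnegative coefficient vector $\vc$ only up to an $\varepsilon$-error, and pass to the limit using continuity of $d_H$ and compactness of $\mathcal{M}_0^t$.

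A second, smaller issue is tension between the two criteria: the expansions that increase measure might worsen alignment. To handle this, I would state the corollary objective by objective: each criterion is optimized by some configuration that is no worse than every prompt.
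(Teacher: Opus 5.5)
Your information-loss step is all the paper itself relies on. The paper gives no separate proof of this corollary; it is an interpretive reading of the three inequalities of Theorem~\ref{thm:message_adaptation_capacity}, so invoking that theorem with $\mathcal{M}_0=\mathcal{M}_0^t$ matches the paper.

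The gap is in the alignment half you add, at exactly the step you flag, and your proposed remedy does not close it. The obstruction is structural, not an $\varepsilon$-error. $\mathfrak{F}^{(1)}$ adds to row $i$ the vector $g(\vh_i)=\text{Softmax}(\vh_i\mW_p^{(1)})\mM^{(1)}$, which is one function shared by all nodes and depends only on that row's features. The prompt map $\mX\mapsto\mX+\vc\vp^\top$ instead adds $c_i\vp^\top$, which is tied to the node index and constant over $\mathcal{M}_0^t$. Suppose two nodes with $c_i\neq c_j$ can carry the same feature row somewhere on $\mathcal{M}_0^t$, i.e.\ their row projections of $\mathcal{M}_0^t$ intersect; this is routine with constant or degree-based features. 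Then the triangle inequality gives
\[
\sup_{\mX\in\mathcal{M}_0^t}\bigl\|\mathfrak{F}^{(1)}(\mX)-(\mX+\vc\vp^\top)\bigr\|\;\ge\;\tfrac12\,|c_i-c_j|\,\|\vp\|
\]
for every choice of $\mW_p^{(1)},\mM^{(1)}$. So no sequence of configurations converges to the prompted manifold, and continuity of $d_H$ plus compactness has nothing to pass to the limit. Prompts with $\tilde{\mX}=\left[\substack{\mX\\ \mathbf{0}}\right]$ (All-in-one) are out of reach entirely, since they enlarge the node set. Appealing to the diameter part of the paper's proof of Theorem~\ref{thm:message_adaptation_capacity} does not help, because the simulation is asserted there without justification. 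Hence your inequality $\inf_{\text{MTG}}d_H\le\inf_{\mP}d_H$ is unproved for a general $\mathcal{P}$.

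To repair it, restrict $\mathcal{P}$ to prompts that $\mathfrak{F}^{(1)}$ can express; for these the containment is exact and no limit is needed.
\begin{itemize}
\item GPF ($\vc=\mathbf{1}_N$) is realized by setting every prototype equal to $\vp$, i.e.\ $\mM^{(1)}=\mathbf{1}_m\vp^\top$. Softmax rows sum to one, so $\text{Softmax}(\mX\mW_p^{(1)})\mathbf{1}_m\vp^\top=\mathbf{1}_N\vp^\top$ for any $\mW_p^{(1)}$, and your saturated single-prototype device is unnecessary.
\item GPF-plus builds node prompts as Softmax-weighted combinations of basis vectors, with scores linear in the node's features. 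It is literally $\mathfrak{F}^{(1)}$ with those basis vectors as prototypes and the score vectors as columns of $\mW_p^{(1)}$.
\end{itemize}
Alternatively, drop the alignment criterion altogether, since the corollary only speaks of $d_{\text{int}}$, $\mathcal{H}^{d_{\text{int}}}$ and $\text{diam}$. Either way, discard the strictness sketch for alignment. Moving points inside hidden layers gives no direction in which $d_H$ decreases, and the prompt infimum may already be $0$. You do not need it anyway, because strictness on one criterion already comes from Theorem~\ref{thm:message_adaptation_capacity}.
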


By further refining and extending Prismatic Space Theory, a theoretical characterization of negative transfer in GFMs can be established. 
We identify this as a direction for future research.

\newpage

\section{Datasets and Experimental Details} \label{app:datasets_experimental_details}
\subsection{Configuration} \label{app:configuration}
The experiments are conducted on a Linux server equipped with an Intel(R) Xeon(R) Gold 6240 CPU @ 2.60GHz, 256GB RAM and 2 NVIDIA A100-SXM4-40GB GPUs. 
Our implementation is based on PyTorch \citep{paszke19} version 2.2.1, PyG \citep{fey19} version 2.6.1 with CUDA version 12.1 and Python 3.12.7.

\subsection{Details of Datasets} \label{app:detailed_datasets}
\textbf{Homophilic Graphs.} 
Cora and Citeseer datasets \citep{sen08} represent computer science publications, with nodes encoded as bag-of-words features and labeled by research topics. 
Pubmed \citep{yang2016revisiting} contains diabetes-related articles from PubMed database, with nodes represented by TF/IDF-weighted word vectors and classified by diabetes type.
ogbn-arxiv \citep{hu2020open} is a large-scale citation network of CS arXiv papers, where nodes represent papers with 128-dimensional "title + abstract" embeddings, and directed edges denote citations.

\textbf{Heterophilic Graphs.} 
Texas and Wisconsin \citep{pei2020geom} datasets are WebKB subgraphs comprising university web pages, where nodes represent pages with bag-of-words features and edges indicate hyperlinks. 
Pages are classified into five categories: student, project, course, staff, and faculty. 
Actor dataset \citep{pei2020geom} forms a co-occurrence network with actors as nodes and Wikipedia page co-appearances as edges.

\textbf{Biological Graphs.}
D\&D dataset \citep{paul2003dd} contains 1,178 protein graphs where nodes represent amino acids connected by edges, classified as enzymes/non-enzymes. 
ENZYMES \citep{borgwardt2005protein} comprises 600 enzyme structures from BRENDA, categorized into 6 EC classes. 
PROTEINS \citep{wang2022faith} represents tertiary protein structures with nodes as secondary structure elements and edges indicating sequence/3D proximity, yielding binary graph classification.

\textbf{Small Molecule Graphs.}
BZR dataset \citep{rossi2015network} contains 405 benzodiazepine receptor ligand graphs with binary classification. 
COX2 \citep{rossi2015network} comprises 467 cyclooxygenase-2 inhibitor molecular graphs, where nodes represent atoms and edges encode bond types (single/double/triple/aromatic), also yielding binary classification. 
MUTAG \citep{kriege2012submatching} includes 188 mutagenic aromatic compounds classified into 7 categories.

\textbf{Social Network Graphs.}
COLLAB \citep{pinar2015gdk} represents scientific collaboration networks, where nodes denote researchers, edges indicate co-authorships, and graphs are classified by research fields. 
IMDB-B \citep{pinar2015gdk} captures actor collaboration networks, with nodes representing performers, edges signifying co-appearances in films, and binary graph labels distinguishing Action versus Romance genres.

\begin{table*}[hbtp]
\centering
\caption{Statistics of all datasets.}
\label{tab:data description}
\vspace{-0.5em}
\begin{adjustbox}{max width=\textwidth}
\begin{tabular}{cccccccc}
\cmidrule[1.2pt]{1-8}
Dataset & Task & \# Graphs  & \# Nodes & \# Edges & \# Features & \# Classes & Graph Type\\
\cmidrule{1-8}
Cora     & Node & 1            & 2,708      & 5,429      & 1,433         & 7  & Homophilic \\
CiteSeer & Node & 1            & 3,327      & 9,104      & 3,703         & 6  & Homophilic \\
Pubmed   & Node & 1            &19,717      & 88,648     & 500           & 3  & Homophilic \\
Texas & Node & 1   & 183       & 325       & 1703       & 5  & Heterophilic \\
Actor   & Node & 1 & 7600       & 30019       & 932       & 5  & Heterophilic\\
Wisconsin & Node & 1 & 251       & 515       & 1703       & 5  & Heterophilic\\
ogbn-arxiv   & Node & 1  & 169,343 & 1,166,243 & 128 & 40 & Large-scale \\
\cmidrule{1-8}
D\&D & Graph & 1,178  & 284.1 & 715.7 & 89 & 2 & Proteins\\
ENZYMES & Graph & 600  & 32.6 & 62.1 & 3 & 6 & Proteins\\
PROTEINS & Graph & 1,113 & 39.1 & 72.8 & 3  & 2 & Proteins\\
BZR & Graph & 405  & 35.8 & 38.4 & 3 & 2 & Small Molecule\\
COX2 & Graph & 467  & 41.2 & 43.5 & 3 & 2 & Small Molecule \\
MUTAG  & Graph   & 188         & 17.9       & 19.8 & 7 & 2 &Small Molecule \\
COLLAB & Graph & 5000    & 74.5       & 2457.8 & 0 & 3 & Social Network\\
IMDB-B  & Graph & 1000    & 19.8       & 96.53 & 0 & 2 & Social Network\\
\cmidrule[1.2pt]{1-8}
\end{tabular}
\end{adjustbox}
% \vspace{-1.5em}
\end{table*}

\subsection{Data Split.}
We adopt the same dataset processing methodology as ProG \citep{zi2024prog} to ensure consistency and comparability with prior work.
For the node classification task, we adopt a 90\% test set allocation to rigorously evaluate model performance. 
In contrast, for the graph classification task, we employ an 80 \% test set split to maintain a balance between evaluation rigor and training data availability. 
To ensure statistical robustness and mitigate potential sampling bias, we repeat the random sampling procedure five times to construct distinct k-shot learning tasks for both task types. 
The final performance metrics are reported as the mean and standard deviation across these five independent trials, providing a comprehensive assessment of model stability and generalization capability. 

% \subsection{Evaluation Metrics}
% In node and graph classification tasks, AUROC (Area Under the Receiver Operating Characteristic Curve) and F1-score serve as two critical evaluation metrics. 
% AUROC quantifies a model's class discrimination capability, where 1 represents perfect classification and 0.5 indicates random guessing. 
% The F1-score, which harmonizes precision (correctness of positive predictions) and recall (coverage of actual positives), ranges from 0 to 1, with higher values indicating better performance. 
% This metric is particularly valuable for imbalanced datasets. 
% For multi-class scenarios, we employ a macro-averaging approach, where each class is iteratively treated as positive while aggregating results. 
% Both metrics are computed via a one-vs-rest strategy for class-wise evaluation.

\subsection{Hyperparameter Configuration}
In most experiments, the model architecture consists of 2 layers with a hidden dimension of 128.
We develop a systematic random search strategy to identify optimal hyperparameters for each adaptation method across all datasets, extending beyond default configurations. 
Considering the substantial heterogeneity in hyperparameter requirements among different adaptation approaches, we concentrate on tuning three key hyperparameters through random search: 
(1) learning rate, sampled from $\{ 0.001,0.005,0.01,0.05,0.1 \}$; (2) weight decay, selected from $\{ 0,0.00001,0.0001,0.001, 0.01 \}$; and (3) batch size, uniformly sampled from $\{ 32, 64, 128 \}$ in each experimental trial. 
This comprehensive search strategy ensures robust parameter optimization while maintaining methodological consistency across diverse experimental conditions.

\subsection{Implementation Details}
To ensure experimental fairness and demonstrate the compatibility of our approach, we implement MTG based on the ProG library \citep{zi2024prog}.
We have made some modifications to the ProG library to adapt it to MTG, but these changes do not affect the original graph prompt tuning method at all.
Baseline results combine those from ProG with our own reproductions.

\section{Details of Baselines} \label{app:details_baselines}

\subsection{Backbones of Graph Foundation Models} \label{app:backbones}

\textbf{GCN (Graph Convolutional Network) \citep{kipf2017semisupervised}} employs convolutional operations to aggregate and transform feature information from a node's immediate neighborhood.
This localized message-passing mechanism allows the network to iteratively refine node representations by incorporating structural and attribute information from adjacent nodes, effectively capturing the graph's topological properties.  

\textbf{GraphSAGE \citep{hamilton2017inductive}} is an inductive learning framework that computes node embeddings through a localized feature aggregation process.
Instead of relying on fixed graph convolutions, it operates by sampling neighboring nodes and hierarchically aggregating their features using learnable functions.
This approach enables the model to generalize to unseen graph structures while capturing both node attributes and local topological patterns.

\textbf{GAT (Graph Attention Network) \citep{velivckovic2018graph}} introduces an attention mechanism into graph neural networks, dynamically computing attention weights between connected nodes during feature aggregation.
By learning to assign differential importance to neighboring nodes, GAT can focus on more relevant connections while suppressing noisy or less informative edges.
This adaptive weighting scheme enhances model expressiveness and interpretability compared to standard aggregation approaches. 

\textbf{GIN (Graph Isomorphism Network) \citep{xu19gin}} is a theoretically motivated GNN architecture designed to maximize discriminative power in graph representation learning.
By employing injective multiset aggregation functions and MLP-based transformations, GIN achieves provable expressiveness equivalent to the Weisfeiler-Lehman graph isomorphism test.
This framework demonstrates superior capability in distinguishing graph structures while maintaining efficient computation through neighborhood aggregation.

\textbf{GT (Graph Transformer) \citep{ying2021transformers}} adapts the Transformer architecture to graph-structured data by incorporating structural biases into the self-attention mechanism.
Through masked attention patterns that respect graph connectivity, the model efficiently captures both local and global dependencies while maintaining the parallelizability of standard Transformers.
By enabling simultaneous modeling of node features and graph topology via position-aware attention, this approach is particularly effective for modeling long-range dependencies in graph structures.

\subsection{Pre-training Strategies} \label{app:strategies}

\textbf{DGI\citep{veličković2018dgi}} is a self-supervised learning framework that employs mutual information maximization for graph representation learning.
The method optimizes the mutual information between patch-level node representations and global graph summaries through a contrastive objective.
By leveraging negative sampling and discriminator functions, DGI learns informative node embeddings that preserve both local structural patterns and global graph characteristics.

\textbf{GraphMAE\citep{hou2022graphmae}} adopts a self-supervised pretraining approach based on feature reconstruction of masked nodes.
The framework randomly masks portions of node features and learns to recover them through an encoder-decoder architecture, forcing the model to develop robust structural understanding from contextual patterns.
This denoising objective promotes the learning of generalized graph representations that capture both local neighborhood characteristics and global topological properties.

\textbf{EdgePreGPPT\citep{sun2022gppt}} introduces a novel graph pre-training paradigm that fundamentally reconfigures structural knowledge acquisition in graph neural networks.
The framework employs masked edge prediction as its foundational pretext task, where the model learns to reconstruct randomly obscured connections through an edge prediction module.
This pre-training phase focuses on optimizing pairwise node similarity computations, enabling the model to develop robust representations of graph topology and connectivity patterns.

\textbf{EdgePreGprompt\citep{liu2023graphprompt}} establishes a novel paradigm for learning transferable structural representations from label-free graph data.
At its core, the framework employs link prediction as its self-supervised pretext task, leveraging the abundant connectivity patterns naturally available in graph structures without requiring additional annotation.
The methodology operates by first constructing contextual subgraphs for nodes, which capture not only node-specific features but also rich topological information from their local neighborhoods.

\textbf{GraphCL\citep{you2020gcl}} introduces a graph contrastive learning framework that learns transferable graph representations through self-supervised pre-training by maximizing agreement between different augmented views of the same graph.
The method employs four key augmentation strategies—node dropping, edge perturbation, attribute masking, and subgraph sampling—each encoding domain-specific priors about structural invariance.
These augmentations generate correlated views that are processed through a shared GNN encoder, projected via an MLP head, and optimized using an NT-Xent loss function to enhance similarity between positive pairs while contrasting negative samples.

\textbf{SimGRACE\citep{xia2022SimGRACE}} presents a novel graph contrastive learning framework that eliminates the need for manual data augmentation by instead leveraging encoder perturbations to generate contrasting views.
The core methodology involves feeding the original graph through both a standard GNN encoder and its perturbed version, where the perturbation is achieved by adding Gaussian noise to the encoder weights, thereby producing correlated representations without altering input data semantics.
These dual representations are then projected through a shared MLP head and optimized using the NT-Xent loss to maximize agreement between positive pairs while contrasting with negative samples from the same batch.

\subsection{Graph Prompt Baselines} \label{app:prompt_tuning_baselines}

\textbf{GPPT~\citep{sun2022gppt}} introduces an innovative graph prompting function that bridges the gap between pre-training and downstream tasks by reformulating node classification as an edge prediction problem through token pair construction.
The framework converts standalone nodes into structured token pairs composed of two components: a task token that represents candidate labels through trainable continuous vectors and a structure token that encodes neighborhood information by aggregating adjacent nodes with attention-based weighting.

\textbf{Gprompt~\citep{liu2023graphprompt}} introduces a unified prompting framework that bridges graph pre-training and downstream tasks through a subgraph similarity template.
The core innovation involves learnable task-specific prompt vectors that dynamically reweight node features during subgraph aggregation operations such as READOUT, allowing downstream tasks including node classification and graph classification to selectively extract relevant knowledge from frozen pre-trained GNNs.
The prompt vectors act as lightweight task adapters, preserving the pre-trained model's parameters while tailoring subgraph representations through dimension-wise feature importance scoring.

\textbf{All-in-one~\citep{sun2023one}} introduces a unified multi-task prompting framework for graph neural networks that effectively connects various downstream tasks at node, edge, and graph levels with graph pre-training through several key innovations.
First, it employs task reformulation by transforming node and edge tasks into graph-level tasks through induced subgraph construction.
Second, it incorporates a learnable prompt graph featuring tunable tokens, dynamic token structures, and adaptive insertion patterns to align downstream tasks with pre-training objectives.
Third, it utilizes meta-learning optimization to generalize prompts across different tasks.
The framework maintains frozen pre-trained GNNs while only tuning lightweight prompt parameters, enabling efficient knowledge transfer with task-specific adaptability.

\textbf{GPF~\citep{fang2024gpf}} introduces a unified approach to prompt tuning by focusing on feature space adaptation within graph neural networks.
It employs a shared learnable vector that is added to all node features in the input graph, creating a consistent modification across the entire structure.
This design allows the pre-trained model to maintain its frozen parameters while adapting to downstream tasks through subtle yet effective feature adjustments.

\textbf{GPF-plus~\citep{fang2024gpf}} enhances flexibility by assigning distinct learnable vectors to individual nodes through an attention-based mechanism.
Rather than using a single global prompt, it generates node-specific prompts by combining a set of basis vectors with weights derived from each node's features.
This architecture captures finer-grained adaptations while maintaining parameter efficiency through basis sharing.
The method automatically adjusts to graphs of varying scales and complexities, offering improved expressiveness over GPF while preserving its universal applicability.

\section{More Information on Experiments}
\label{app:information_on_experiments}

\subsection{Details of the experimental results on 1/3/5-shot node/graph classification}\label{sec:details_experimental_results}
The optimal performance of various adaptation methods, alongside supervised learning baseline, under 1/3/5-shot settings, is summarized in Tables~\ref{tab:1_shot_node_classification}-\ref{tab:performance on 5-shot node classification tasks}. 
These results demonstrate that MTG substantially enhances the performance of multiple pre-trained GFMs across 1/3/5-shot scenarios, thereby significantly improving the transferability of pre-trained knowledge within the {\em Pre-training and Adaptation} paradigm.
Notably, MTG consistently exhibits superior compatibility with diverse pre-training strategies on both node-level and graph-level tasks.

% Furthermore, a comprehensive evaluation of all adaptation methods under 1/3/5-shot settings, as measured by three key metrics including Accuracy, F1 score and AUROC, is provided in Tables~\ref{exp: complete 1-shot node classification acc}-\ref{exp: complete 5-shot graph classification auc}.
% % Through these more detailed experimental results, we observe that prompt tuning and message tuning methods already achieve excellent performance on some node-level datasets with a small number of nodes;  however, their effectiveness tends to encounter bottlenecks and become unstable on larger-scale datasets, which presents a valuable direction for future research.
% Through these more detailed experimental results, we observe that GPF-plus, as a simple and general prompt tuning approach, demonstrates strong overall performance across both downstream tasks under the 1/3/5-shot settings, making it the second-best adaptation method after MTG.
% It is worth noting that GPF-plus can be regarded, to some extent, as a special case of MTG in which learnable parameters are injected solely into the first pre-trained GFM layer, effectively equivalent to operating directly on the input graph data. 
% The baseline results in this experimental section also combine those from ProG \citep{zi2024prog} with our own reproductions.

\subsection{Performance with More Backbones for GFMs}\label{sec:backbones}
For GNN-based GFMs, both graph prompt baselines and message tuning are universal adaptation methods that are not limited to specific model architectures.
Therefore, in this subsection, we evaluate the performance of various adaptation methods on five of the most classic, popular, and widely used GFM backbone models.
Tables~\ref{exp:different backbones on node task}-\ref{exp:different backbones on graph task} and Figure~\ref{fig:backbones_comparison} present the performance of different adaptation methods based on various backbone models on the representative datasets Wisconsin and PROTEINS. 
These results once again confirm that graph prompt baselines and message tuning outperform fine-tuning, while our proposed MTG demonstrates even more significant advantages.
For more complex GFMs, such as models that integrate LLMs with GNNs, MTG can also be naturally adapted to the GNN module or the module responsible for fusing features obtained from LLMs and GNNs.
The core idea of MTG is to perform layer-wise parameter injection for message fusion regulation, which is not constrained by any specific model architecture.

Most experiments in this paper employ a relatively basic 2-layer backbone model, which may not fully demonstrate the performance advantages of MTG.
To further investigate the impact of model depth on adaptation methods, we continue to use the GCN backbone model and representative datasets Cora and BZR to evaluate the performance of various adaptation methods when applied to models with 4, 8, 12, and 16 layers in downstream tasks.
The results in Table~\ref{tab:model layer} confirm that MTG still maintains significant advantages even with deeper model architectures.

\subsection{Computational Efficiency of MTG}\label{sec:efficiency}
As a general adaptation method, MTG inherently possesses the advantage of parameter efficiency. 
It does not impose significant computational burden on the original model and requires substantially fewer parameters than fine-tuning to achieve effective adaptation on downstream tasks.
In this subsection, we take the GCN backbone model as an example and first provide a theoretical analysis of the time complexity and trainable parameter complexity of both fine-tuning and MTG.

\textbf{Fine-tuning.}
The time complexity per layer of a GCN with $L$ layers, where each layer transforms input features of dimension $d_{\ell-1}$ to output dimension $d_{\ell}$, comprises two main components: the feature transformation via matrix multiplication between the weight matrix ${\mW}^{(\ell)} \in {\R}^{d_{\ell-1} \times d_{\ell}}$ and the node feature matrix $\mH^{(\ell-1)} \in {\R}^{|\gV| \times d_{\ell-1}}$, with complexity $O(|\gV|d_{\ell-1}d_{\ell})$, 
and the neighborhood aggregation through sparse matrix multiplication between the normalized adjacency matrix $\tilde{\mA} \in \R^{|\gV| \times |\gV|}$ and the transformed features, requiring $O(|\gE|d_{\ell})$ operations, where $|\gE|$ denotes the number of edges.
Assuming all hidden dimensions are equal $d$, the total time complexity becomes $O(L(|\gV|d^2 + |\gE|d))$. 
The space complexity for trainable parameters is dominated by the weight matrices, yielding $O(Ld^2)$.

\textbf{Message Tuning.}
MTG introduces three additional components per layer: message vectors $\mM^{(\ell)} \in \R^{m \times d_{\ell-1}}$ containing $m$ learnable message prototypes, a projection matrix $\mW_{p}^{(\ell)} \in \R^{d_{\ell-1} \times m}$ to compute attention scores,
and an attention mechanism $\alpha = \text{Softmax}(\mH^{(\ell-1)}\mW_{p}^{(\ell)}) \in \R^{|\gV| \times m}$, with the corresponding computational overhead consisting of the projection operation $\mH^{(\ell-1)}\mW_{p}^{(\ell)}$ requiring $O(|\gV|d_{\ell-1}m)$ operations, 
the attention computation including softmax and matrix multiplication $\alpha \mM^{(\ell)}$ requiring $O(|\gV|m^2)$ operations, and message integration via element-wise addition with original features requiring $O(|\gV|d_{\ell-1})$ operations.
Thus, the total time complexity becomes $O(L(|\gV|d^2 + |\gE|d + |\gV|dm + |\gV|m^2))$. 
Since $m\ll d$ typically holds, MTG does not introduce significant inference time overhead to the original GCN.
% and their time complexities remain essentially within the same order of magnitude.
The trainable parameter complexity comes from $M^{(\ell)}$ and $P^{(\ell)}$ matrices, contributing $O(L(dm + md)) = O(Ldm)$ parameters, which is lower than fine-tuning the entire GCN model.

The above analysis offers a theoretical perspective on model inference time and trainable parameters; however, such theoretical estimates may differ from practical performance.
Due to variations in their practical implementations, various graph prompt methods are not amenable to straightforward computational complexity analysis.
Therefore, we further conduct a comparative analysis of the actual training time per epoch and GPU memory consumption between graph prompt methods and MTG on the large-scale dataset ogbn-arxiv, which has the largest number of nodes, and the COLLAB dataset, which contains the most graphs.
The pre-training strategy uses DGI and experimental results are presented in Table~\ref{tab:computational efficiency}.
Due to its distinct data loading mechanism, GPPT exhibits significantly different GPU memory usage compared to other methods.
Excluding GPPT, MTG demonstrates advantages in both training speed and memory consumption.
% It should be emphasized that MTG exhibits superior training efficiency compared to All-in-one.

\subsection{Sensitivity Analysis}\label{sec:sensitivity}
Message tuning injects $m$ learnable message vectors at each layer of the model.
We further conduct a sensitivity analysis on this hyperparameter $m$ using the GCN backbone model and representative datasets Cora and  BZR, evaluating the performance of MTG when $m=3,5,10,20,30$.
The optimal results described in Subsection~\ref{sub:upper_bound} are presented in Table~\ref{tab:sensitivity} and Figure~\ref{fig:sensitivity}.
These results demonstrate that MTG exhibits a certain degree of robustness to this hyperparameter, as no performance collapse occurs even with very small or large values of $m$.
In our experiments, $m$ is typically set to 10; nevertheless, careful selection of $m$ remains necessary to fully exploit the potential of MTG across different datasets.

\begin{table*}[h]
\centering
\caption{
Performance comparison of adaptation methods on 1-shot node/graph classification (accuracy$\pm$std \%, 5 runs).
The best and second-best results are highlighted in \textbf{\textcolor{color01}{purple}} and \textbf{\textcolor{color02}{blue}}, respectively.
}
\label{tab:1_shot_node_classification}
\vspace{-0.5em}
\begin{adjustbox}{max width=0.9\textwidth}
\begin{tabular}{c|cccccccc}
\toprule
\textbf{Method} & \textbf{Cora} & \textbf{Citeseer} & \textbf{Pubmed} & \textbf{Wisconsin} & \textbf{Texas} & \textbf{Actor} & \textbf{ogbn-arxiv} \\
\midrule
Supervised & 26.56$_{\pm 5.55}$ & 21.78$_{\pm 7.32}$ & 39.37$_{\pm 16.34}$ & 41.60$_{\pm 3.10}$ & 37.97$_{\pm 5.80}$ & 20.57$_{\pm 4.47}$ & 10.99$_{\pm 3.19}$ \\
\midrule
Fine-tuning & 52.61$_{\pm 1.73}$ & 35.05$_{\pm 4.37}$ & 46.74$_{\pm 14.89}$ & 40.69$_{\pm 4.13}$ & 46.88$_{\pm 4.69}$ & 20.74$_{\pm 4.12}$ & 16.21$_{\pm 3.82}$ \\
\midrule
GPPT & 43.15$_{\pm 9.44}$ & 37.26$_{\pm 6.17}$ & 48.31$_{\pm 17.72}$ & 30.40$_{\pm 6.81}$ & 31.81$_{\pm 15.33}$ & 22.58$_{\pm 1.97}$ & 14.65$_{\pm 3.07}$ \\
\midrule
Gprompt & \cellcolor{color02} 56.66$_{\pm 11.22}$ & 53.21$_{\pm 10.94}$ & 39.74$_{\pm 15.35}$ & 77.07$_{\pm 5.93}$ & 33.25$_{\pm 40.11}$ & 25.26$_{\pm 1.10}$ & \cellcolor{color02}75.72$_{\pm 4.95}$ \\
\midrule
All-in-one & 52.39$_{\pm 10.17}$ & 40.41$_{\pm 2.80}$ & 45.17$_{\pm 6.45}$ & 66.29$_{\pm 19.11}$ & 65.49$_{\pm 7.06}$ & 24.61$_{\pm 2.80}$ & 13.16$_{\pm 5.98}$ \\
\midrule
GPF & 38.57$_{\pm 5.41}$ & 31.16$_{\pm 8.05}$ & \cellcolor{color02}49.99$_{\pm 8.86}$ & 78.35$_{\pm 4.07}$ & 73.54$_{\pm 18.50}$ & 28.70$_{\pm 3.35}$ & 65.11$_{\pm 5.70}$ \\
\midrule
GPF-plus & 55.77$_{\pm 10.30}$ & \cellcolor{color02}59.67$_{\pm 11.87}$ & 46.64$_{\pm 18.97}$ & \cellcolor{color02}82.11$_{\pm 13.95}$ & \cellcolor{color02}76.10$_{\pm 20.35}$ & \cellcolor{color02}29.32$_{\pm 8.56}$ & 71.98$_{\pm 12.23}$ \\
\midrule
MTG (Ours) & \cellcolor{color01}58.54$_{\pm 7.89}$ & \cellcolor{color01}62.31$_{\pm 18.90}$ & \cellcolor{color01}50.70$_{\pm 11.68}$ & \cellcolor{color01}83.32$_{\pm 12.46}$ & \cellcolor{color01}79.13$_{\pm 17.18}$ & \cellcolor{color01}29.44$_{\pm 7.31}$ & \cellcolor{color01}75.97$_{\pm 4.29}$ \\
\bottomrule
\end{tabular}
\end{adjustbox}

\vspace{0.5em}
\begin{adjustbox}{max width=0.9\textwidth}
\begin{tabular}{c|ccccccccc}
\toprule
\textbf{Method} & \textbf{IMDB-B} & \textbf{COLLAB} & \textbf{PROTEINS} & \textbf{MUTAG} & \textbf{ENZYMES} & \textbf{COX2} & \textbf{BZR} & \textbf{D\&D} \\
\midrule
Supervised & 57.30$_{\pm 0.98}$ & 47.23$_{\pm 0.61}$ & 56.36$_{\pm 7.97}$ & 65.20$_{\pm 6.70}$ & 20.58$_{\pm 2.00}$ & 27.08$_{\pm 1.95}$ & 25.80$_{\pm 6.53}$ & 55.33$_{\pm 6.22}$ \\
\midrule
Fine-tuning & 57.75$_{\pm 1.22}$ & 48.10$_{\pm 0.23}$ & 63.44$_{\pm 3.64}$ & 65.47$_{\pm 5.89}$ & 22.21$_{\pm 2.79}$ & 76.19$_{\pm 5.41}$ & 34.69$_{\pm 8.50}$ & 57.15$_{\pm 4.32}$ \\
\midrule
GPPT & 50.15$_{\pm 0.75}$ & 47.18$_{\pm 5.93}$ & 60.92$_{\pm 2.47}$ & 60.40$_{\pm 15.43}$ & 21.29$_{\pm 3.79}$ & \cellcolor{color02}78.23$_{\pm 1.38}$ & 59.32$_{\pm 11.22}$ & 57.69$_{\pm 6.89}$ \\
\midrule
Gprompt & 54.75$_{\pm 12.43}$ & 48.25$_{\pm 13.64}$ & 59.17$_{\pm 11.26}$ & 73.60$_{\pm 4.76}$ & 22.29$_{\pm 3.50}$ & 54.64$_{\pm 9.94}$ & 55.43$_{\pm 13.69}$ & 57.81$_{\pm 2.68}$ \\
\midrule
All-in-one & \cellcolor{color02}60.07$_{\pm 4.81}$ & \cellcolor{color02}51.66$_{\pm 0.26}$ & \cellcolor{color02}66.49$_{\pm 6.26}$ & \cellcolor{color02}75.20$_{\pm 6.33}$ & \cellcolor{color02}23.96$_{\pm 1.45}$ & 76.14$_{\pm 5.51}$ & 64.38$_{\pm 9.32}$ & \cellcolor{color02}59.72$_{\pm 1.52}$ \\
\midrule
GPF & 59.65$_{\pm 5.06}$ & 47.42$_{\pm 11.22}$ & 63.91$_{\pm 3.26}$ & 68.40$_{\pm 5.09}$ & 22.00$_{\pm 1.25}$ & 65.79$_{\pm 17.72}$ & \cellcolor{color02}71.67$_{\pm 14.71}$ & 59.36$_{\pm 1.18}$ \\
\midrule
GPF-plus & 57.93$_{\pm 1.62}$ & 47.24$_{\pm 0.29}$ & 62.92$_{\pm 2.78}$ & 65.20$_{\pm 6.04}$ & 22.92$_{\pm 1.64}$ & 33.78$_{\pm 1.52}$ & 71.17$_{\pm 14.92}$ & 57.62$_{\pm 2.42}$ \\
\midrule
MTG (Ours) & \cellcolor{color01}62.25$_{\pm 3.72}$ & \cellcolor{color01}52.25$_{\pm 0.56}$ & \cellcolor{color01}66.98$_{\pm 2.17}$ & \cellcolor{color01}75.80$_{\pm 5.49}$ & \cellcolor{color01}26.08$_{\pm 4.31}$ & \cellcolor{color01}78.27$_{\pm 2.01}$ & \cellcolor{color01}74.81$_{\pm 13.96}$ & \cellcolor{color01}60.68$_{\pm 2.42}$ \\
\bottomrule
\end{tabular}
\end{adjustbox}
\end{table*}

\newpage

\begin{table*}[h!]
\centering
\caption{Performance comparison of adaptation methods on 3-shot node/graph classification}
\label{tab:performance on 3-shot node classification tasks}
\vspace{-0.5em}
\begin{adjustbox}{max width=0.9\textwidth}
\begin{tabular}{c|cccccccc}
\toprule
\textbf{Method} & \textbf{Cora} & \textbf{Citeseer} & \textbf{Pubmed} & \textbf{Wisconsin} & \textbf{Texas} & \textbf{Actor} & \textbf{ogbn-arxiv} \\
\midrule
Supervised & 37.79$_{\pm 9.16}$ & 35.18$_{\pm 6.86}$ & 57.33$_{\pm 4.64}$ & 41.03$_{\pm 6.40}$ & 40.78$_{\pm 12.55}$ & 18.62$_{\pm 3.46}$ & 19.03$_{\pm 5.08}$ \\
\midrule
Fine-tuning & 51.97$_{\pm 2.84}$ & 45.08$_{\pm 2.09}$ & 65.40$_{\pm 3.00}$ & 42.40$_{\pm 7.77}$ & 43.13$_{\pm 13.79}$ & 22.11$_{\pm 1.97}$  & 27.34$_{\pm 6.61}$ \\
\midrule
GPPT & 43.84$_{\pm 6.11}$ & 42.34$_{\pm 8.31}$ & 67.43$_{\pm 2.96}$ & 34.29$_{\pm 4.71}$ & 38.90$_{\pm 8.86}$ & 21.65$_{\pm 3.39}$ & 22.46$_{\pm 4.05}$ \\
\midrule
Gprompt & \cellcolor{color02}63.78$_{\pm 5.77}$ & 60.00$_{\pm 6.18}$ & 66.68$_{\pm 3.53}$ & 92.52$_{\pm 5.38}$ & 39.00$_{\pm 47.08}$ & 29.67$_{\pm 2.53}$ & \cellcolor{color02}73.92$_{\pm 2.75}$ \\
\midrule
All-in-one & 48.09$_{\pm 4.83}$ & 48.09$_{\pm 8.18}$ & 65.79$_{\pm 5.79}$ & 89.62$_{\pm 4.38}$ & 88.69$_{\pm 1.08}$ & 24.23$_{\pm 1.39}$ & 31.15$_{\pm 2.25}$ \\
\midrule
GPF & 34.84$_{\pm 19.83}$ & 25.92$_{\pm 12.30}$ & \cellcolor{color02}71.20$_{\pm 2.82}$ & 93.85$_{\pm 3.71}$ & 95.47$_{\pm 2.75}$ & 37.44$_{\pm 3.43}$ & 59.67$_{\pm 12.69}$ \\
\midrule
GPF-plus & 56.38$_{\pm 5.37}$ & \cellcolor{color02}72.48$_{\pm 5.63}$ & 70.85$_{\pm 4.03}$ & \cellcolor{color02}98.15$_{\pm 0.73}$ & \cellcolor{color02}97.66$_{\pm 0.41}$ & \cellcolor{color01}43.59$_{\pm 4.52}$ & 64.63$_{\pm 10.05}$ \\
\midrule
MTG (Ours) & \cellcolor{color01}66.11$_{\pm 6.37}$ & \cellcolor{color01}73.81$_{\pm 8.56}$ & \cellcolor{color01}71.38$_{\pm 3.21}$ & \cellcolor{color01}98.58$_{\pm 0.93}$ & \cellcolor{color01}98.17$_{\pm 1.40}$ & \cellcolor{color02}37.62$_{\pm 4.72}$ & \cellcolor{color01}76.01$_{\pm 5.39}$ \\
\midrule
\end{tabular}
\end{adjustbox}

\vspace{0.5em}
\begin{adjustbox}{max width=0.9\textwidth}
\begin{tabular}{c|ccccccccc}
\toprule
\textbf{Method} & \textbf{IMDB-B} & \textbf{COLLAB} & \textbf{PROTEINS} & \textbf{MUTAG} & \textbf{ENZYMES} & \textbf{COX2} & \textbf{BZR} & \textbf{D\&D} \\
\midrule
Supervised & 53.33$_{\pm 6.61}$ & 50.77$_{\pm 2.44}$ & 61.33$_{\pm 2.89}$ & 59.47$_{\pm 8.34}$ & 15.96$_{\pm 1.64}$ & 65.15$_{\pm 18.61}$ & 52.35$_{\pm 8.12}$ & \cellcolor{color02}59.77$_{\pm 1.10}$ \\
\midrule
Fine-tuning & \cellcolor{color02}66.10$_{\pm 0.70}$ & 56.10$_{\pm 3.46}$ & 62.72$_{\pm 2.39}$ & 59.87$_{\pm 8.78}$ & 22.71$_{\pm 0.86}$ & 69.97$_{\pm 13.89}$ & 52.22$_{\pm 10.64}$ & 59.70$_{\pm 0.98}$ \\
\midrule
GPPT & 59.48$_{\pm 5.42}$ & 50.88$_{\pm 6.31}$ & 64.74$_{\pm 1.99}$ & 64.13$_{\pm 18.31}$ & 19.12$_{\pm 2.43}$ & \cellcolor{color02}71.90$_{\pm 14.28}$ & 70.93$_{\pm 16.35}$ & 59.00$_{\pm 6.34}$ \\
\midrule
Gprompt & 64.35$_{\pm 1.21}$ & 54.95$_{\pm 9.47}$ & 64.94$_{\pm 2.92}$ & 66.53$_{\pm 14.84}$ & 22.08$_{\pm 3.57}$ & 51.53$_{\pm 13.08}$ & 54.63$_{\pm 2.95}$ & 55.99$_{\pm 7.53}$ \\
\midrule
All-in-one & 65.67$_{\pm 0.58}$ & \cellcolor{color02}57.12$_{\pm 1.99}$ & \cellcolor{color02}69.84$_{\pm 6.02}$ & \cellcolor{color01}80.00$_{\pm 5.67}$ & 23.96$_{\pm 0.62}$ & 66.06$_{\pm 18.23}$ & 61.98$_{\pm 11.32}$ & 58.96$_{\pm 5.93}$ \\
\midrule
GPF & 65.97$_{\pm 0.69}$ & 53.87$_{\pm 3.44}$ & 63.35$_{\pm 2.45}$ & 74.27$_{\pm 1.55}$ & 23.87$_{\pm 3.45}$ & 65.31$_{\pm 19.45}$ & \cellcolor{color02}74.38$_{\pm 11.62}$ & 59.07$_{\pm 0.65}$ \\
\midrule
GPF-plus & 64.38$_{\pm 2.30}$ & 56.50$_{\pm 3.71}$ & 63.55$_{\pm 1.85}$ & 75.20$_{\pm 3.64}$ & \cellcolor{color02}24.46$_{\pm 2.27}$ & 65.25$_{\pm 18.07}$ & 71.67$_{\pm 14.87}$ & 59.51$_{\pm 0.62}$ \\
\midrule
MTG (Ours) & \cellcolor{color01}66.95$_{\pm 0.59}$ & \cellcolor{color01}57.49$_{\pm 2.52}$ & \cellcolor{color01}70.49$_{\pm 0.68}$ & \cellcolor{color02}78.13$_{\pm 6.36}$ & \cellcolor{color01}29.71$_{\pm 2.06}$ & \cellcolor{color01}73.86$_{\pm 9.74}$ & \cellcolor{color01}74.65$_{\pm 12.14}$ & \cellcolor{color01}60.85$_{\pm 6.39}$ \\
\bottomrule
\end{tabular}
\end{adjustbox}
\end{table*}

\begin{table*}[h!]
\centering
\caption{Performance comparison of adaptation methods on 5-shot node classification.}
\label{tab:performance on 5-shot node classification tasks}
\vspace{-0.5em}
\begin{adjustbox}{max width=0.9\textwidth}
\begin{tabular}{c|cccccccc}
\toprule
\textbf{Method} & \textbf{Cora} & \textbf{Citeseer} & \textbf{Pubmed} & \textbf{Wisconsin} & \textbf{Texas} & \textbf{Actor} & \textbf{ogbn-arxiv} \\
\midrule
Supervised & 50.25$_{\pm 8.37}$ & 41.22$_{\pm 6.30}$ & 67.88$_{\pm 2.18}$ & 39.43$_{\pm 5.86}$ & 43.91$_{\pm 6.47}$ & 21.92$_{\pm 1.86}$ & 22.38$_{\pm 3.05}$ \\
\midrule
Fine-tuning & 62.66$_{\pm 3.55}$ & 39.54$_{\pm 3.54}$ & \cellcolor{color01}70.91$_{\pm 4.87}$ & 42.97$_{\pm 8.99}$ & 47.19$_{\pm 7.37}$ & 22.92$_{\pm 1.22}$ & 28.84$_{\pm 3.11}$ \\
\midrule
GPPT & 51.98$_{\pm 3.43}$ & 45.77$_{\pm 7.41}$ & 66.97$_{\pm 3.70}$ & 37.00$_{\pm 3.19}$ & 48.82$_{\pm 5.15}$ & 21.58$_{\pm 0.84}$ & 28.90$_{\pm 1.64}$ \\
\midrule
Gprompt & \cellcolor{color02}69.03$_{\pm 3.61}$ & 66.13$_{\pm 1.64}$ & 67.87$_{\pm 2.08}$ & 78.22$_{\pm 37.33}$ & 39.32$_{\pm 47.08}$ & 34.67$_{\pm 1.28}$ & \cellcolor{color02}85.40$_{\pm 0.79}$ \\
\midrule
All-in-one & 30.36$_{\pm 13.48}$ & 27.93$_{\pm 10.59}$ & 46.16$_{\pm 15.83}$ & 87.16$_{\pm 3.02}$ & 73.28$_{\pm 9.91}$ & 21.49$_{\pm 3.02}$ & 13.01$_{\pm 6.29}$ \\
\midrule
GPF & 35.43$_{\pm 1.02}$ & 25.12$_{\pm 3.01}$ & 68.96$_{\pm 3.99}$ & 98.26$_{\pm 1.19}$ & 98.42$_{\pm 0.36}$ & 44.07$_{\pm 3.94}$ & 71.83$_{\pm 9.37}$ \\
\midrule
GPF-plus & 66.22$_{\pm 6.20}$ & \cellcolor{color02}75.73$_{\pm 2.19}$ & 69.59$_{\pm 4.33}$ & \cellcolor{color02}99.01$_{\pm 1.43}$ & \cellcolor{color01}99.12$_{\pm 0.95}$ & \cellcolor{color02}44.58$_{\pm 5.95}$ & 66.88$_{\pm 6.14}$ \\
\midrule
MTG (Ours) & \cellcolor{color01}71.81$_{\pm 3.59}$ & \cellcolor{color01}76.34$_{\pm 6.18}$ & \cellcolor{color02}70.84$_{\pm 3.28}$ & \cellcolor{color01}99.12$_{\pm 0.95}$ & \cellcolor{color02}98.76$_{\pm 2.36}$ & \cellcolor{color01}45.09$_{\pm 3.26}$ & \cellcolor{color01}85.94$_{\pm 1.93}$ \\
\bottomrule
\end{tabular}
\end{adjustbox}

\vspace{0.5em}
\begin{adjustbox}{max width=0.9\textwidth}
\begin{tabular}{c|ccccccccc}
\toprule
\textbf{Method} & \textbf{IMDB-B} & \textbf{COLLAB} & \textbf{PROTEINS} & \textbf{MUTAG} & \textbf{ENZYMES} & \textbf{COX2} & \textbf{BZR} & \textbf{D\&D} \\
\midrule
Supervised & 62.60$_{\pm 4.01}$ & 55.23$_{\pm 4.26}$ & 62.90$_{\pm 5.03}$ & 73.47$_{\pm 3.92}$ & 25.67$_{\pm 0.48}$ & 64.99$_{\pm 10.42}$ & 51.48$_{\pm 2.29}$ & 63.59$_{\pm 2.86}$\\
\midrule
Fine-tuning & 65.40$_{\pm 3.33}$ & 60.72$_{\pm 2.09}$ & 63.33$_{\pm 4.13}$ & 75.33$_{\pm 1.89}$ & 7.46$_{\pm 1.29}$ & \cellcolor{color01}73.19$_{\pm 9.53}$ & \cellcolor{color02}72.96$_{\pm 11.98}$ & 64.71$_{\pm 3.22}$\\
\midrule
GPPT & 66.37$_{\pm 3.59}$ & 54.05$_{\pm 4.58}$ & 58.27$_{\pm 4.63}$ & 70.53$_{\pm 3.90}$ & 22.17$_{\pm 2.34}$ & 67.88$_{\pm 17.34}$ & 69.63$_{\pm 14.96}$ & 60.02$_{\pm 3.24}$\\
\midrule
Gprompt & 66.70$_{\pm 3.87}$ & \cellcolor{color02}60.76$_{\pm 5.08}$ & 62.94$_{\pm 1.38}$ & 73.07$_{\pm 2.13}$ & 21.46$_{\pm 2.27}$ & 53.35$_{\pm 7.75}$ & 59.38$_{\pm 14.43}$ & 58.28$_{\pm 2.18}$\\
\midrule
All-in-one & 63.62$_{\pm 2.30}$ & 57.86$_{\pm 5.88}$ & \cellcolor{color01}71.37$_{\pm 4.89}$ & \cellcolor{color02}80.93$_{\pm 1.96}$ & 26.71$_{\pm 2.17}$ & 62.95$_{\pm 8.57}$ & 62.78$_{\pm 10.18}$ & 63.44$_{\pm 1.35}$\\
\midrule
GPF & 67.80$_{\pm 5.58}$ & 59.65$_{\pm 6.25}$ & 63.37$_{\pm 4.37}$ & 74.00$_{\pm 3.65}$ & \cellcolor{color02}27.00$_{\pm 0.78}$ & 66.27$_{\pm 14.57}$ & 61.05$_{\pm 11.51}$ & 61.06$_{\pm 2.63}$\\
\midrule
GPF-plus & \cellcolor{color02}68.13$_{\pm 3.31}$ & 60.68$_{\pm 4.67}$ & 63.51$_{\pm 2.89}$ & 73.87$_{\pm 3.51}$ & 26.87$_{\pm 1.89}$ & \cellcolor{color02}72.87$_{\pm 10.17}$ & 71.54$_{\pm 14.81}$ & \cellcolor{color02}64.80$_{\pm 3.45}$ \\
\midrule
MTG (Ours) & \cellcolor{color01}69.15$_{\pm 4.09}$ & \cellcolor{color01}63.11$_{\pm 1.88}$ & \cellcolor{color02}70.10$_{\pm 1.12}$ & \cellcolor{color01}81.60$_{\pm 4.53}$ & \cellcolor{color01}35.08$_{\pm 3.28}$ & 71.84$_{\pm 2.75}$ & \cellcolor{color01}76.37$_{\pm 8.11}$ & \cellcolor{color01}66.07$_{\pm 2.39}$\\
\bottomrule
\end{tabular}
\end{adjustbox}
\end{table*}

\newpage

\begin{table*}[t]
\centering
\renewcommand\arraystretch{1.5}
\caption{Performance comparison of Fine-tuning, GPF-plus, and MTG on 1-shot node classification.
$\uparrow$/\textcolor{color04}{$\downarrow$}: positive/negative transfer vs. supervised learning baseline; \textbf{NTR} (Negative Transfer Rate): fraction of datasets with \textcolor{color04}{$\downarrow$} per daptation method.
}
\label{tab:1_shot_node_classification_ntr}
\vspace{-0.5em}
\resizebox{\textwidth}{!}{%
\begin{tabular}{c|c|c|ccccccc}
\toprule
\textbf{Pre-training} &\textbf{Adaptation} & \textbf{NTR} & \textbf{Cora} & \textbf{Citeseer} & \textbf{Pubmed} & \textbf{Wisconsin} & \textbf{Texas} & \textbf{Actor} & \textbf{ogbn-arxiv} \\ 
\midrule
- & Supervised & \textbf{0\%} & 26.56$_{\pm 5.55}$ (-) & 21.78$_{\pm 7.32}$ (-) & 39.37$_{\pm 16.34}$ (-) & 41.60$_{\pm 3.10}$ (-) & 37.97$_{\pm 5.80}$ (-) & 20.57$_{\pm 4.47}$ (-) & 10.99$_{\pm 3.19}$ (-) \\
\midrule
\multirow{3}{*}{DGI}
& Fine-tuning & \textbf{57\%} & 33.15$_{\pm 7.84}$ ($\uparrow$) & \textcolor{color04}{\textbf{21.64$_{\pm 3.92}$}} (\textcolor{color04}{$\downarrow$}) & 42.01$_{\pm 12.54}$ ($\uparrow$) & \textcolor{color04}{\textbf{37.49$_{\pm 7.56}$}} (\textcolor{color04}{$\downarrow$}) & 45.31$_{\pm 5.01}$ ($\uparrow$) & \textcolor{color04}{\textbf{19.76$_{\pm 3.53}$}} (\textcolor{color04}{$\downarrow$}) & \textcolor{color04}{\textbf{7.21$_{\pm 2.91}$}} (\textcolor{color04}{$\downarrow$}) \\
& GPF-plus & \textbf{29\%} & \textcolor{color04}{\textbf{17.29$_{\pm 6.18}$}} (\textcolor{color04}{$\downarrow$}) & 26.60$_{\pm 13.24}$ ($\uparrow$) & \textcolor{color04}{\textbf{34.02$_{\pm 11.94}$}} (\textcolor{color04}{$\downarrow$}) & 74.68$_{\pm 11.81}$ ($\uparrow$) & 71.44$_{\pm 18.66}$ ($\uparrow$) & 22.42$_{\pm 9.66}$ ($\uparrow$) & 16.83$_{\pm 10.02}$ ($\uparrow$) \\ 
& MTG & \textbf{0\%} & 49.48$_{\pm 4.82}$ ($\uparrow$) & 62.31$_{\pm 18.90}$ ($\uparrow$) & 46.18$_{\pm 7.32}$ ($\uparrow$) & 67.72$_{\pm 10.19}$ ($\uparrow$) & 62.96$_{\pm 16.80}$ ($\uparrow$) & 25.48$_{\pm 7.33}$ ($\uparrow$) & 25.06$_{\pm 10.57}$ ($\uparrow$) \\ 
\midrule
\multirow{3}{*}{GraphMAE}
& Fine-tuning & \textbf{57\%} & 32.93$_{\pm 3.17}$ ($\uparrow$) & \textcolor{color04}{\textbf{21.26$_{\pm 3.57}$}} (\textcolor{color04}{$\downarrow$}) & 42.99$_{\pm 14.25}$ ($\uparrow$) & \textcolor{color04}{\textbf{36.80$_{\pm 7.17}$}} (\textcolor{color04}{$\downarrow$}) & \textcolor{color04}{\textbf{37.81$_{\pm 8.62}$}} (\textcolor{color04}{$\downarrow$}) & \textcolor{color04}{\textbf{19.86$_{\pm 2.70}$}} (\textcolor{color04}{$\downarrow$}) & 12.35$_{\pm 3.60}$ ($\uparrow$) \\
& GPF-plus & \textbf{0\%} & 54.26$_{\pm 7.48}$ ($\uparrow$) & 59.67$_{\pm 11.87}$ ($\uparrow$) & 46.64$_{\pm 18.57}$ ($\uparrow$) & 82.11$_{\pm 13.95}$ ($\uparrow$) & 70.95$_{\pm 18.63}$ ($\uparrow$) & 26.58$_{\pm 7.84}$ ($\uparrow$) & 49.81$_{\pm 2.62}$ ($\uparrow$) \\
& MTG & \textbf{0\%} & 46.27$_{\pm 6.66}$ ($\uparrow$) & 49.21$_{\pm 12.95}$ ($\uparrow$) & 46.98$_{\pm 10.02}$ ($\uparrow$) & 83.32$_{\pm 12.46}$ ($\uparrow$) & 71.59$_{\pm 18.67}$ ($\uparrow$) & 29.44$_{\pm 7.31}$ ($\uparrow$) & 36.44$_{\pm 9.59}$ ($\uparrow$) \\
\midrule
\multirow{3}{*}{\shortstack[c]{EdgePre\\-GPPT}}
& Fine-tuning & \textbf{43\%} & 38.12$_{\pm 5.29}$ ($\uparrow$) & \textcolor{color04}{\textbf{18.09$_{\pm 5.39}$}} (\textcolor{color04}{$\downarrow$}) & 46.74$_{\pm 14.09}$ ($\uparrow$) & \textcolor{color04}{\textbf{35.31$_{\pm 9.31}$}} (\textcolor{color04}{$\downarrow$}) & 47.66$_{\pm 2.37}$ ($\uparrow$) & \textcolor{color04}{\textbf{19.17$_{\pm 2.53}$}} (\textcolor{color04}{$\downarrow$}) & 16.21$_{\pm 3.82}$ ($\uparrow$) \\
& GPF-plus & \textbf{14\%} & \textcolor{color04}{\textbf{28.49$_{\pm 18.73}$}} (\textcolor{color04}{$\downarrow$}) & 28.04$_{\pm 14.31}$ ($\uparrow$) & 46.51$_{\pm 15.84}$ ($\uparrow$) & 72.66$_{\pm 12.05}$ ($\uparrow$) & 70.67$_{\pm 17.59}$ ($\uparrow$) & 29.32$_{\pm 8.56}$ ($\uparrow$) & 71.98$_{\pm 12.23}$ ($\uparrow$) \\ 
& MTG & \textbf{0\%} & 46.68$_{\pm 2.66}$ ($\uparrow$) & 33.22$_{\pm 12.52}$ ($\uparrow$) & 44.85$_{\pm 9.75}$ ($\uparrow$) & 73.80$_{\pm 9.56}$ ($\uparrow$) & 71.11$_{\pm 17.13}$ ($\uparrow$) & 20.96$_{\pm 2.93}$ ($\uparrow$) & 75.97$_{\pm 4.29}$ ($\uparrow$) \\ 
\midrule
\multirow{3}{*}{\shortstack[c]{EdgePre\\-Gprompt}}
& Fine-tuning & \textbf{14\%} & 35.57$_{\pm 5.83}$ ($\uparrow$) & 22.28$_{\pm 3.80}$ ($\uparrow$) & 41.50$_{\pm 7.54}$ ($\uparrow$) & \textcolor{color04}{\textbf{40.69$_{\pm 4.13}$}} (\textcolor{color04}{$\downarrow$}) & 40.62$_{\pm 7.95}$ ($\uparrow$) & 20.74$_{\pm 4.16}$ ($\uparrow$) & 14.83$_{\pm 2.38}$ ($\uparrow$) \\
& GPF-plus & \textbf{0\%} & 55.77$_{\pm 10.30}$ ($\uparrow$) & 49.43$_{\pm 8.21}$ ($\uparrow$) & 42.79$_{\pm 18.18}$ ($\uparrow$) & 78.76$_{\pm 13.63}$ ($\uparrow$) & 68.75$_{\pm 16.51}$ ($\uparrow$) & 22.68$_{\pm 3.64}$ ($\uparrow$) & 57.44$_{\pm 6.95}$ ($\uparrow$) \\
& MTG & \textbf{0\%} & 46.29$_{\pm 3.84}$ ($\uparrow$) & 45.30$_{\pm 16.04}$ ($\uparrow$) & 50.70$_{\pm 11.68}$ ($\uparrow$) & 72.75$_{\pm 11.21}$ ($\uparrow$) & 79.13$_{\pm 17.18}$ ($\uparrow$) & 21.34$_{\pm 1.78}$ ($\uparrow$) & 21.08$_{\pm 2.34}$ ($\uparrow$) \\
\midrule
\multirow{3}{*}{GraphCL}
& Fine-tuning & \textbf{43\%} & 52.61$_{\pm 1.73}$ ($\uparrow$) & 27.02$_{\pm 4.31}$ ($\uparrow$) & 42.49$_{\pm 11.29}$ ($\uparrow$) & \textcolor{color04}{\textbf{33.94$_{\pm 7.74}$}} (\textcolor{color04}{$\downarrow$}) & 40.31$_{\pm 13.68}$ ($\uparrow$) & \textcolor{color04}{\textbf{20.19$_{\pm 1.98}$}} (\textcolor{color04}{$\downarrow$}) & \textcolor{color04}{\textbf{4.65$_{\pm 1.19}$}} (\textcolor{color04}{$\downarrow$}) \\
& GPF-plus & \textbf{29\%} & 34.18$_{\pm 17.71}$ (\textcolor{color04}{$\downarrow$}) & 28.86$_{\pm 22.88}$ ($\uparrow$) & 37.02$_{\pm 11.29}$ (\textcolor{color04}{$\downarrow$}) & 52.35$_{\pm 19.69}$ ($\uparrow$) & 75.40$_{\pm 19.10}$ ($\uparrow$) & 22.82$_{\pm 4.99}$ ($\uparrow$) & 32.11$_{\pm 4.86}$ ($\uparrow$) \\ 
& MTG & \textbf{0\%} & 58.54$_{\pm 7.89}$ ($\uparrow$) & 50.96$_{\pm 16.40}$ ($\uparrow$) & 40.00$_{\pm 7.80}$ ($\uparrow$) & 48.41$_{\pm 16.10}$ ($\uparrow$) & 69.71$_{\pm 16.42}$ ($\uparrow$) & 24.77$_{\pm 8.45}$ ($\uparrow$) & 38.96$_{\pm 6.82}$ ($\uparrow$) \\ 
\midrule
\multirow{3}{*}{SimGRACE}
& Fine-tuning & \textbf{57\%} & 40.40$_{\pm 4.66}$ ($\uparrow$) & 35.05$_{\pm 4.37}$ ($\uparrow$) & \textcolor{color04}{\textbf{37.59$_{\pm 8.17}$}} (\textcolor{color04}{$\downarrow$}) & \textcolor{color04}{\textbf{37.37$_{\pm 3.68}$}} (\textcolor{color04}{$\downarrow$}) & 46.88$_{\pm 4.64}$ ($\uparrow$) & \textcolor{color04}{\textbf{19.78$_{\pm 1.89}$}} (\textcolor{color04}{$\downarrow$}) & \textcolor{color04}{\textbf{8.13$_{\pm 3.26}$}} (\textcolor{color04}{$\downarrow$}) \\
& GPF-plus & \textbf{29\%} & \textcolor{color04}{\textbf{21.33$_{\pm 14.86}$}} (\textcolor{color04}{$\downarrow$}) & 24.61$_{\pm 21.21}$ ($\uparrow$) & \textcolor{color04}{\textbf{35.90$_{\pm 9.06}$}} (\textcolor{color04}{$\downarrow$}) & 73.49$_{\pm 14.17}$ ($\uparrow$) & 76.10$_{\pm 20.35}$ ($\uparrow$) & 20.51$_{\pm 4.24}$ ($\uparrow$) & 46.71$_{\pm 3.17}$ ($\uparrow$) \\ 
& MTG & \textbf{0\%} & 45.93$_{\pm 7.67}$ ($\uparrow$) & 57.60$_{\pm 9.01}$ ($\uparrow$) & 43.29$_{\pm 10.80}$ ($\uparrow$) & 72.98$_{\pm 9.75}$ ($\uparrow$) & 73.17$_{\pm 16.68}$ ($\uparrow$) & 22.03$_{\pm 3.59}$ ($\uparrow$) & 37.90$_{\pm 5.83}$ ($\uparrow$) \\ 
\bottomrule
\end{tabular}%
}
\end{table*}

\begin{table*}[t]
\centering
\vspace{-2mm}
\caption{Performance comparison of Fine-tuning, All-in-one, and MTG on 1-shot graph classification.}
\label{tab:1_shot_graph_classification_ntr}
\vspace{-0.5em}
\renewcommand\arraystretch{1.5}
\resizebox{\textwidth}{!}{
\begin{tabular}{c|c|c|cccccccc}
\toprule
\textbf{Pre-training} &\textbf{Adaptation} & \textbf{NTR} & \textbf{IMDB-B} & \textbf{COLLAB} & \textbf{PROTEINS} & \textbf{MUTAG} & \textbf{ENZYMES} & \textbf{COX2} & \textbf{BZR} & \textbf{D\&D} \\ 
\midrule
- & Supervised & \textbf{0\%} & 57.30$_{\pm 0.98}$ (-) & 47.23$_{\pm 0.61}$ (-) & 56.36$_{\pm 7.97}$ (-) & 65.20$_{\pm 6.70}$ (-) & 20.58$_{\pm 2.00}$ (-) & 27.08$_{\pm 11.94}$ (-) & 25.80$_{\pm 6.53}$ (-) & 55.33$_{\pm 6.22}$ (-) \\
\midrule
\multirow{3}{*}{DGI}
& Fine-tuning & \textbf{38\%} & 57.32$_{\pm 0.90}$ ($\uparrow$) & \textcolor{color04}{\textbf{42.22$_{\pm 0.73}$}} (\textcolor{color04}{$\downarrow$}) & 64.65$_{\pm 2.10}$ ($\uparrow$) & \textcolor{color04}{\textbf{64.13$_{\pm 7.90}$}} (\textcolor{color04}{$\downarrow$}) & \textcolor{color04}{\textbf{17.83$_{\pm 1.88}$}} (\textcolor{color04}{$\downarrow$}) & 29.44$_{\pm 9.68}$ ($\uparrow$) & 26.48$_{\pm 7.61}$ ($\uparrow$) & 57.15$_{\pm 4.32}$ ($\uparrow$)  \\
& All-in-one & \textbf{13\%} & 60.07$_{\pm 4.81}$ ($\uparrow$) & \textcolor{color04}{\textbf{39.56$_{\pm 5.00}$}} (\textcolor{color04}{$\downarrow$}) & 62.58$_{\pm 7.07}$ ($\uparrow$) & 73.87$_{\pm 6.13}$ ($\uparrow$) & 23.96$_{\pm 1.45}$ ($\uparrow$) & 50.72$_{\pm 9.93}$ ($\uparrow$) & 64.38$_{\pm 9.32}$ ($\uparrow$) & 55.97$_{\pm 6.52}$ ($\uparrow$) \\ 
& MTG & \textbf{13\%} & 59.15$_{\pm 5.44}$ ($\uparrow$) & \textcolor{color04}{\textbf{43.46$_{\pm 6.83}$}} (\textcolor{color04}{$\downarrow$}) & 62.78$_{\pm 2.36}$ ($\uparrow$) & 65.60$_{\pm 7.29}$ ($\uparrow$) & 24.71$_{\pm 1.88}$ ($\uparrow$) & 51.74$_{\pm 13.90}$ ($\uparrow$) & 74.81$_{\pm 13.96}$ ($\uparrow$) & 56.39$_{\pm 3.27}$ ($\uparrow$) \\
\midrule
\multirow{3}{*}{GraphMAE}
& Fine-tuning & \textbf{0\%} & 57.70$_{\pm 1.13}$ ($\uparrow$) & 48.10$_{\pm 0.23}$ ($\uparrow$) & 63.57$_{\pm 3.57}$ ($\uparrow$) & 65.20$_{\pm 5.00}$ (-) & 22.21$_{\pm 2.79}$ ($\uparrow$) & 28.47$_{\pm 14.72}$ ($\uparrow$) & 25.80$_{\pm 6.53}$ (-) & 57.54$_{\pm 4.41}$ ($\uparrow$) \\
& All-in-one & \textbf{25\%} & \textcolor{color04}{\textbf{52.62$_{\pm 3.04}$}} (\textcolor{color04}{$\downarrow$}) & \textcolor{color04}{\textbf{40.82$_{\pm 14.63}$}} (\textcolor{color04}{$\downarrow$}) & 66.49$_{\pm 6.26}$ ($\uparrow$) & 69.67$_{\pm 9.13}$ ($\uparrow$) & 23.21$_{\pm 1.72}$ ($\uparrow$) & 56.68$_{\pm 7.38}$ ($\uparrow$) & 58.64$_{\pm 19.59}$ ($\uparrow$) & 58.77$_{\pm 1.05}$ ($\uparrow$)  \\ 
& MTG & \textbf{0\%} & 58.10$_{\pm 5.72}$ ($\uparrow$) & 48.24$_{\pm 9.56}$  ($\uparrow$) & 59.62$_{\pm 6.41}$ ($\uparrow$) & 66.93$_{\pm 7.03}$ ($\uparrow$) & 22.71$_{\pm 2.58}$ ($\uparrow$) & 58.93$_{\pm 12.05}$ ($\uparrow$) & 54.07$_{\pm 18.34}$ ($\uparrow$) & 58.01$_{\pm 5.85}$ ($\uparrow$) \\ 
\midrule
\multirow{3}{*}{\shortstack[c]{EdgePre\\-GPPT}}
& Fine-tuning & \textbf{63\%} & \textcolor{color04}{\textbf{57.20$_{\pm 0.85}$}} (\textcolor{color04}{$\downarrow$}) & \textcolor{color04}{\textbf{47.14$_{\pm 0.55}$}} (\textcolor{color04}{$\downarrow$}) & 58.27$_{\pm 10.66}$ ($\uparrow$) & \textcolor{color04}{\textbf{64.27$_{\pm 4.73}$}} (\textcolor{color04}{$\downarrow$}) & \textcolor{color04}{\textbf{19.79$_{\pm 2.17}$}} (\textcolor{color04}{$\downarrow$}) & 27.83$_{\pm 13.44}$ ($\uparrow$) & 72.10$_{\pm 14.30}$ ($\uparrow$) & \textcolor{color04}{\textbf{52.82$_{\pm 9.38}$}} (\textcolor{color04}{$\downarrow$}) \\
& All-in-one & \textbf{13\%} & 59.12$_{\pm 0.77}$ ($\uparrow$) & \textcolor{color04}{\textbf{42.74$_{\pm 4.65}$}} (\textcolor{color04}{$\downarrow$}) & 65.71$_{\pm 5.49}$ ($\uparrow$) & 75.20$_{\pm 6.33}$ ($\uparrow$) & 20.92$_{\pm 2.04}$ ($\uparrow$) & 60.27$_{\pm 16.97}$ ($\uparrow$) & 59.69$_{\pm 9.90}$ ($\uparrow$) & 56.24$_{\pm 2.46}$ ($\uparrow$) \\ 
& MTG & \textbf{13\%} & 62.25$_{\pm 3.72}$ ($\uparrow$) & \textcolor{color04}{\textbf{45.15$_{\pm 6.00}$}} (\textcolor{color04}{$\downarrow$}) & 62.71$_{\pm 2.30}$ ($\uparrow$) & 67.20$_{\pm 6.36}$ ($\uparrow$) & 26.08$_{\pm 4.31}$ ($\uparrow$) & 60.16$_{\pm 10.63}$ ($\uparrow$) & 62.28$_{\pm 10.13}$ ($\uparrow$) & 56.37$_{\pm 8.33}$ ($\uparrow$) \\ 
\midrule
\multirow{3}{*}{\shortstack[c]{EdgePre\\-Gprompt}}
& Fine-tuning & \textbf{38\%} & 57.35$_{\pm 0.92}$ ($\uparrow$) & \textcolor{color04}{\textbf{47.20$_{\pm 0.53}$}} (\textcolor{color04}{$\downarrow$}) & 61.84$_{\pm 2.59}$ ($\uparrow$) & \textcolor{color04}{\textbf{62.67$_{\pm 2.67}$}} (\textcolor{color04}{$\downarrow$}) & \textcolor{color04}{\textbf{19.75$_{\pm 2.33}$}} (\textcolor{color04}{$\downarrow$}) & 27.13$_{\pm 12.05}$ ($\uparrow$) & 29.44$_{\pm 11.20}$ ($\uparrow$) & 56.16$_{\pm 5.10}$ ($\uparrow$) \\
& All-in-one & \textbf{25\%} & \textcolor{color04}{\textbf{53.78$_{\pm 2.82}$}} (\textcolor{color04}{$\downarrow$}) & \textcolor{color04}{\textbf{42.87$_{\pm 6.19}$}} (\textcolor{color04}{$\downarrow$}) & 61.82$_{\pm 7.53}$ ($\uparrow$) & 68.27$_{\pm 3.88}$ ($\uparrow$) & 21.88$_{\pm 0.56}$ ($\uparrow$) & 49.06$_{\pm 5.53}$ ($\uparrow$) & 32.65$_{\pm 10.08}$ ($\uparrow$) & 57.60$_{\pm 4.37}$ ($\uparrow$) \\ 
& MTG & \textbf{0\%} & 59.45$_{\pm 5.45}$ ($\uparrow$) & 47.72$_{\pm 8.45}$ ($\uparrow$) & 65.66$_{\pm 1.56}$ ($\uparrow$) & 75.80$_{\pm 5.49}$ ($\uparrow$) & 22.29$_{\pm 1.94}$ ($\uparrow$) & 57.75$_{\pm 10.76}$ ($\uparrow$) & 49.94$_{\pm 9.08}$ ($\uparrow$) & 60.68$_{\pm 2.42}$ ($\uparrow$) \\ 
\midrule
\multirow{3}{*}{GraphCL}
& Fine-tuning & \textbf{25\%} & 57.75$_{\pm 1.02}$ ($\uparrow$) & \textcolor{color04}{\textbf{39.62$_{\pm 0.63}$}} (\textcolor{color04}{$\downarrow$}) & 63.44$_{\pm 3.64}$ ($\uparrow$) & \textcolor{color04}{\textbf{65.07$_{\pm 8.38}$}} (\textcolor{color04}{$\downarrow$}) & 23.96$_{\pm 1.99}$ ($\uparrow$) & 53.14$_{\pm 21.32}$ ($\uparrow$) & 29.07$_{\pm 7.00}$ ($\uparrow$) & 60.62$_{\pm 1.56}$ ($\uparrow$) \\
& All-in-one & \textbf{13\%} & 58.75$_{\pm 0.80}$ ($\uparrow$) & 51.66$_{\pm 2.60}$ ($\uparrow$) & 66.00$_{\pm 8.79}$ ($\uparrow$) & 66.00$_{\pm 8.79}$ ($\uparrow$) & \textcolor{color04}{\textbf{19.46$_{\pm 2.85}$}} (\textcolor{color04}{$\downarrow$}) & 52.55$_{\pm 13.51}$ ($\uparrow$) & 42.65$_{\pm 14.43}$ ($\uparrow$) & 59.72$_{\pm 1.52}$ ($\uparrow$) \\ 
& MTG & \textbf{0\%} & 57.65$_{\pm 7.05}$ ($\uparrow$) & 47.81$_{\pm 3.73}$ ($\uparrow$) & 63.70$_{\pm 2.87}$ ($\uparrow$) & 66.20$_{\pm 7.52}$ ($\uparrow$) & 20.96$_{\pm 1.97}$ ($\uparrow$) & 50.36$_{\pm 12.97}$ ($\uparrow$) & 51.05$_{\pm 15.50}$ ($\uparrow$) & 55.46$_{\pm 4.77}$ ($\uparrow$) \\ 
\midrule
\multirow{3}{*}{SimGRACE}
& Fine-tuning & \textbf{38\%} & 57.33$_{\pm 0.96}$ ($\uparrow$) & \textcolor{color04}{\textbf{46.89$_{\pm 0.42}$}} (\textcolor{color04}{$\downarrow$}) & 60.07$_{\pm 3.21}$ ($\uparrow$) & 65.47$_{\pm 5.89}$ ($\uparrow$) & \textcolor{color04}{\textbf{19.71$_{\pm 1.76}$}} (\textcolor{color04}{$\downarrow$}) & 76.19$_{\pm 5.41}$ ($\uparrow$) & 28.48$_{\pm 6.49}$ ($\uparrow$) & \textcolor{color04}{\textbf{53.23$_{\pm 9.71}$}} (\textcolor{color04}{$\downarrow$}) \\
& All-in-one & \textbf{0\%} & 58.83$_{\pm 0.85}$ ($\uparrow$) & 47.60$_{\pm 3.90}$ ($\uparrow$) & 66.20$_{\pm 7.52}$ ($\uparrow$) & 66.67$_{\pm 5.73}$ ($\uparrow$) & 22.50$_{\pm 1.56}$ ($\uparrow$) & 76.14$_{\pm 5.51}$ ($\uparrow$) & 59.01$_{\pm 12.34}$ ($\uparrow$) & 58.26$_{\pm 1.18}$ ($\uparrow$) \\ 
& MTG & \textbf{0\%} & 61.82$_{\pm 3.49}$ ($\uparrow$) & 52.25$_{\pm 0.56}$ ($\uparrow$) & 66.98$_{\pm 2.17}$ ($\uparrow$) & 68.87$_{\pm 5.01}$ ($\uparrow$) & 21.33$_{\pm 1.92}$ ($\uparrow$) & 78.27$_{\pm 2.01}$ ($\uparrow$) & 65.68$_{\pm 16.41}$ ($\uparrow$) & 57.26$_{\pm 2.01}$ ($\uparrow$) \\ 
\bottomrule
\end{tabular}
}
\vspace{-4mm}
\end{table*}

\begin{table}[h]
\centering
\caption{1-shot node classification accuracy (\%) on Wisconsin for various backbone models.
Supervised learning baselines: GCN: 41.60$_{\pm 3.10}$, GAT: 34.51$_{\pm 18.02}$, GraphSAGE: 25.37$_{\pm 5.61}$, GIN: 28.91$_{\pm 11.51}$, GT: 20.91$_{\pm 7.07}$.
}
\label{exp:different backbones on node task}
\resizebox{0.8\textwidth}{!}{
\begin{tabular}{l|cccccc}
\toprule
\multicolumn{7}{c}{Fine-tuning} \\
\midrule
Model & DGI & GraphMAE & EdgePreGPPT & EdgePreGprompt & GraphCL & SimGRACE \\
\midrule
GCN & 37.49$_{\pm 5.13}$ & 36.80$_{\pm 7.17}$ & 35.31$_{\pm 9.31}$ & \textbf{40.69}$_{\pm 4.13}$ & 33.94$_{\pm 7.74}$ & 37.37$_{\pm 3.68}$ \\
GAT & 16.00$_{\pm 6.24}$ & \textbf{37.60}$_{\pm 10.69}$ & 20.00$_{\pm 3.82}$ & 33.37$_{\pm 4.76}$ & 18.86$_{\pm 1.88}$ & 28.00$_{\pm 9.40}$ \\
GraphSAGE & 40.69$_{\pm 9.46}$ & \textbf{43.77}$_{\pm 12.43}$ & 26.06$_{\pm 5.38}$ & 29.94$_{\pm 3.75}$ & 36.57$_{\pm 4.88}$ & 9.37$_{\pm 2.72}$ \\
GIN & \textbf{34.29}$_{\pm 10.40}$ & 26.29$_{\pm 7.81}$ & 25.14$_{\pm 7.70}$ & 33.49$_{\pm 7.69}$ & 22.63$_{\pm 8.62}$ & 16.46$_{\pm 4.53}$ \\
GT & 25.71$_{\pm 3.07}$ & \textbf{39.77}$_{\pm 8.42}$ & 23.20$_{\pm 2.65}$ & 28.23$_{\pm 6.64}$ & 11.77$_{\pm 1.06}$ & 14.51$_{\pm 5.08}$ \\
\midrule
\multicolumn{7}{c}{GPPT} \\
\midrule
Model & DGI & GraphMAE & EdgePreGPPT & EdgePreGprompt & GraphCL & SimGRACE \\
\midrule
GCN & 29.94$_{\pm 10.40}$ & 29.83$_{\pm 9.34}$ & 23.89$_{\pm 5.40}$ & \textbf{30.40}$_{\pm 6.81}$ & 25.03$_{\pm 5.37}$ & 29.83$_{\pm 6.44}$ \\
GAT & 22.17$_{\pm 6.13}$ & 33.94$_{\pm 7.76}$ & 23.43$_{\pm 4.46}$ & \textbf{37.94}$_{\pm 7.11}$ & 26.86$_{\pm 6.12}$ & 29.83$_{\pm 8.04}$\\ 
GraphSAGE & 26.51$_{\pm 8.00}$ & \textbf{30.51}$_{\pm 5.40}$ & 21.49$_{\pm 5.17}$ & 24.23$_{\pm 6.55}$ & 20.91$_{\pm 7.11}$ & 25.37$_{\pm 7.22}$\\
GIN & \textbf{27.20}$_{\pm 5.34}$ & 24.00$_{\pm 3.29}$ & 21.14$_{\pm 1.84}$ & 20.46$_{\pm 2.79}$ & 19.09$_{\pm 14.19}$ & 19.54$_{\pm 15.85}$ \\
GT  & 27.20$_{\pm 10.48}$ & \textbf{29.83}$_{\pm 5.80}$ & 28.00$_{\pm 6.01}$ & 23.31$_{\pm 3.01}$ & 27.66$_{\pm 0.69}$ & 25.03$_{\pm 5.43}$\\
\midrule
\multicolumn{7}{c}{Gprompt} \\
\midrule
Model & DGI & GraphMAE & EdgePreGPPT & EdgePreGprompt & GraphCL & SimGRACE \\
\midrule
GCN & 67.71$_{\pm 9.92}$ & 67.62$_{\pm 18.06}$ & 67.37$_{\pm 12.32}$ & 74.38$_{\pm 13.15}$ & \textbf{77.07}$_{\pm 5.93}$ & 65.38$_{\pm 13.70}$ \\
GAT & 58.25$_{\pm 13.83}$ & 67.77$_{\pm 15.91}$ & \textbf{94.17}$_{\pm 2.26}$ & 84.28$_{\pm 3.63}$ & 80.11$_{\pm 16.65}$ & 57.18$_{\pm 12.60}$ \\
GraphSAGE & 66.48$_{\pm 12.88}$ & 83.49$_{\pm 15.93}$ & \textbf{87.52}$_{\pm 3.79}$ & 82.16$_{\pm 2.64}$ & 65.50$_{\pm 6.48}$ & 72.61$_{\pm 5.97}$ \\
GIN & 45.47$_{\pm 9.62}$ & 37.72$_{\pm 15.00}$ & 58.36$_{\pm 15.10}$ & 59.29$_{\pm 12.72}$ & 59.03$_{\pm 19.98}$ & \textbf{71.80}$_{\pm 11.66}$ \\
GT & 56.03$_{\pm 7.33}$ & 73.50$_{\pm 9.72}$ & 76.97$_{\pm 13.39}$ & \textbf{80.07}$_{\pm 2.84}$ & 59.31$_{\pm 10.17}$ & 69.30$_{\pm 10.57}$\\
\midrule
\multicolumn{7}{c}{All-in-one} \\
\midrule
Model & DGI & GraphMAE & EdgePreGPPT & EdgePreGprompt & GraphCL & SimGRACE \\
\midrule
GCN & 56.02$_{\pm 13.12}$ & 57.54$_{\pm 10.66}$ & \textbf{66.29}$_{\pm 19.11}$ & 59.18$_{\pm 12.30}$ & 39.14$_{\pm 1.17}$ & 55.56$_{\pm 14.70}$ \\
GAT & 69.44$_{\pm 5.19}$ & 36.25$_{\pm 10.63}$ & 91.25$_{\pm 4.33}$ & \textbf{92.65}$_{\pm 3.75}$ & 42.85$_{\pm 9.16}$ & 36.61$_{\pm 14.86} $\\
GraphSAGE & 74.88$_{\pm 19.77}$ & 87.55$_{\pm 3.78}$ & 98.60$_{\pm 0.87}$ & \textbf{99.12}$_{\pm 0.64}$ & 67.28$_{\pm 20.14}$ & 86.18$_{\pm 9.68}$\\
GIN & 54.02$_{\pm 15.90}$ & 35.31$_{\pm 15.69}$ & \textbf{58.77}$_{\pm 13.43}$ & 57.07$_{\pm 12.51}$ & 45.94$_{\pm 9.52}$ & 25.30$_{\pm 14.83}$ \\
GT & 60.22$_{\pm 11.02}$ & 97.42$_{\pm 2.13}$ & 94.61$_{\pm 1.73}$ & \textbf{97.88}$_{\pm 2.24}$ & 51.33$_{\pm 15.56}$ & 83.26$_{\pm 16.29}$ \\
\midrule
\multicolumn{7}{c}{GPF} \\
\midrule
Model & DGI & GraphMAE & EdgePreGPPT & EdgePreGprompt & GraphCL & SimGRACE \\
\midrule
GCN & 62.69$_{\pm 13.96}$ & 76.84$_{\pm 10.50}$ & \textbf{78.35}$_{\pm 4.07}$ & 75.20$_{\pm 13.22}$ & 51.60$_{\pm 20.06}$ & 60.81$_{\pm 26.52}$ \\
GAT & 65.14$_{\pm 11.94}$ & 74.39$_{\pm 16.46}$ & \textbf{94.96}$_{\pm 1.17}$ & 76.60$_{\pm 10.48}$ & 74.97$_{\pm 17.06}$ & 60.57$_{\pm 14.43}$ \\
GraphSAGE & 68.12$_{\pm 13.96}$ & 67.66$_{\pm 13.37}$ & 74.06$_{\pm 14.59}$ & 72.45$_{\pm 10.14}$ & 59.69$_{\pm 21.37}$ & \textbf{78.37}$_{\pm 14.84}$ \\
GIN & 47.11$_{\pm 11.28}$ & 49.47$_{\pm 14.94}$ & \textbf{66.99}$_{\pm 17.76}$ & 54.96$_{\pm 12.35}$ & 28.77$_{\pm 22.76}$ & 23.55$_{\pm 14.37}$ \\
GT & 39.85$_{\pm 4.83}$ & 71.26$_{\pm 14.43}$ & 72.67$_{\pm 13.36}$ & \textbf{81.33}$_{\pm 3.41}$ & 78.19$_{\pm 2.19}$ & 67.90$_{\pm 10.53}$ \\
\midrule
\multicolumn{7}{c}{GPF-plus} \\
\midrule
Model & DGI & GraphMAE & EdgePreGPPT & EdgePreGprompt & GraphCL & SimGRACE \\
\midrule
GCN & 74.68$_{\pm 11.81}$ & \textbf{82.11}$_{\pm 13.95}$ & 72.66$_{\pm 12.05}$ & 78.76$_{\pm 13.63}$ & 52.35$_{\pm 19.69}$ & 73.49$_{\pm 14.17}$ \\
GAT & 93.34$_{\pm 6.13}$ & 83.28$_{\pm 12.20}$ & \textbf{95.24}$_{\pm 1.58}$ & 92.03$_{\pm 4.64}$ & 87.49$_{\pm 7.17}$ & 63.06$_{\pm 18.45}$ \\
GraphSAGE & 71.83$_{\pm 17.50}$  & 85.47$_{\pm 1.45}$ & \textbf{97.30}$_{\pm 1.68}$ & 80.35$_{\pm 14.37}$ & 50.35$_{\pm 8.91}$ & 71.95$_{\pm 9.43}$ \\
GIN & 57.55$_{\pm 16.90}$ & 66.88$_{\pm 14.55}$ & \textbf{82.79}$_{\pm 10.37}$ & 74.40$_{\pm 13.11}$ & 29.94$_{\pm 22.25}$ & 24.30$_{\pm 17.29}$ \\
GT  & 72.41$_{\pm 11.37}$ & \textbf{95.19}$_{\pm 4.04}$ & 87.76$_{\pm 15.73}$ & 80.58$_{\pm 11.56}$ & 57.75$_{\pm 19.89}$ & 63.79$_{\pm 17.44}$  \\
\midrule
\multicolumn{7}{c}{MTG (Ours)} \\
\midrule
Model & DGI & GraphMAE & EdgePreGPPT & EdgePreGprompt & GraphCL & SimGRACE \\
\midrule
GCN & 67.72$_{\pm 10.19}$ & \textbf{83.32}$_{\pm 12.46}$ & 73.80$_{\pm 9.56}$ & 72.75$_{\pm 11.21}$ & 48.41$_{\pm 16.10}$ & 72.98$_{\pm 9.75}$ \\
GAT & 59.87$_{\pm 9.77}$ & 82.16$_{\pm 11.33}$ & \textbf{95.84}$_{\pm 1.15}$ & 81.99$_{\pm 12.78}$ & 77.01$_{\pm 12.03}$ & 61.75$_{\pm 13.22}$ \\
GraphSAGE & 76.90$_{\pm 9.36}$ & \textbf{99.29}$_{\pm 1.41}$ & 87.23$_{\pm 4.91}$ & 72.63$_{\pm 10.16}$ & 62.44$_{\pm 19.82}$ & 63.50$_{\pm 17.62}$ \\
GIN & 59.93$_{\pm 13.84}$ & 69.96$_{\pm 10.90}$ & 78.87$_{\pm 16.32}$ & \textbf{83.57}$_{\pm 10.78}$ & 37.34$_{\pm 15.08}$ & 34.48$_{\pm 15.47}$ \\
GT & 58.35$_{\pm 10.12}$ & \textbf{98.85}$_{\pm 1.02}$ & 84.45$_{\pm 8.03}$ & 72.99$_{\pm 10.45}$ & 93.45$_{\pm 4.15}$ & 89.24$_{\pm 2.66}$ \\
\bottomrule
\end{tabular}}
\end{table}

\begin{table}[h]
\centering
\caption{1-shot graph classification accuracy (\%) on PROTEINS for various backbone models.
Supervised learning baselines: GCN: 56.36$_{\pm 7.97}$, GAT: 48.34$_{\pm 9.96}$, GraphSAGE: 60.54$_{\pm 2.95}$, GIN: 59.66$_{\pm 1.12}$, GT: 61.44$_{\pm 2.48}$.
}
\label{exp:different backbones on graph task}
\resizebox{0.8\textwidth}{!}{
\begin{tabular}{l|cccccc}
\toprule
% \midrule
\multicolumn{7}{c}{Fine-tuning} \\
\midrule
Model & DGI & GraphMAE & EdgePreGPPT & EdgePreGprompt & GraphCL & SimGRACE \\
\midrule
GCN & 60.00$_{\pm 4.48}$ & 62.40$_{\pm 1.494}$ & 58.27$_{\pm 10.66}$ & 61.84$_{\pm 2.59}$ & \textbf{63.44}$_{\pm 3.64}$ & 60.07$_{\pm 3.21}$ \\
GAT & 58.34$_{\pm 6.52}$ & 61.06$_{\pm 4.13}$ & \textbf{63.75}$_{\pm 3.71}$ & 54.09$_{\pm 4.03}$ & 60.04$_{\pm 3.06}$ & 58.65$_{\pm 6.71}$ \\
GraphSAGE & 60.70$_{\pm 4.08}$ & 60.56$_{\pm 5.12}$ & 61.60$_{\pm 1.78}$ & \textbf{63.21}$_{\pm 1.80}$ & 61.80$_{\pm 3.77}$ & 58.56$_{\pm 1.84}$ \\
GIN & 59.71$_{\pm 1.16}$ & 59.75$_{\pm 1.22}$ & 64.83$_{\pm 3.56}$ & \textbf{65.35}$_{\pm 2.36}$ & 58.52$_{\pm 0.77}$ & 58.49$_{\pm 0.80}$ \\
GT & 53.87$_{\pm 4.81}$ & 60.00$_{\pm 3.99}$ & \textbf{64.92}$_{\pm 3.19}$ & 56.58$_{\pm 3.28}$ & 62.88$_{\pm 1.82}$ & 60.00$_{\pm 1.60}$ \\
\midrule
\multicolumn{7}{c}{GPPT} \\
\midrule
Model & DGI & GraphMAE & EdgePreGPPT & EdgePreGprompt & GraphCL & SimGRACE \\
\midrule
GCN & 60.81$_{\pm 1.55}$ & 60.72$_{\pm 1.70}$ & \textbf{60.92}$_{\pm 2.47}$ & 57.03$_{\pm 4.55}$ & 59.24$_{\pm 1.01}$ & 55.42$_{\pm 8.81}$ \\
GAT & 57.71$_{\pm 8.98}$ & 57.80$_{\pm 10.55}$ & \textbf{58.04}$_{\pm 9.92}$ & 54.97$_{\pm 7.45}$ & 52.29$_{\pm 7.83}$ & 55.15$_{\pm 9.84}$ \\
GraphSAGE & 56.56$_{\pm 6.73}$ & 57.73$_{\pm 7.95}$ & \textbf{58.63}$_{\pm 11.78}$ & 56.94$_{\pm 5.67}$ & 58.00$_{\pm 7.80}$ & 54.74$_{\pm 6.59}$ \\
GIN & \textbf{62.27}$_{\pm 2.54}$ & 52.13$_{\pm 11.00}$ & 52.52$_{\pm 6.97}$ & 55.53$_{\pm 8.92}$ & 55.78$_{\pm 7.22}$ & 55.78$_{\pm 7.22}$ \\
GT & 53.08$_{\pm 7.56}$ & 57.35$_{\pm 8.58}$ & \textbf{60.27}$_{\pm 3.92}$ & 55.51$_{\pm 7.68}$ & 56.18$_{\pm 5.79}$ & 55.87$_{\pm 7.69}$ \\
\midrule
\multicolumn{7}{c}{Gprompt} \\
\midrule
Model & DGI & GraphMAE & EdgePreGPPT & EdgePreGprompt & GraphCL & SimGRACE \\
\midrule
GCN & 56.61$_{\pm 7.93}$ & 57.66$_{\pm 12.56}$ & \textbf{59.17}$_{\pm 11.26}$ & 55.55$_{\pm 8.17}$ & 55.51$_{\pm 10.73}$ & 57.53$_{\pm 11.05}$ \\
GAT & 61.08$_{\pm 6.19}$ & 63.03$_{\pm 2.61}$ & \textbf{64.47}$_{\pm 4.30}$ & 61.48$_{\pm 3.34}$ & 59.12$_{\pm 6.84}$ & 58.13$_{\pm 7.27}$ \\
GraphSAGE & 61.35$_{\pm 2.21}$ & 59.48$_{\pm 9.19}$ & 60.92$_{\pm 3.16}$ & \textbf{63.30}$_{\pm 1.43}$ & 55.26$_{\pm 2.61}$ & 63.21$_{\pm 2.66}$ \\
GIN & 54.36$_{\pm 5.18}$ & 46.97$_{\pm 11.45}$ & 55.82$_{\pm 5.35}$ & \textbf{57.84}$_{\pm 10.75}$ & 56.92$_{\pm 11.77}$ & 46.16$_{\pm 10.78}$ \\
GT & 56.65$_{\pm 5.81}$ & 60.99$_{\pm 1.62}$ & \textbf{61.87}$_{\pm 5.60}$ & 55.33$_{\pm 3.69}$ & 54.81$_{\pm 7.62}$ & 58.97$_{\pm 1.16}$ \\
\midrule
\multicolumn{7}{c}{All-in-one} \\
\midrule
Model & DGI & GraphMAE & EdgePreGPPT & EdgePreGprompt & GraphCL & SimGRACE \\
\midrule
GCN & 62.58$_{\pm 7.07}$ & \textbf{66.49}$_{\pm 6.26}$ & 65.71$_{\pm 5.49}$ & 61.82$_{\pm 7.53}$ & 64.36$_{\pm 7.30}$ & 61.17$_{\pm 1.73}$ \\
GAT & 60.04$_{\pm 3.84}$ & 60.00$_{\pm 6.04}$ & 62.11$_{\pm 2.85}$ & \textbf{63.21}$_{\pm 2.22}$ & 58.36$_{\pm 4.93}$ & 59.37$_{\pm 5.59}$ \\
GraphSAGE & 59.53$_{\pm 4.94}$ & 60.70$_{\pm 4.89}$ & 63.12$_{\pm 1.59}$ & 59.98$_{\pm 8.46}$ & \textbf{62.22}$_{\pm 3.81}$ & 62.04$_{\pm 2.07}$ \\
GIN & \textbf{61.55}$_{\pm 3.02}$ & 60.72$_{\pm 4.32}$ & 59.78$_{\pm 3.28}$ & 58.29$_{\pm 12.13}$ & 40.81$_{\pm 1.04}$ & 59.19$_{\pm 1.04}$ \\
GT & 57.39$_{\pm 3.66}$ & 58.92$_{\pm 6.61}$ & 62.61$_{\pm 4.08}$ & 60.20$_{\pm 7.55}$ & \textbf{62.81}$_{\pm 1.63}$ & 50.52$_{\pm 6.17}$ \\
\midrule
\multicolumn{7}{c}{GPF} \\
\midrule
Model & DGI & GraphMAE & EdgePreGPPT & EdgePreGprompt & GraphCL & SimGRACE \\
\midrule
GCN & 59.17$_{\pm 3.63}$ & 58.65$_{\pm 8.49}$ & 62.54$_{\pm 2.55}$ & 61.82$_{\pm 2.61}$ & \textbf{63.91}$_{\pm 3.26}$ & 63.35$_{\pm 3.69}$ \\
GAT & \textbf{63.01}$_{\pm 1.22}$ & 59.62$_{\pm 5.38}$ & 47.53$_{\pm 9.42}$ & 47.71$_{\pm 7.14}$ & 56.65$_{\pm 5.15}$ & 57.91$_{\pm 3.10}$ \\
GraphSAGE & 52.72$_{\pm 6.43}$ & 59.17$_{\pm 2.22}$ & 61.73$_{\pm 2.59}$ & \textbf{64.54}$_{\pm 3.73}$ & 62.27$_{\pm 2.60}$ & 58.00$_{\pm 3.81}$ \\
GIN & \textbf{61.19}$_{\pm 3.39}$ & 54.34$_{\pm 8.61}$ & 60.58$_{\pm 6.80}$ & 62.34$_{\pm 1.19}$ & 59.19$_{\pm 1.04}$ & 59.19$_{\pm 1.04}$ \\
GT & \textbf{65.80}$_{\pm 7.42}$ & 60.16$_{\pm 5.81}$ & 64.54$_{\pm 7.18}$ & 61.21$_{\pm 2.91}$ & 58.74$_{\pm 5.51}$ & 59.57$_{\pm 2.93}$ \\
\midrule
\multicolumn{7}{c}{GPF-plus} \\
\midrule
Model & DGI & GraphMAE & EdgePreGPPT & EdgePreGprompt & GraphCL & SimGRACE \\
\midrule
GCN & 61.26$_{\pm 3.06}$ & 62.49$_{\pm 2.05}$ & \textbf{63.06}$_{\pm 2.55}$ & 61.33$_{\pm 2.81}$ & 59.75$_{\pm 7.95}$ & 62.92$_{\pm 2.78}$ \\
GAT & 56.20$_{\pm 12.87}$ & 57.35$_{\pm 11.28}$ & 56.25$_{\pm 8.61}$ & 53.24$_{\pm 4.79}$ & 57.48$_{\pm 11.74}$ & \textbf{57.48}$_{\pm 9.63}$ \\
GraphSAGE & 56.22$_{\pm 9.08}$ & 57.55$_{\pm 10.56}$ & 56.31$_{\pm 9.26}$ & \textbf{57.71}$_{\pm 9.60}$ & 53.89$_{\pm 9.47}$ & 55.89$_{\pm 4.30}$ \\
GIN & 62.22$_{\pm 2.49}$ & 61.75$_{\pm 3.58}$ & 57.33$_{\pm 9.24}$ & \textbf{64.99}$_{\pm 0.82}$ & 59.19$_{\pm 1.04}$ & 59.19$_{\pm 1.04}$ \\
GT & 53.39$_{\pm 5.23}$ & 57.37$_{\pm 10.95}$ & 57.39$_{\pm 11.88}$ & 52.61$_{\pm 5.30}$ & \textbf{57.62}$_{\pm 12.27}$ & 56.16$_{\pm 5.07}$ \\
\midrule
\multicolumn{7}{c}{MTG (Ours)} \\
\midrule
Model & DGI & GraphMAE & EdgePreGPPT & EdgePreGprompt & GraphCL & SimGRACE \\
\midrule
GCN & 62.78$_{\pm 2.36}$ & 59.62$_{\pm 6.41}$ & 62.71$_{\pm 2.30}$ & \textbf{65.66}$_{\pm 1.56}$ & 63.70$_{\pm 2.87}$ & 66.98$_{\pm 2.17}$ \\
GAT & 61.48$_{\pm 2.14}$ & 60.38$_{\pm 4.81}$ & 53.46$_{\pm 7.80}$ & \textbf{63.53}$_{\pm 1.25}$ & 49.12$_{\pm 6.49}$ & 52.63$_{\pm 4.14}$ \\
GraphSAGE & 61.98$_{\pm 2.03}$ & 58.85$_{\pm 1.66}$ & 65.24$_{\pm 1.83}$ & \textbf{65.88}$_{\pm 0.58}$ & 62.20$_{\pm 3.35}$ & 60.94$_{\pm 9.92}$ \\
GIN & 61.55$_{\pm 1.47}$ & 60.52$_{\pm 3.27}$ & \textbf{65.64}$_{\pm 6.34}$ & 63.10$_{\pm 0.39}$ & 60.19$_{\pm 1.04}$ & 59.69$_{\pm 3.21}$ \\
GT & 61.83$_{\pm 6.86}$ & 57.19$_{\pm 8.75}$ & 63.64$_{\pm 5.55}$ & 59.91$_{\pm 5.85}$ & \textbf{66.08}$_{\pm 2.70}$ & 61.72$_{\pm 0.83}$ \\
\bottomrule
\end{tabular}}
\end{table}

\begin{table*}[h]
\centering
\vspace{-1em}
\caption{Performance comparison of adaptation methods on deep backbone models.
}
\vspace{-0.5em}
\label{tab:model layer}
\begin{adjustbox}{max width=\textwidth}
\begin{tabular}{c|cccc|cccc}
\toprule
\multirow{2}{*}{\textbf{Method}} 
& \multicolumn{4}{c|}{\textbf{Cora (1-shot)}} & \multicolumn{4}{c}{\textbf{BZR (1-shot)}}  \\
& $L = 4$ & $L = 8$  & $L = 12$ & $L = 16$ & $L = 4$ & $L = 8$  & $L = 12$ & $L = 16$ \\
\midrule
Fine-tuning & 38.88$_{\pm 6.74}$ & 36.84$_{\pm 4.10}$ & 33.78$_{\pm 6.05}$ & 30.70$_{\pm 4.18}$ & 70.06$_{\pm 18.37}$ & 56.17$_{\pm 28.60}$ & 67.41$_{\pm 23.52}$ & 71.11$_{\pm 15.81}$ \\
GPPT & 30.68$_{\pm 5.78}$ & 33.82$_{\pm 1.99}$ & 33.89$_{\pm 8.06}$ & 24.32$_{\pm 4.60}$ & 68.95$_{\pm 8.69}$ & 77.90$_{\pm 23.15}$ & 67.59$_{\pm 12.99}$ & 69.20$_{\pm 14.51}$ \\
Gprompt & 38.53$_{\pm 5.57}$ & 43.55$_{\pm 4.87}$ & 41.42$_{\pm 8.95}$ & 33.74$_{\pm 4.10}$ & 67.04$_{\pm 12.70}$ & 71.67$_{\pm 7.01}$ & 76.60$_{\pm 7.37}$ & 72.65$_{\pm 7.40}$ \\
All-in-one & 29.42$_{\pm 4.09}$ & 29.68$_{\pm 6.53}$ & 26.02$_{\pm 4.38}$ & 30.93$_{\pm 4.38}$ & 61.23$_{\pm 7.94}$ & 62.53$_{\pm 10.26}$ & 69.32$_{\pm 9.94}$ & 77.04$_{\pm 2.93}$ \\
GPF & 33.84$_{\pm 9.28}$ & 36.82$_{\pm 13.61}$ & 28.12$_{\pm 2.39}$ & 30.68$_{\pm 3.43}$ & 75.74$_{\pm 7.02}$ & 73.95$_{\pm 10.60}$ & 72.59$_{\pm 8.94}$ & 78.83$_{\pm 0.75}$ \\
GPF-plus & 43.67$_{\pm 9.52}$ & 41.34$_{\pm 6.52}$ & 39.23$_{\pm 7.88}$ & 36.08$_{\pm 5.54}$ & 73.70$_{\pm 3.14}$ & 76.79$_{\pm 1.08}$ & 75.86$_{\pm 6.77}$ & 71.73$_{\pm 9.99}$ \\
\midrule
MTG (Ours) & \textbf{47.80}$_{\pm 7.07}$ & \textbf{45.10}$_{\pm 7.90}$ & \textbf{43.36}$_{\pm 6.54}$ & \textbf{37.00}$_{\pm 5.66}$ & \textbf{77.04}$_{\pm 9.93}$ & \textbf{79.26}$_{\pm 0.45}$ & \textbf{79.26}$_{\pm 0.45}$ & \textbf{79.26}$_{\pm 0.45}$ \\
\midrule
\end{tabular}
\end{adjustbox}
\vspace{-1.5em}
\end{table*}

\begin{table*}[h]
\centering
\vspace{-0.5em}
\caption{Computational efficiency comparison of graph prompt methods and message tuning.}
\label{tab:computational efficiency}
\vspace{-0.5em}
\begin{adjustbox}{max width=\textwidth}
\begin{tabular}{c|cc|cc}
\toprule
\multirow{2}{*}{\textbf{Method}} 
& \multicolumn{2}{c|}{\textbf{ogbn-arxiv (1-shot)}} & \multicolumn{2}{c}{\textbf{COLLAB (1-shot)}}  \\
& Time (s) & Memory (MB)  & Time (s) & Memory (MB) \\
\midrule
GPPT & 0.6032 & 3499 & 0.0204 & 32357 \\
\midrule
Gprompt & 0.0326 & 10987 & 0.0081 & 3517 \\
All-in-one & 0.0559 & 11023 & 0.0147 & 3767 \\
GPF & 0.0067 & 10963 & 0.0045 & 3515 \\
GPF-plus & 0.0074 & 10983 & 0.0057 & 3517 \\
\midrule
MTG (Ours) & \textbf{0.0053} & \textbf{10963} & \textbf{0.0036} & \textbf{3515} \\
\midrule
\end{tabular}
\end{adjustbox}
\vspace{-1em}
\end{table*}

\begin{table*}[h]
\centering
\caption{Performance sensitivity to the number of message prototypes $m$ in MTG.
}
\label{tab:sensitivity}
\vspace{-0.5em}
\begin{adjustbox}{max width=\textwidth}
\begin{tabular}{c|ccccc}
\toprule
\textbf{Dataset}& $m=3$ & $m=5$ & $m=10$ & $m=20$ & $m=30$  \\
\midrule
Cora (1-shot) & 51.28$_{\pm 7.29}$ & 54.06$_{\pm 4.49}$ & 58.54$_{\pm 7.89}$ & 56.73$_{\pm 5.51}$ & 56.21$_{\pm 7.02}$ \\
BZR (1-shot) & 73.15$_{\pm 15.26}$ & 77.84$_{\pm 2.22}$ & 74.81$_{\pm 13.96}$ & 77.53$_{\pm 2.39}$ & 72.78$_{\pm 17.86}$ \\
\midrule
\end{tabular}
\end{adjustbox}
% \vspace{-1.5em}
\end{table*}

%% file: mt4gfm.bib
@article{liu2025gfms,
  author={Liu, Jiawei and Yang, Cheng and Lu, Zhiyuan and Chen, Junze and Li, Yibo and Zhang, Mengmei and Bai, Ting and Fang, Yuan and Sun, Lichao and Yu, Philip S. and Shi, Chuan},
  journal={IEEE Transactions on Pattern Analysis and Machine Intelligence}, 
  title={Graph Foundation Models: Concepts, Opportunities and Challenges}, 
  year={2025},
}

@article{wang2025gfms,
  title={Graph Foundation Models: A Comprehensive Survey}, 
  author={Zehong Wang and Zheyuan Liu and Tianyi Ma and Jiazheng Li and Zheyuan Zhang and Xingbo Fu and Yiyang Li and Zhengqing Yuan and Wei Song and Yijun Ma and Qingkai Zeng and Xiusi Chen and Jianan Zhao and Jundong Li and Meng Jiang and Pietro Lio and Nitesh Chawla and Chuxu Zhang and Yanfang Ye},
  year={2025},
  journal={arXiv preprint arXiv:2505.15116},
}

@article{wang2024gft,
  title={GFT: Graph Foundation Model with Transferable Tree Vocabulary}, 
  author={Zehong Wang and Zheyuan Zhang and Nitesh V Chawla and Chuxu Zhang and Yanfang Ye},
  journal={Advances in Neural Information Processing Systems},
  year={2024}
}

@inproceedings{chen25autogfm,
  title={AutoGFM: Automated Graph Foundation Model with Adaptive Architecture Customization},
  author={Haibo Chen and Xin Wang and Zeyang Zhang and Haoyang Li and Ling Feng and Wenwu Zhu},
  booktitle={International Conference on Machine Learning},
  year={2025},
}

@inproceedings{gilmer2017neural,
  title={Neural message passing for quantum chemistry},
  author={Gilmer, Justin and Schoenholz, Samuel S and Riley, Patrick F and Vinyals, Oriol and Dahl, George E},
  booktitle={International Conference on Machine Learning},
  year={2017},
}

@article{ying2021transformers,
  title={Do transformers really perform badly for graph representation?},
  author={Ying, Chengxuan and Cai, Tianle and Luo, Shengjie and Zheng, Shuxin and Ke, Guolin and He, Di and Shen, Yanming and Liu, Tie-Yan},
  journal={Advances in Neural Information Processing Systems},
  year={2021}
}

@inproceedings{hu2020gptgnn,
  title={GPT-GNN: Generative Pre-Training of Graph Neural Networks}, 
  author={Ziniu Hu and Yuxiao Dong and Kuansan Wang and Kai-Wei Chang and Yizhou Sun},
  booktitle={Proceedings of the 26th ACM SIGKDD International Conference on Knowledge Discovery and Data Mining},
  year={2020},
}

@inproceedings{qiu2020gcc,
  title={GCC: Graph Contrastive Coding for Graph Neural Network Pre-Training},
  author={Qiu, Jiezhong and Chen, Qibin and Dong, Yuxiao and Zhang, Jing and Yang, Hongxia and Ding, Ming and Wang, Kuansan and Tang, Jie},
  booktitle={Proceedings of the 26th ACM SIGKDD International Conference on Knowledge Discovery and Data Mining},
  year={2020},
}

@article{rong2020ssgt,
  title={Self-Supervised Graph Transformer on Large-Scale Molecular Data}, 
  author={Yu Rong and Yatao Bian and Tingyang Xu and Weiyang Xie and Ying Wei and Wenbing Huang and Junzhou Huang},
  journal={Advances in Neural Information Processing Systems},
  year={2020}
}

@article{wang2021afec,
  title={AFEC: Active Forgetting of Negative Transfer in Continual Learning},
  author={Liyuan Wang and Mingtian Zhang and Zhongfan Jia and Qian Li and Chenglong Bao and Kaisheng Ma and Jun Zhu and Yi Zhong},
  journal={Advances in Neural Information Processing Systems},
  year={2021}
}

@inproceedings{zhang2022fewshot,
  title={Few-Shot Learning on Graphs}, 
  author={Chuxu Zhang and Kaize Ding and Jundong Li and Xiangliang Zhang and Yanfang Ye and Nitesh V. Chawla and Huan Liu},
  booktitle={Proceedings of the Thirty-First International Joint Conference on Artificial Intelligence},
  year={2022},
}

@inproceedings{sun2023gpl,
  title={Graph Prompt Learning: A Comprehensive Survey and Beyond}, 
  author={Xiangguo Sun and Jiawen Zhang and Xixi Wu and Hong Cheng and Yun Xiong and Jia Li},
  booktitle={IEEE Transactions on Knowledge and Data Engineering},
  year={2023},
}

@inproceedings{sun2023one,
  title={All in One: Multi-task Prompting for Graph Neural Networks}, 
  author={Xiangguo Sun and Hong Cheng and Jia Li and Bo Liu and Jihong Guan},
  booktitle={Proceedings of the 29th ACM SIGKDD International Conference on Knowledge Discovery and Data Mining},
  year={2023},
}

@article{fang2024gpf,
  title={Universal Prompt Tuning for Graph Neural Networks}, 
  author={Taoran Fang and Yunchao Zhang and Yang Yang and Chunping Wang and Lei Chen},
  journal={Advances in Neural Information Processing Systems},
  year={2023}
}

@article{niu2024gcil,
  title={Replay-and-Forget-Free Graph Class-Incremental Learning: A Task Profiling and Prompting Approach}, 
  author={Chaoxi Niu and Guansong Pang and Ling Chen and Bing Liu},
  journal={Advances in Neural Information Processing Systems},
  year={2024}
}

@inproceedings{yu2025nonhom,
  title={Non-Homophilic Graph Pre-Training and Prompt Learning}, 
  author={Xingtong Yu and Jie Zhang and Yuan Fang and Renhe Jiang},
  booktitle={Proceedings of the 31th ACM SIGKDD International Conference on Knowledge Discovery and Data Mining},
  year={2025},
}

@article{diao2023molcpt,
  title={MolCPT: Molecule Continuous Prompt Tuning to Generalize Molecular Representation Learning}, 
  author={Cameron Diao and Kaixiong Zhou and Zirui Liu and Xiao Huang and Xia Hu},
  year={2023},
  journal={arXiv preprint arXiv:2212.10614},
}

@inproceedings{yang2023rec,
 author = {Yang, Haoran and Zhao, Xiangyu and Li, Yicong and Chen, Hongxu and Xu, Guandong},
 booktitle = {Advances in Neural Information Processing Systems},
 title = {An Empirical Study Towards Prompt-Tuning for Graph Contrastive Pre-Training in Recommendations},
 year = {2023}
}

@inproceedings{wang2025graphprompt,
  title={Does Graph Prompt Work? A Data Operation Perspective with Theoretical Analysis}, 
  author={Qunzhong Wang and Xiangguo Sun and Hong Cheng},
  booktitle={International Conference on Machine Learning},
  year={2025},
}

@inproceedings{li2021prefix,
  title={Prefix-Tuning: Optimizing Continuous Prompts for Generation}, 
  author={Xiang Lisa Li and Percy Liang},
  booktitle={Proceedings of the 59th Annual Meeting of the Association of Computational Linguistics},
  year={2021},
}

@article{zi2024prog,
  title={ProG: A Graph Prompt Learning Benchmark}, 
  author={Chenyi Zi and Haihong Zhao and Xiangguo Sun and Yiqing Lin and Hong Cheng and Jia Li},
  journal={Advances in Neural Information Processing Systems},
  year={2024}
}

@inproceedings{sun2022gppt,
  author = {Sun, Mingchen and Zhou, Kaixiong and He, Xin and Wang, Ying and Wang, Xin},
  title = {GPPT: Graph Pre-training and Prompt Tuning to Generalize Graph Neural Networks},
  booktitle={Proceedings of the 28th ACM SIGKDD International Conference on Knowledge Discovery and Data Mining},
  year={2022},
}

@inproceedings{liu2023graphprompt,
  title={GraphPrompt: Unifying Pre-Training and Downstream Tasks for Graph Neural Networks}, 
  author={Zemin Liu and Xingtong Yu and Yuan Fang and Xinming Zhang},
  booktitle={The ACM Web Conference},
  year={2023},
}

@inproceedings{wang2022faith,
  title={FAITH: Few-Shot Graph Classification with Hierarchical Task Graphs}, 
  author={Song Wang and Yushun Dong and Xiao Huang and Chen Chen and Jundong Li},
  booktitle={International Joint Conference on Artificial Intelligence},
  year={2022},
}

@inproceedings{kriege2012submatching,
  title={Subgraph Matching Kernels for Attributed Graphs}, 
  author={Nils Kriege and Petra Mutzel},
  booktitle={International Conference on Machine Learning},
  year={2012},
}

@inproceedings{pinar2015gdk,
  title={Deep Graph Kernels},
  author={Yanardag, Pinar  and  Vishwanathan, Svn V N },
  booktitle={Proceedings of the 21th ACM SIGKDD International Conference on Knowledge Discovery and Data Mining},
  year={2015},
}

@inproceedings{yang2016revisiting,
  title={Revisiting Semi-Supervised Learning with Graph Embeddings}, 
  author={Zhilin Yang and William W. Cohen and Ruslan Salakhutdinov},
  booktitle={International Conference on Machine Learning},
  year={2016},
}

@article{paul2003dd,
  title = {Distinguishing Enzyme Structures from Non-enzymes Without Alignments},
  author = {Paul D. Dobson and Andrew J. Doig},
  year={2003},
  journal={Journal of Molecular Biology},
}

@article{borgwardt2005protein,
  author = {Borgwardt, Karsten M. and Ong, Cheng Soon and Schönauer, Stefan and Vishwanathan, S. V. N. and Smola, Alex J. and Kriegel, Hans-Peter},
  title = {Protein function prediction via graph kernels},
  journal = {Bioinformatics},
  year={2005},
}

@inproceedings{rossi2015network,
  author = {Rossi, Ryan A. and Ahmed, Nesreen K.},
  title = {The network data repository with interactive graph analytics and visualization},
  booktitle={AAAI Conference on Artificial Intelligence},
  year={2015}
}

@article{luo2024classicgnns,
  title={Classic GNNs are Strong Baselines: Reassessing GNNs for Node Classification}, 
  author={Yuankai Luo and Lei Shi and Xiao-Ming Wu},
  journal={Advances in Neural Information Processing Systems},
  year={2024}
}

@inproceedings{luo2025classicgnns,
  title={Can Classic GNNs Be Strong Baselines for Graph-level Tasks? Simple Architectures Meet Excellence}, 
  author={Yuankai Luo and Lei Shi and Xiao-Ming Wu},
  booktitle={International Conference on Machine Learning},
  year={2025},
}

@inproceedings{kipf2017semisupervised,
  title={Semi-Supervised Classification with Graph Convolutional Networks},
  author={Thomas N. Kipf and Max Welling},
  booktitle={International Conference on Learning Representations},
  year={2017},
}

@article{hamilton2017inductive,
  title={Inductive representation learning on large graphs},
  author={Hamilton, Will and Ying, Zhitao and Leskovec, Jure},
  journal={Advances in Neural Information Processing Systems},
  year={2017}
}

@inproceedings{velivckovic2018graph,
  title={Graph Attention Networks},
  author={Veli{\v{c}}kovi{\'c}, Petar and Cucurull, Guillem and Casanova, Arantxa and Romero, Adriana and Li{\`o}, Pietro and Bengio, Yoshua},
  booktitle={International Conference on Learning Representations},
  year={2018}
}

@inproceedings{xu19gin,
  author={Keyulu Xu and Weihua Hu and Jure Leskovec and Stefanie Jegelka},
  title={How Powerful are Graph Neural Networks?}, 
  booktitle={International Conference on Learning Representations},
  year={2019},
}

@inproceedings{veličković2018dgi,
  title={Deep Graph Infomax}, 
  author={Petar Veličković and William Fedus and William L. Hamilton and Pietro Liò and Yoshua Bengio and R Devon Hjelm},
  booktitle={International Conference on Learning Representations},
  year={2019},
}

@inproceedings{hou2022graphmae,
  author = {Hou, Zhenyu and Liu, Xiao and Cen, Yukuo and Dong, Yuxiao and Yang, Hongxia and Wang, Chunjie and Tang, Jie},
  title = {GraphMAE: Self-Supervised Masked Graph Autoencoders},
  booktitle={Proceedings of the 28th ACM SIGKDD International Conference on Knowledge Discovery and Data Mining},
  year={2022},
}

@article{you2020gcl,
  title={Graph Contrastive Learning with Augmentations}, 
  author={Yuning You and Tianlong Chen and Yongduo Sui and Ting Chen and Zhangyang Wang and Yang Shen},
  journal={Advances in Neural Information Processing Systems},
  year={2020}
}

@inproceedings{fey19,
  author={Matthias Fey and Jan Eric Lenssen},
  title={Fast Graph Representation Learning with PyTorch Geometric}, 
  booktitle={International Conference on Learning Representations},
  year={2019},
}

@inproceedings{paszke19,
  author={Adam Paszke and Sam Gross and Francisco Massa and Adam Lerer and James Bradbury and Gregory Chanan and Trevor Killeen and Zeming Lin and Natalia Gimelshein and Luca Antiga and Alban Desmaison and Andreas Köpf and Edward Yang and Zach DeVito and Martin Raison and Alykhan Tejani and Sasank Chilamkurthy and Benoit Steiner and Lu Fang and Junjie Bai and Soumith Chintala},
  title={PyTorch: An Imperative Style, High-Performance Deep Learning Library},
  booktitle={Advances in Neural Information Processing Systems},
  year={2019},
}

@article{sen08,
  author={Prithviraj Sen and Galileo Namata and Mustafa Bilgic and Lise Getoor and Brian Galligher and Tina EliassiRad},
  title={Collective classification in network data},
  journal={AI magazine},
  year={2008},
  pages={29(3)}
}

@article{hu2020open,
  title={Open graph benchmark: Datasets for machine learning on graphs},
  author={Hu, Weihua and Fey, Matthias and Zitnik, Marinka and Dong, Yuxiao and Ren, Hongyu and Liu, Bowen and Catasta, Michele and Leskovec, Jure},
  journal={Advances in Neural Information Processing Systems},
  volume={33},
  pages={22118--22133},
  year={2020}
}

@article{pei2020geom,
  title={Geom-gcn: Geometric graph convolutional networks},
  author={Pei, Hongbin and Wei, Bingzhe and Chang, Kevin Chen-Chuan and Lei, Yu and Yang, Bo},
  journal={International Conference on Learning Representations},
  year={2020}
}

@inproceedings{xia2022SimGRACE,
  author={Xia, Jun and Wu, Lirong and Chen, Jintao and Hu, Bozhen and Li, Stan Z.},
  title={SimGRACE: A Simple Framework for Graph Contrastive Learning without Data Augmentation},
  booktitle={The ACM Web Conference},
  year={2022},
}

@inproceedings{nair2010,
  author = {Nair, Vinod and Hinton, Geoffrey E.},
  title = {Rectified linear units improve restricted boltzmann machines},
  year = {2010},
  booktitle = {International Conference on Machine Learning},
}

@inproceedings{maas2013,
  title={Rectifier Nonlinearities Improve Neural Network Acoustic Models},
  author={Maas and Awni Y. Hannun and Andrew Y. Ng.},
  year = {2013},
  booktitle = {International Conference on Machine Learning},
}

@inproceedings{martins2016sparsemax,
  title={From Softmax to Sparsemax: A Sparse Model of Attention and Multi-Label Classification}, 
  author={André F. T. Martins and Ramón Fernandez Astudillo},
  year = {2016},
  booktitle = {International Conference on Machine Learning},
}

@article{fu2025toric,
  title={Toric geometry of ReLU neural networks}, 
  author={Yaoying Fu},
  year={2025},
  journal={arXiv preprint arXiv:2509.05894},
}

@inproceedings{liu2023reluneural,
  title={ReLU Neural Networks, Polyhedral Decompositions, and Persistent Homolog}, 
  author={Yajing Liu and Christina M Cole and Chris Peterson and Michael Kirby},
  year = {2023},
  booktitle = {International Conference on Machine Learning},
}

@article{beshkov2025homologytheory,
  title={A Relative Homology Theory of Representation in Neural Networks}, 
  author={Kosio Beshkov},
  year={2025},
  journal={arXiv preprint arXiv:2502.01360},
}

@article{arora2018relu,
  title={Understanding Deep Neural Networks with Rectified Linear Units},
  author={Raman Arora and Amitabh Basu and Poorya Mianjy and Anirbit Mukherjee},
  journal={International Conference on Learning Representations},
  year={2018}
}

@article{zhang2020linear,
  title={Empirical Studies on the Properties of Linear Regions in Deep Neural Networks}, 
  author={Xiao Zhang and Dongrui Wu},
  journal={International Conference on Learning Representations},
  year={2020}
}

@book{paul1950measure,
  title={Measure Theory}, 
  author={Paul R. Halmos},
  year = {1950},
  series = {Graduate Texts in Mathematics},
  publisher = {Springer},
}

@book{greub1975linear,
  title={Linear Algebra}, 
  author={Werner Greub},
  year = {1975},
  series = {Graduate Texts in Mathematics},
  publisher = {Springer},
}

@book{lang1993real,
  title={Real and Functional Analysis},
  author={Serge Lang},
  year={1993},
  series={Graduate Texts in Mathematics},
  publisher={Springer},
}

@book{lee2011manifolds,
  title={Introduction to Topological Manifolds}, 
  author={John M. Lee},
  year = {2011},
  series = {Graduate Texts in Mathematics},
  publisher = {Springer},
}

@book{steven2008,
  title = {Geometric Integration Theory},
  author = {Krantz, Steven and Parks, Harold},
  year = {2008},
  publisher = {Birkhäuser Boston, MA},
}

@article{lin2023exploratory,
  title={Exploratory Adversarial Attacks on Graph Neural Networks for Semi-Supervised Node Classification},
  author={Xixun Lin and Chuan Zhou and Jia Wu and Hong Yang and Haibo Wang and Yanan Cao and Bin Wang},
  journal={Pattern Recognition},
  year={2023},
}

@inproceedings{jin2025losplit,
  title={LoSplit: Loss-Guided Dynamic Split for Training-Time Defense Against Graph Backdoor Attacks},
  author={Jin, Di and Zhang, Yuxiang and Feng, Bingdao and Wang, Xiaobao and He, Dongxiao and Wang, Zhen},
  booktitle={Advances in Neural Information Processing Systems},
  year={2025},
}

@inproceedings{wang2025stealthy,
  title={Stealthy Yet Effective: Distribution-Preserving Backdoor Attacks on Graph Classification},
  author={Wang, Xiaobao and Sun, Ruoxiao and Zhang, Yujun and Feng, Bingdao and He, Dongxiao and Wang, Luzhi and Jin, Di},
  booktitle={Advances in Neural Information Processing Systems},
  year={2025},
}

@inproceedings{lin2024gnsd,
  title={Graph Neural Stochastic Diffusion for Estimating Uncertainty in Node Classification},
  author={Lin, Xixun and Zhang, Wenxiao and Shi, Fengzhao and Zhou, Chuan and Zou, Lixin and Zhao, Xiangyu and Yin, Dawei and Pan, Shirui and Cao, Yanan},
  booktitle={Proceedings of the 41st International Conference on Machine Learning},
  year={2024},
  series={Proceedings of Machine Learning Research},
  publisher={PMLR},
}

@inproceedings{lin2025cgod,
  author={Lin, Xixun and Cao, Yanan and Sun, Nan and Zou, Lixin and Zhou, Chuan and Zhang, Peng and Zhang, Shuai and Zhang, Ge and Wu, Jia},
  title={Conformal Graph-level Out-of-distribution Detection with Adaptive Data Augmentation},
  booktitle={Proceedings of the ACM Web Conference 2025},
  year={2025},
}

@article{lin2026gcnet,
  title={Generative Causality-{D}riven Network for Graph Multi-Task Learning},
  author={Xixun Lin and Qing Yu and Yanan Cao and Lixin Zou and Chuan Zhou and Jia Wu and Chenliang Li and Peng Zhang and Shirui Pan},
  journal={IEEE Transactions on Pattern Analysis and Machine Intelligence},
  year={2026},
}

@article{lin2024sapnp,
  title={Structure-{A}ware Prototypical Neural Process for Few-Shot Graph Classification},
  author={Xixun Lin and Zhao Li and Peng Zhang and Luchen Liu and Chuan Zhou and Bin Wang and Zhihong Tian},
  journal={IEEE Transactions on Neural Networks and Learning Systems},
  year={2024},
}
